\newcommand\restr[2]{{
  \left.\kern-\nulldelimiterspace 
  #1 
  \vphantom{\big|} 
  \right|_{#2} 
  }}
\newtheorem{theorem}{Theorem}[section]
\newtheorem{lemma}[theorem]{Lemma}
\newtheorem{proposition}[theorem]{Proposition} 
\newtheorem{corollary}[theorem]{Corollary} 
\theoremstyle{definition}
\newtheorem{definition}[theorem]{Definition}
\newtheorem{assumption}{Assumption}
\DeclareMathOperator{\Var}{Var}
\newcommand{\norm}[1]{\left \lVert #1 \right \rVert}
\newenvironment{namedproof}[1]{\paragraph{Proof of #1.}\hspace{-1em}}{\hfill\qed}
\DeclareMathOperator*{\argmin}{argmin}
\newcommand{\RR}{\mathbb{R}}
\DeclareMathOperator{\supp}{supp}
\DeclareMathOperator{\nll}{nll}
\newcommand{\bx}{\breve{x}}
\newcommand{\bz}{\breve{z}}
\newcommand{\smax}{s_{\max{}}}
\DeclareMathOperator{\sign}{sign}
\DeclareMathOperator{\poly}{poly}
\title{Truncated Linear Regression in High Dimensions}
\author{Constantinos Daskalakis \\ MIT \\ \texttt{costis@mit.edu} \and Dhruv Rohatgi \\ MIT \\ \texttt{drohatgi@mit.edu} \and Manolis Zampetakis \\ MIT \\ \texttt{mzampet@mit.edu}}
\begin{document}

\let\endtitlepage\relax
\begin{titlepage}
\maketitle
\end{titlepage}

\begin{abstract}
    As in standard linear regression, in {\em truncated} linear regression, we are given
  access to observations $(A_i, y_i)_i$ whose dependent variable equals 
  $y_i= A_i^{\rm T} \cdot x^* + \eta_i$, where $x^*$ is some fixed unknown vector of
  interest and $\eta_i$ is independent noise; except we are only given an observation if its dependent
  variable $y_i$ lies in some ``truncation set'' $S \subset \mathbb{R}$. The goal is to
  recover~$x^*$ under some favorable conditions on the $A_i$'s and the noise
  distribution. We prove that there exists a computationally and statistically efficient
  method for recovering $k$-sparse $n$-dimensional vectors $x^*$ from $m$ truncated samples, which attains
  an optimal $\ell_2$ reconstruction error of $O(\sqrt{(k \log n)/m})$. As a corollary, our guarantees imply a
  computationally efficient and information-theoretically optimal algorithm for compressed 
  sensing with truncation, which may arise from measurement saturation effects.
  Our result follows from a statistical and computational analysis of the Stochastic
  Gradient Descent (SGD) algorithm for solving a natural adaptation of the LASSO optimization problem that accommodates truncation. This generalizes the works of both: (1)
  Daskalakis et al.~\cite{Daskalakis2018}, where no regularization is needed due to
  the low-dimensionality of the data, and (2) Wainright~\cite{Wainwright2009}, where
  the objective function is simple due to the absence of truncation. In order to deal
  with both truncation and high-dimensionality at the same time, we develop new
  techniques that not only generalize the existing ones but we believe are of
  independent interest.
  
\end{abstract}


\section{Introduction}

  In the vanilla linear regression setting, we are given $m \ge n$ observations of the
 form $(A_i, y_i)$, where $A_i \in \mathbb{R}^n$, $y_i = A_i^{\rm T} x^* + \eta_i$,  
 $x^*$ is some unknown coefficient vector that we wish to recover, and $\eta_i$ is independent and identically distributed across different observations $i$ random noise. Under favorable conditions about the $A_i$'s and the distribution of the noise, it is well-known
 that $x^*$ can be recovered to within $\ell_2$-reconstruction error  $O(\sqrt{n/m})$. 
 

The classical model and its associated guarantees might, however, be inadequate to address many situations which frequently  arise in both theory and practice. We focus on two common and widely studied deviations from the
standard model. First, it is often the case that $m \ll n$, i.e.~the number of observations is
much smaller than the dimension of the unknown vector $x^*$. In this ``under-determined''
regime, it is fairly clear that it is impossible to expect a non-trivial reconstruction of the underlying $x^*$, since there are infinitely many 
$x \in \mathbb{R}^n$ such that $A_i^{\rm T}x=A_i^{\rm T}x^*$ for all $i=1,\ldots,m$. To
sidestep this impossibility, we must exploit additional structural properties that we might know  $x^*$ satisfies.
One such property might be {\em sparsity}, i.e.~that $x^*$ has $k \ll n$ non-zero coordinates. Linear regression under sparsity assumptions has been widely studied, motivated by  applications such as model selection and
compressed sensing; see e.g.~the celebrated works of~\cite{tibshirani1996regression,Candes2006,donoho2006most,Wainwright2009} on this topic. It is known, in particular, that a $k$-sparse $x^*$ can be recovered to within $\ell_2$ error $O(\sqrt{k \log n/m})$, when the $A_i$'s are drawn from the standard multivariate
Normal, or satisfy other favorable conditions~\cite{Wainwright2009}. The recovery algorithm solves a least squares optimization problem with $\ell_1$ regularization, i.e.~what is called LASSO optimization in Statistics, in order to reward sparsity.

Another common deviation from the standard model is the presence of {\em truncation}. 
Truncation occurs when the sample $(A_i, y_i)$ is not observed whenever $y_i$ falls outside of a 
subset $S \subseteq \mathbb{R}$. Truncation arises quite often in practice as a result of saturation of measurement devices, bad data collection practices, incorrect experimental design, and legal or privacy constraints which might preclude the use of some of the data. Truncation is known to affect linear regression in counter-intuitive ways, as illustrated in Fig.~\ref{fig:illustration},
\begin{figure}[h!]
    \begin{center}
    \includegraphics[width=0.6\textwidth]{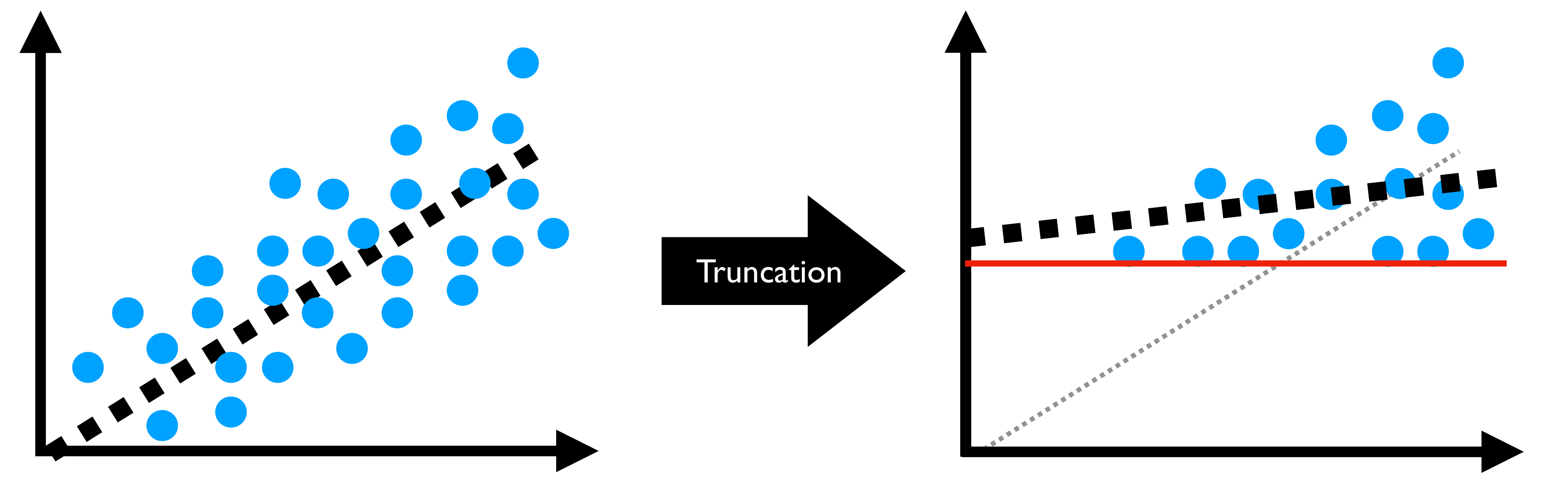}
    \caption{Truncation in one-dimensional linear regression, along with the linear fit obtained via least squares regression before and after truncation.}
    \label{fig:illustration}
    \end{center}
\end{figure}
where the linear fits obtained via least squares regression before and after truncation of the data based on the value of the response variable are
also shown. More broadly, it is well-understood that naive statistical inference using truncated
data commonly leads to  bias. Accordingly, a long line of research in Statistics and
Econometrics has strived to develop regression methods that are robust to
truncation~\cite{tobin1958estimation,amemiya1973regression,hausman1977social,maddala1986limited,keane199320,breen1996regression,hajivassiliou1998method}. 
This line of work falls into the broader field of Truncated
Statistics~\cite{Schneider86,Cohen91,BalakrishnanCramer}, which finds its roots in the early
works of~\cite{bernoulli1760essai}, \cite{Galton1897}, \cite{Pearson1902,PearsonLee1908},
and~\cite{fisher31}. Despite this voluminous work, computationally and statistically efficient methods for
truncated linear regression have only recently been obtained in~\cite{Daskalakis2018}, where it was shown that, under  favorable assumptions about the $A_i$'s,
the truncation set $S$, and assuming the $\eta_i$'s are drawn from a Gaussian, the negative log
likelihood of the truncated sample can be optimized efficiently, and approximately recovers the
true parameter vector with an $\ell_2$ reconstruction error ${O}\left(\sqrt{n \log m \over m}\right)$. 

\paragraph{Our contribution.} In this work, we solve the general problem addressing
both of the aforedescribed challenges together. Namely, we provide efficient 
algorithms for the high-dimensional ($m\ll n$) truncated linear regression problem. This 
problem is very common, including in compressed sensing applications with measurement saturation,
as studied e.g.~in ~\cite{davenport2009simple,laska2010democracy}.

Under standard  conditions on the design matrix and the noise distribution (namely that the $A_i$'s and $\eta_i$'s are sampled from independent Gaussian distributions before truncation), and under mild assumptions on the truncation set $S$ (roughly, that it permits a constant fraction of the samples $y_i = A_i^{\rm T}x^*+\eta_i$ to survive truncation), we show that the
SGD algorithm on the {\em truncated LASSO} optimization program, our proposed adaptation of the standard LASSO optimization to accommodate truncation, is a computationally and
statistically efficient method for recovering $x^*$, attaining an optimal $\ell_2$ reconstruction error of $O(\sqrt{(k\log n)/m})$, where $k$ is the sparsity of $x^*$.

We formally state the model and assumptions in Section~\ref{section:model}, and our result in Section~\ref{sec:results}.

\subsection{Overview of proofs and techniques} \label{sec:intro:techniques}

The problem that we solve in this paper encompasses the two difficulties of the problems considered in: 
(1) Wainwright \cite{Wainwright2009}, which tackles the problem of high-dimensional sparse linear 
regression with Gaussian noise, and (2) Daskalakis et al. \cite{Daskalakis2018}, which tackles 
the problem of truncated linear regression. The tools developed in those papers do not suffice to solve
our problem, since each difficulty interferes with the other. Hence, we introduce new ideas and develop 
new interesting tools that allow us to bridge the gap between \cite{Wainwright2009} 
and \cite{Daskalakis2018}. We begin our overview in this section with a brief description of the
approaches of \cite{Wainwright2009, Daskalakis2018} and subsequently outline the additional
challenges that arise in our setting, and how we address them.

Wainwright \cite{Wainwright2009} uses as an estimator the solution of the regularized least 
squares program, also called \textit{LASSO}, to handle the high-dimensionality of the data. Wainwright
then uses a primal-dual witness method to bound the number of samples that are needed in
order for the solution of the LASSO program to be close to the true coefficient vector $x^*$.
The computational task is not discussed in detail in Wainwright \cite{Wainwright2009}, since the
objective function of the LASSO program is very simple and standard convex optimization tools
can be used.

Daskalakis et al. \cite{Daskalakis2018} use as their estimator the solution to
the log-likelihood maximization problem. In contrast to \cite{Wainwright2009}, their convex 
optimization problem takes a very complicated form due to the presence of truncation, which introduces an intractable log-partition function term in the log-likelihood. The 
main idea of Daskalakis et al. \cite{Daskalakis2018} to overcome this difficulty is identifying a 
convex set $D$ such that: (1) it contains the true coefficient vector $x^*$, (2) their
objective function is strongly convex inside $D$, (3) inside $D$ there exists an efficient
rejection sampling algorithm to compute unbiased estimates of the gradients of their objective 
function, (4) the norm of the stochastic gradients inside $D$ is bounded, and (5) projecting onto $D$ is 
efficient. These five properties are essentially what they need to prove that the SGD with projection set
$D$ converges quickly to a good estimate of $x^*$.

Our reconstruction algorithm is inspired by both \cite{Wainwright2009} and \cite{Daskalakis2018}. We
formulate our optimization program as the $\ell_1$-regularized version of the negative
log-likelihood function in the truncated setting, which we call \textit{truncated LASSO}. In particular, our objective contains an intractable log-partition function term. Our
proof then consists of two parts. First, we show statistical recovery, i.e. we upper bound
the number of samples that are needed for the solution of the truncated LASSO program to be
close to the true coefficient vector $x^*$. Second, we show that this optimization problem can be
solved efficiently. The cornerstones of our proof are the two seminal approaches that we mentioned:
the Primal-Dual Witness method for statistical recovery in high dimensions, and the Projected
SGD method for efficient maximum likelihood estimation in the presence of truncation. Unfortunately, these two
techniques are not a priori compatible to stitch together.

Roughly speaking, the technique of \cite{Daskalakis2018} relies heavily on the very carefully
chosen \emph{projection set} $D$ in which the SGD is restricted, as we explained above. This
projection set cannot be used in high dimensions because it effectively requires knowing the
low-dimension subspace in which the true solution lies. The projection set was the key to the 
aforementioned nice properties: strong convexity, efficient gradient estimation, and bounded gradient. In
its absence, we need to deal with each of these issues individually. The primal-dual witness method of 
\cite{Wainwright2009} cannot also be applied directly in our setting. In our case the gradient of the 
truncated LASSO does not have a nice closed form and hence finding the correct way to construct the 
primal-dual witness requires a more delicate argument. Our proof manages to overcome all these 
issues. In a nutshell the architecture of our full proof is the following.

\begin{enumerate}
    \item \textbf{Optimality on the low dimensional subspace.} The first thing that we need to prove
    is that the optimum of the truncated LASSO program when restricted to the low dimensional
    subspace defined by the non-zero coordinates of $x^*$ is close to the true solution. This step of
    the proof was unnecessary in \cite{Daskalakis2018} due to the lack of regularization in their 
    objective, and was trivial in \cite{Wainwright2009} due to the simple loss function, 
    i.e. the regularized least square.
    \item \textbf{Optimality on the high dimensional space.} We prove that the optimum of the 
    truncated LASSO program in the low dimensional subspace is also optimal for the whole space. This 
    step is done using the primal-dual witness method as in \cite{Wainwright2009}. However, in
    our case the expression of the gradient is much more complicated due to the very convoluted
    objective function. Hence, we find a more general way to prove this step that does not rely
    on the exact expression of the gradient.
\end{enumerate}

These two steps of the proof suffice to upper bound the number of samples that we need to recover the 
coefficient vector $x^*$ via the truncated LASSO program. Next, we provide a computationally
efficient method to solve the truncated LASSO program.

\begin{enumerate}
    \setcounter{enumi}{2}
    \item \textbf{Initialization of SGD.} The first step of our algorithm to solve truncated LASSO is 
    finding a good initial point for the SGD. This was unnecessary in \cite{Wainwright2009} due to the
    simple objective and in \cite{Daskalakis2018} due to the existence of the projection set $D$ (where 
    efficient projection onto $D$ immediately gave an initial point). We propose the simple
    answer of \emph{bootstrapping}: start with the solution of the $\ell_1$-regularized ordinary least 
    squares program. This is a biased estimate, but we show it's good enough for initialization.
    \item \textbf{Projection of SGD.} Next, we need to choose a projection set to make sure that Projected-SGD (PSGD)
    converges. The projection set chosen in \cite{Daskalakis2018} is not helpful in our case unless we
    a priori know the set of non-zero coordinates of $x^*$. Hence, we define a different, simpler 
    set which admits efficient projection algorithms. As a necessary side-effect, in contrast to 
    \cite{Daskalakis2018}, our set cannot guarantee many of the important properties that we need to 
    prove fast convergence of SGD.
    \item \textbf{Lack of strong convexity and gradient estimation}. Our different projection set cannot 
    guarantee the strong convexity and efficient gradient estimation enjoyed in \cite{Daskalakis2018}. 
    There are two problems here:
    
    First, we know that PSGD converges to a point with small loss, but why must the point be near the
    optimum? Since strong convexity fails in high dimensions, it is not clear. We provide a
    workaround to resolve this issue that can be applied to other regularized programs with stochastic
    access to the gradient function.
    
    Second, computing unbiased estimates of the gradient is now difficult. The prior work employed 
    rejection sampling, but in our setting this may take exponential time. For this reason we provide a 
    more explicit method for estimating the gradient much faster, whenever the truncation set is 
    reasonably well-behaved.
\end{enumerate}

  An important tool that we leverage repeatedly in our analysis and we have not mentioned above is a
strong isometry property for our measurement matrix, which has truncated Gaussian rows. Similar properties
have been explored in the compressed sensing literature for matrices with i.i.d. Gaussian and sub-Gaussian
entries \cite{Voroninski2016}.
  
We refer to Section~\ref{section:techniques} for a more detailed overview of the proofs of our main
results.

\section{High-dimensional truncated linear regression model} \label{section:model}

\paragraph{Notation.} Let $Z \sim N(0, 1)$ refer to a standard normal random variable. For 
$t \in \mathbb{R}$ and measurable $S \subseteq \RR$, let $Z_t \sim N(t,1;S)$ refer to the truncated
normal $(Z + t) | Z + t \in S$. Let $\mu_t = \mathbb{E}[Z_t]$. Let $\gamma_S(t) = \Pr[Z+t \in S]$.
Additionally, for $a,x \in \RR^n$ let $Z_{a,x}$ refer to $Z_{a^T x}$ (or $Z_{ax}$ if $a$ is a row
vector), and let $\gamma_S(a,x)$ refer to $\gamma_S(a^T x)$. For a matrix 
$A \in \mathbb{R}^{m \times n}$, let $Z_{A,x} \in \mathbb{R}^m$ be the random vector with
$(Z_{A,x})_j = Z_{A_j,x}$. For sets
$I \subseteq [n]$ and $J \subseteq [m]$, let $A_{I,J}$ refer to the submatrix 
$[A_{i,j}]_{i \in I, j \in J}$. For $i \in [n]$ we treat the row $A_i$ as a row vector. In a slight
abuse of notation, we will often write $A_U$ (or sometimes, $A_V$); this will \emph{always} mean
$A_{[m],U}$. By $A_U^T$ we mean $(A_{[m],U})^T$). For $x \in \mathbb{R}^n$, define $\supp(x)$ to be
the set of indices $i \in [n]$ such that $x_i \neq 0$.

\subsection{Model}\label{subsection:model}

Let $x^* \in \mathbb{R}^n$ be the unknown parameter vector which we are trying to recover. We assume that it is $k$-sparse; that is, $\supp(x^*)$ has cardinality at most $k$. Let $S \subseteq \mathbb{R}$ be a measurable subset of the real line. The main focus of this paper is the setting of \emph{Gaussian noise}: we assume that we are given $m$ truncated samples $(A_i, y_i)$ generated by the following process:

\begin{enumerate}
    \item Pick $A_i \in \mathbb{R}^n$ according to the standard normal distribution $N(0,1)^n$.
    \item Sample $\eta_i \sim N(0,1)$ and compute $y_i$ as
    \begin{equation} y_i = A_ix^* + \eta_i.
\label{process}
    \end{equation}
    \item If $y_i \in S$, then return sample $(A_i, y_i)$. Otherwise restart the process from step $1$.
\end{enumerate}

We also briefly discuss the setting of \emph{arbitrary noise}, in which $\eta_i$ may be arbitrary and we are interested in approximations to $x^*$ which have guarantees bounded in terms of $\norm{\eta}_2$.

Together, $m$ samples define a pair $(A,y)$ where $A \in \mathbb{R}^{m \times n}$ and $y \in \mathbb{R}^m$. We make the following assumptions about set $S$.

\begin{assumption}[Constant Survival Probability] \label{assumption:csp}
    Taking expectation over vectors $a \sim N(0,1)^n$, we have 
  $\mathbb{E} \gamma_S(a,x^*) \geq \alpha$ for a constant $\alpha > 0$.
\end{assumption}

\begin{assumption}[Efficient Sampling] \label{assumption:sampling}
    There is an $T(\gamma_{S}(t))$-time algorithm which takes input 
  $t \in \mathbb{R}$ and produces an unbiased sample $z \sim N(t, 1; S)$. 
\end{assumption}

  We do not require that $T(\cdot)$ is a constant, but it will affect the
efficiency of our algorithm. To be precise, our algorithm will make
$\text{poly}(n)$ queries to the sampling algorithm.
As we explain in Lemma \ref{lem:samplingUnionOfIntervalsLemma} in Section~\ref{sec:unionIntervalsSampling}, if the set $S$ is a union of $r$ intervals
$\cup_{i = 1}^r [a_i, b_i]$, then the Assumption \ref{assumption:sampling}
is satisfied with $T(\gamma_S(t)) = \poly(\log(1/\gamma_S(t)), r)$. 
We express the theorems below with the assumption that $S$ is a union of $r$ 
intervals in which case the algorithms have polynomial running time, but all the statements below
can be replaced with the more general Assumption \ref{assumption:sampling} and the running time 
changes from $\poly(n, r)$ to $\poly(n, T(e^{m/\alpha}))$.

\section{Statistically and computationally efficient recovery} \label{sec:results}

  In this section we formally state our main results for recovery of a sparse high-dimensional 
coefficient vector from truncated linear regression samples. In Section \ref{sec:results:noisy}, we
present our result under the standard assumption that the error distribution is Gaussian, whereas in
Section \ref{sec:results:adversarial} we present our results for the case of adversarial error.

\subsection{Gaussian noise} \label{sec:results:noisy}

  In the setting of Gaussian noise (before truncation), we prove the
following theorem.

\begin{theorem}\label{thm:main-result}
    Suppose that Assumption~\ref{assumption:csp} holds, and that we have $m$ samples 
  $(A_i, y_i)$ generated from Process~\eqref{process}, with $n \geq m \geq Mk\log n$ for a
  sufficiently large constant $M$.
  Then, there is an algorithm which outputs $\bar{x}$ satisfying 
  $\norm{\bar{x} - x^*}_2 \leq O(\sqrt{(k\log n)/m})$ with probability $1 - O(1/\log n)$. 
  Furthermore, if the survival set $S$ is a union of $r$ intervals
  the running time of our algorithm is $\poly(n, r)$.
\end{theorem}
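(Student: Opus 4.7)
The plan is to analyze the truncated LASSO program
$$\hat x \in \argmin_{x \in \RR^n}\;\; L(x) + \lambda \norm{x}_1,\qquad L(x) \;=\; \tfrac{1}{2m}\norm{y - Ax}_2^2 \;-\; \tfrac{1}{m}\sum_{i=1}^m \log \gamma_S(A_i, x),$$
the $\ell_1$-regularized empirical negative log-likelihood of the Gaussian-with-truncation model (up to an additive constant), with $\lambda = \Theta(\sqrt{(\log n)/m})$. The argument splits into a \emph{statistical} part, showing that any minimizer $\hat x$ satisfies $\norm{\hat x - x^*}_2 = O(\sqrt{(k\log n)/m})$, and an \emph{algorithmic} part, showing that such an $\hat x$ can be computed in $\poly(n, r)$ time.

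For the statistical part I would follow a primal-dual witness strategy adapted from Wainwright. Let $U = \supp(x^*)$. Restrict the program to the $k$-dimensional subspace $\RR^U$ and let $\tilde x$ be its minimizer. By Assumption~\ref{assumption:csp}, a constant fraction of the probability mass survives truncation near $x^*$, so the Hessian of the log-partition term is negative definite and dominated by $A_U^T A_U / m$, which concentrates around the identity on $\RR^U$ (a restricted isometry property for matrices with truncated Gaussian rows). Combined with the $\ell_\infty$ score bound $\norm{\nabla L(x^*)}_\infty = O(\sqrt{(\log n)/m})$ — which follows because $\nabla L(x^*) = \tfrac{1}{m}\sum_i A_i^T(y_i - \mu_{A_i x^*})$ is a sum of centered sub-Gaussian variables — this yields $\norm{\tilde x - x^*}_2 = O(\sqrt{(k\log n)/m})$. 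Next I would build a dual certificate $z \in \partial \norm{\tilde x}_1$ with $\norm{z_{U^c}}_\infty < 1$, showing that the zero-extension of $\tilde x$ to $\RR^n$ is the \emph{unique} minimizer of the full program and hence equals $\hat x$. The obstacle here is that, unlike in vanilla LASSO, $\nabla L$ has no closed form; I would handle this by expanding $\nabla L(\tilde x) - \nabla L(x^*)$ via a mean-value formula, controlling the Hessian of the log-partition uniformly on a small $\ell_2$ ball around $x^*$ and using the sparse isometry of $A$ rather than any explicit gradient formula.

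For the algorithmic part I would run projected SGD on $L + \lambda \norm{\cdot}_1$ over a simple convex set $K$ (e.g.\ an $\ell_1$-ball around a warm start) that admits fast projection and contains $\hat x$. The warm start comes from bootstrapping: solve the ordinary $\ell_1$-regularized least squares program on the truncated data; this estimator is biased but can be shown to lie within a constant-factor ball of $x^*$, sufficient for initialization. Unbiased stochastic gradients of $L$ at a query $x$ are formed as $A_i^T(y_i - z_i)$ where $z_i \sim N(A_i x, 1; S)$ is drawn using Assumption~\ref{assumption:sampling}; when $S$ is a union of $r$ intervals, Lemma~\ref{lem:samplingUnionOfIntervalsLemma} bounds each sample in $\poly(r, \log(1/\gamma_S))$ time, which is the source of the $\poly(n, r)$ runtime. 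The main difficulty is that strong convexity of $L$ is unavailable on all of $K$, so SGD only guarantees small objective suboptimality rather than small parameter error. To bridge this gap, I would show that any $x \in K$ with near-optimal objective must be close to $\hat x$: use convexity to push the argument to the midpoint $(x + \hat x)/2$, decompose $x - \hat x$ into its $U$ and $U^c$ components, and combine the restricted strong convexity of $L$ on sparse directions (from step one) with the $\ell_1$ regularizer's penalty on the $U^c$ component to force $\norm{x - \hat x}_2$ small. Chaining this with the statistical bound on $\norm{\hat x - x^*}_2$ yields the theorem. I expect this final glue step — converting SGD's objective guarantee into a parameter guarantee in the absence of global strong convexity — to be the most delicate part of the argument.
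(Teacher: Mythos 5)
Your plan follows essentially the same architecture as the paper: the truncated-LASSO objective with $\lambda=\Theta(\sqrt{(\log n)/m})$, a primal--dual witness argument on the support-restricted program for the statistical bound, a compressed-sensing warm start, projected SGD with stochastic gradients obtained by sampling the truncated Gaussian, and a restricted-strong-convexity-plus-regularizer argument to turn objective suboptimality into parameter error (the paper's Lemma~\ref{lemma:sc} via Theorem~\ref{thm:slackness}). Two places where your sketch, as stated, would not go through deserve attention. First, your curvature claim has the difficulty backwards: the Hessian of the negative log-likelihood is $\frac{1}{m}\sum_j A_j^T A_j \Var(Z_{A_j,x})$, so truncation can only be feared to \emph{weaken} the least-squares curvature, and the real work is to \emph{lower}-bound the per-row variances (for a union of intervals $\Var(Z_t)$ need not even be at most $1$, so ``the log-partition Hessian is negative definite and dominated by $A_U^TA_U/m$'' is neither correct nor the bound you need). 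The paper does this by combining Lemma~\ref{lemma:variance-lb} with Lemmas~\ref{lemma:survival-decay} and~\ref{lemma:total-survival} to get $\Var(Z_{A_j,x})=\Omega(1)$ on a constant fraction of rows. Relatedly, the score at $x^*$ is not a sum of uniformly sub-Gaussian terms, since the conditional variance of $y_i-\mu_{A_ix^*}$ scales like $1/\gamma_S(A_ix^*)$; the paper instead bounds second moments in expectation and applies Markov (Lemma~\ref{lemma:gradientbound}), which is precisely why the guarantee holds with probability $1-O(1/\log n)$ rather than with a polynomially small failure probability.

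Second, your projection set (an $\ell_1$-ball around the warm start) is a genuine deviation from the paper's choice $\mathcal{E}_r=\{x:\norm{Ax-y}_2\le r\sqrt{m}\}$. It is plausibly workable, but you must then verify separately that the minimizer lies in your set, that the stochastic gradients have polynomially bounded second moment there, and---crucial for the claimed $\poly(n,r)$ runtime---that $\gamma_S(A_jx)$ is not too small at \emph{every} query point, so that each call to the sampler of Lemma~\ref{lem:samplingUnionOfIntervalsLemma} is cheap; the residual ball $\mathcal{E}_r$ delivers all three almost immediately via Lemma~\ref{lemma:survival-decay}, which is why the paper uses it. Finally, a sign slip: the negative log-likelihood contains $+\frac{1}{m}\sum_i\log\gamma_S(A_i,x)$ up to an additive constant, not $-\frac{1}{m}\sum_i\log\gamma_S(A_i,x)$; taken literally your objective is not the MLE and loses convexity, though your gradient formulas elsewhere are consistent with the correct sign, so this is cosmetic rather than structural.
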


From now on, we will use the term ``with high probability'' when the rate of decay is not of importance. This phrase means ``with probability $1 - o(1)$ as $n \to \infty$''.

  Observe that even without the added difficulty of truncation 
(e.g. if $S = \mathbb{R}$), sparse linear regression requires 
$\Omega(k \log n)$ samples by known information-theoretic arguments 
\cite{Wainwright2009}. Thus, our sample complexity is 
information-theoretically optimal.

  In one sentence, the algorithm optimizes the $\ell_1$-regularized sample
negative log-likelihood via projected SGD. The negative log-likelihood of 
$x \in \RR^n$ for a single sample $(a,y)$ is 
\[\nll(x; a,y) = \frac{1}{2}(a^T x - y)^2 + \log \int_S e^{-(a^T x - z)^2/2} \, dz.\]

  Given multiple samples $(A,y)$, we can then write 
$\nll(x;A,y) = \frac{1}{m} \sum_{j=1}^m \nll(x;A_j,y_j)$. We also define the regularized negative
log-likelihood $f: \mathbb{R}^n \to \mathbb{R}$ by $f(x) = \nll(x;A,y) + \lambda \norm{x}_1$. We
claim that optimizing the following program approximately recovers the true parameter vector $x^*$
with high probability, for sufficiently many samples and appropriate regularization $\lambda$:
\begin{equation} \label{eq:program}
  \min_{x \in \mathbb{R}^n} \nll(x;A,y) + \lambda \norm{x}_1.  
\end{equation}
The first step is to show that any solution to Program \eqref{eq:program}
will be near the true solution $x^*$. To this end, we prove the following 
theorem, which already shows that $O(k\log n)$ samples are sufficient to 
solve the problem of \emph{statistical} recovery of $x^*$:

\begin{proposition} \label{prop:statistical}
    Suppose that Assumption~\ref{assumption:csp} holds. There are
  constants\footnote{In the entirety of this paper, constants may depend on
  $\alpha$.} $\kappa$, $d$, and $\sigma$ with the following property. 
  Suppose that $m > \kappa \cdot k \log n$, and let $(A,y)$ be $m$ samples drawn 
  from Process~\ref{process}. Let $\hat{x}$ be any optimal solution to 
  Program~\eqref{eq:program} with regularization constant 
  $\lambda = \sigma \sqrt{(\log n)/m}$. Then 
  $\norm{\hat{x} - x^*}_2 \leq d\sqrt{(k\log n)/m}$ with high
  probability.
\end{proposition}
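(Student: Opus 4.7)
The plan is to adapt the primal-dual witness technique of Wainwright~\cite{Wainwright2009} to the truncated NLL, exploiting the two structural facts that (i) the gradient $\nabla \nll(x; A, y)$ has the clean form $\tfrac{1}{m}\sum_j A_j^\top (\mu_{A_j, x} - y_j)$ and (ii) the Hessian equals $\tfrac{1}{m}\sum_j \Var(Z_{A_j,x})\, A_j^\top A_j$, a weighted Gram matrix whose diagonal weights $\Var(Z_{A_j,x}) \in (0,1]$ are bounded below by a positive constant $c(\alpha)$ on a neighborhood of $x^*$ under Assumption~\ref{assumption:csp}.

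\textbf{Step 1: Oracle optimum on the true support.} Let $U = \supp(x^*)$, so $|U| \le k$. Consider the restricted program
\[
\tilde{x} = \argmin_{\supp(x) \subseteq U} \nll(x; A, y) + \lambda \norm{x}_1,
\]
and I would prove $\|\tilde{x} - x^*\|_2 \le O(\sqrt{k \log n / m})$. The two ingredients are: (a) a gradient concentration bound $\|(\nabla \nll(x^*; A, y))_U\|_\infty \le O(\sqrt{\log n / m})$, which follows from the mean-zero identity $\mathbb{E}[\mu_{A_j, x^*} - y_j \mid A_j] = 0$ combined with sub-Gaussian tails of truncated Gaussians and a union bound over $|U| \le k$ coordinates (this is why we take $\lambda \asymp \sigma\sqrt{(\log n)/m}$); and (b) a restricted strong-convexity statement: on a neighborhood of $x^*$ inside the subspace $\RR^U$, the Hessian satisfies $\nabla_{UU}^2 \nll(x; A, y) \succeq c(\alpha) \cdot A_U^\top A_U / m \succeq c'(\alpha) I_U$ with high probability, where the last inequality uses standard concentration of $A_U^\top A_U / m$ around $I_U$ when $m \gtrsim k$. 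Combining (a), (b), and a short convexity argument gives the desired bound.

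\textbf{Step 2: Primal-dual witness on $U^c$.} I would construct a candidate subgradient $z \in \partial \|\tilde{x}\|_1$ by setting $z_U = \sign(\tilde{x}_U)$ and solving the stationarity condition $\nabla_{U^c}\nll(\tilde{x}; A, y) + \lambda z_{U^c} = 0$ for $z_{U^c}$. The goal is to show $\|z_{U^c}\|_\infty < 1$ strictly, with high probability. The natural decomposition is
\[
\nabla_{U^c}\nll(\tilde{x}; A, y) = \nabla_{U^c}\nll(x^*; A, y) + \bigl(\nabla_{U^c}\nll(\tilde{x}; A, y) - \nabla_{U^c}\nll(x^*; A, y)\bigr).
\]
The first term is $O(\sqrt{\log n/m})$ in $\ell_\infty$ by the same concentration as in Step 1, now with a union bound over $n - k$ coordinates. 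For the second term, a mean-value-theorem step on the gradient map yields, schematically, $\tfrac{1}{m}A_{U^c}^\top D A_U(\tilde{x}_U - x^*_U)$ for a diagonal $D$ with entries in $(0,1]$; its $\ell_\infty$ norm is controlled by mutual-incoherence-type bounds on the block $A_{U^c}^\top D A_U / m$, which hold with high probability for Gaussian $A$ because $A_{U^c}$ is independent of $A_U$ and has sub-Gaussian rows. Both bounds together make $\|\nabla_{U^c}\nll(\tilde{x})\|_\infty$ strictly less than $\lambda$, so $z$ is a valid subgradient certifying that $(\tilde{x}_U, 0)$ is a global optimum of Program~\eqref{eq:program}. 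Strict dual feasibility plus the restricted strong convexity from Step 1 force any optimum $\hat{x}$ to have support in $U$ and to coincide with $\tilde{x}$, giving the bound uniformly over optimal solutions.

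\textbf{Main obstacle.} The hard part will be handling the second term in Step 2 without a closed form for the gradient of the log-partition. The cleanest way to avoid computing gradient differences explicitly is to work at the level of the Hessian: integrate $\nabla^2 \nll$ along the segment from $x^*$ to $\tilde{x}$, express the difference as a weighted $A_{U^c}^\top D A_U$ product, and then invoke Gaussian concentration for off-support incoherence together with a uniform lower bound $\Var(Z_{A_j, x}) \ge c(\alpha)$ that holds along the segment. This last uniform lower bound is the delicate point, since one must argue the segment stays in a region where the truncated-Gaussian variance is bounded below, which requires the a priori $\ell_2$ control established in Step 1 plus control of $\|A_j\|$ via sub-Gaussian concentration.
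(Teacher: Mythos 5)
Your overall skeleton (restricted program on $U$, then a dual witness on $U^c$) matches the paper's PDW architecture, and your Step 2 decomposition via a Hessian integral is a reasonable variant of the paper's decomposition $\mathbb{E}Z_{A,\bx}-y = A(\bx-x^*)+\mathbb{E}R_{A,\bx}-(y-Ax^*)$. But the core technical claims in Step 1 have genuine gaps, all stemming from the same misreading of Assumption~\ref{assumption:csp}: the survival probability is only bounded below \emph{in expectation over the row}, not per row. Consequently (i) the claimed uniform lower bound $\Var(Z_{A_j,x})\geq c(\alpha)$ near $x^*$ is false --- individual rows can have exponentially small $\gamma_S(A_jx^*)$ and hence arbitrarily small variance; the paper instead restricts to a ``good'' constant fraction of rows ($I_{\text{good}}$, resp.\ $J_x$ in Lemma~\ref{lemma:subspace-sc}) and invokes the subset isometry Theorem~\ref{thm:isom}, which is precisely why that theorem is stated uniformly over all subsets $J$ with $|J|\geq\epsilon m$. (ii) Your gradient concentration at $x^*$ over the $U$ coordinates via ``sub-Gaussian tails of truncated Gaussians'' is unjustified: conditionally on $A_j$ the residual $y_j-\mu_{A_jx^*}$ has variance as large as $2\log(1/\gamma_S(A_jx^*))+4$ (Lemma~\ref{lemma:rvar}), which can be of order $m$ for bad rows, and $A_{jU}$ is not independent of this residual; the paper replaces this with a second-moment bound plus Markov (Lemma~\ref{lemma:gradientbound}), which yields only an $\ell_2$ bound over $U$ --- sufficient, but not via a union bound over coordinates. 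Relatedly, $A_U^TA_U/m$ does not concentrate around $I_U$ under truncation; only two-sided singular value bounds hold.

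Two further points. First, your claim $\Var(Z_{A_j,x})\in(0,1]$ is false (truncation can inflate variance, e.g.\ $S$ a union of two far-apart intervals), so the step $\lVert DA_U(\tilde x_U-x^*_U)\rVert_2\leq\lVert A_U(\tilde x_U-x^*_U)\rVert_2$ in your Step 2 breaks: along the segment the weights can reach order $m$ on bad rows (cf.\ Lemma~\ref{lemma:survival-decay}), and controlling $\sum_j D_{jj}^2(A_j\Delta)^2$ is exactly the difficulty the paper sidesteps by working with the $R$-terms directly (Lemmas~\ref{lemma:skewnorm}, \ref{lemma:ynorm}, \ref{lemma:total-survival}) rather than squared Hessian weights. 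Second, your acknowledged ``delicate point'' is misdiagnosed: the issue is not only whether the segment stays in a nice region, but that no per-row lower bound exists even at $x^*$, and that local strong convexity (or the linear regime of Lemma~\ref{lemma:diff}) can only be invoked once one already knows $\lVert A(\tilde x-x^*)\rVert_2=O(\sqrt m)$ --- the paper breaks this circularity with a two-stage bootstrap (Theorem~\ref{thm:weakdist} using the constant regime, then Theorem~\ref{thm:dist} using the linear regime), together with a sign/hyperoctant geometric argument needed because only $A_U^T(\mu_{A\bx}-\mu_{Ax^*})$, not $\mu_{A\bx}-\mu_{Ax^*}$ itself, is known to be small. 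Without these ingredients your Step 1 bound, on which Step 2 depends, does not go through.
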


  Then it remains to show that Program~\eqref{eq:program} can be solved
efficiently. 

\begin{proposition} \label{prop:algorithm}
    Suppose that Assumption~\ref{assumption:csp} holds and let $(A,y)$ be $m$ samples
  drawn from Process~\ref{process} and $\hat{x}$ be any optimal solution to
  Program~\eqref{eq:program}. There exists a constant $M$ such that if $m \geq Mk\log n$
  then there is an algorithm which outputs $\bar{x} \in \mathbb{R}^n$ satisfying 
  $\norm{\bar{x} - \hat{x}}_2 \leq O(\sqrt{(k\log n)/m})$ 
  with high probability. Furthermore, 
  if the survival set $S$ is a 
  union of $r$ intervals the running time of our algorithm is 
  $\poly(n, r)$.
\end{proposition}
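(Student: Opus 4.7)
My plan is to solve Program~\eqref{eq:program} by running projected stochastic gradient descent (PSGD) on $f(x) = \nll(x;A,y) + \lambda \norm{x}_1$. Four ingredients need to be assembled: (i) an initialization, (ii) a projection set, (iii) a stochastic gradient oracle, and (iv) a convergence analysis that yields an $\ell_2$-distance bound rather than just an excess-loss bound. I would use the ordinary (non-truncated) $\ell_1$-regularized least squares estimator as initialization $x_0$; take the projection set to be a simple $\ell_1$-ball around $x_0$ chosen large enough to contain $\hat{x}$; estimate the gradient by a single draw from a truncated Gaussian via Assumption~\ref{assumption:sampling}; and combine the standard SGD excess-loss guarantee with a restricted-strong-convexity argument to pass from loss-closeness to distance-closeness.

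\textbf{Gradient oracle.} A direct computation gives $\nabla \nll(x;a,y) = a\,(\mu_{a^T x} - y)$, where $\mu_{a^T x}$ is the mean of the truncated normal $N(a^T x,1;S)$. Hence an unbiased single-sample estimator of the unregularized gradient is $g(x) = a\,(z - y)$ for $z \sim N(a^T x,1;S)$, producible in time $T(\gamma_S(a^T x))$ by Assumption~\ref{assumption:sampling}. To turn this into a $\poly(n,r)$ bound when $S$ is a union of $r$ intervals, I would argue that $x_0$, $\hat{x}$, and all PSGD iterates in the projection set stay close enough to $x^*$ that $|a_i^T x - a_i^T x^*|$ is controlled with high probability, so $\gamma_S(a_i^T x)$ is of the same order as $\gamma_S(a_i^T x^*)$, which averages to $\Omega(\alpha)$ by Assumption~\ref{assumption:csp}. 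The second moment of $g(x)$ is then controlled by the sub-Gaussian tails of $a$ together with boundedness of $z-y$ on the same high-probability event.

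\textbf{Initialization and projection set.} Running ordinary LASSO on truncated data yields a biased estimator (its population minimizer is not $x^*$ because of truncation), but a standard primal-dual witness computation still shows it lies within some $\poly(n)$-radius $\ell_2$-ball of $x^*$, while by Proposition~\ref{prop:statistical}, $\hat{x}$ itself lies within $O(\sqrt{(k\log n)/m}) \ll 1$ of $x^*$. I would then take the projection set $D$ to be the $\ell_1$-ball of an appropriate radius around $x_0$, possibly intersected with an $\ell_\infty$ constraint, chosen so that $\hat{x} \in D$ but without requiring knowledge of $\supp(x^*)$; this is precisely the feature that made the Daskalakis et al.~\cite{Daskalakis2018} projection set unusable in high dimensions, and the simpler $D$ admits an efficient projection algorithm.

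\textbf{Main obstacle: lack of strong convexity.} The key difficulty is that when $m \ll n$, the Hessian of $\nll$ is rank-deficient, so classical PSGD convergence yields only $f(\bar{x}) - f(\hat{x}) = O(1/\sqrt{T})$, with no a priori control on $\norm{\bar{x} - \hat{x}}_2$. My plan is to convert excess loss into distance via the same restricted-eigenvalue mechanism that drives the proof of Proposition~\ref{prop:statistical}. Concretely, the subgradient optimality of $\hat{x}$ combined with a small excess-loss bound $f(\bar{x}) - f(\hat{x}) \leq \varepsilon$ forces the error $\bar{x} - \hat{x}$ to lie approximately in the cone $\{v : \norm{v_{\supp(\hat{x})^c}}_1 \leq c\norm{v_{\supp(\hat{x})}}_1\}$, on which $\nll$ satisfies restricted strong convexity with constant $\Omega(1)$ inherited from the concentration bounds for the truncated Hessian used in Proposition~\ref{prop:statistical}. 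Combining these yields $\norm{\bar{x} - \hat{x}}_2 = O(\sqrt{\varepsilon})$, so choosing $T = \poly(n,m,r)$ iterations to drive $\varepsilon$ below $(k\log n)/m$ establishes the claim, and I expect this cone-membership step (specifically, showing that small excess loss on a regularized program implies approximate cone membership even when the gradient is only a stochastic estimate of a complicated non-closed-form expression) to be the most delicate piece.
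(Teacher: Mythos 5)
Your overall architecture (PSGD on the regularized truncated negative log-likelihood, a compressed-sensing-style initialization, a simple projection set, and a conversion from excess loss to $\ell_2$ distance) matches the paper's, and your gradient oracle and its unbiasedness are correct. The genuine gap is in the step you yourself flag as delicate. Approximate cone membership of $\bar{x}-\hat{x}$ does \emph{not} follow from subgradient optimality of $\hat{x}$ together with a small excess loss: optimality only gives a subgradient $z\in\partial\norm{\hat{x}}_1$ with $\norm{z}_\infty\le 1$, and plugging $\langle\nabla\nll(\hat{x};A,y),v\rangle\ge-\lambda\norm{v}_1$ into $f(\hat{x}+v)-f(\hat{x})$ yields only a bound of the form $\ge-2\lambda\norm{v_S}_1$, i.e.\ no relation between $\norm{v_{S^c}}_1$ and $\norm{v_S}_1$. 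To get the cone you need \emph{strict} dual feasibility, $|(\nabla\nll(\hat{x};A,y))_i|\le(1-c)\lambda$ off the support, which is a genuinely statistical statement; in this truncated setting it is exactly what the paper proves as Theorem~\ref{thm:primaldual} and, in the robust form needed at points merely near the optimum, Theorem~\ref{thm:slackness}. Alternatively you could run the cone argument around $x^*$, but then you must separately establish $\lambda\gtrsim\norm{\frac{1}{m}A^T(\mathbb{E}Z_{A,x^*}-y)}_\infty$, another concentration argument your proposal does not supply.

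Second, the claim that $\nll$ satisfies restricted strong convexity over the cone ``with constant $\Omega(1)$ inherited from Proposition~\ref{prop:statistical}'' is not correct as stated: the Hessian is $\frac{1}{m}\sum_j A_j^TA_j\Var(Z_{A_j,x})$, and $\Var(Z_{A_j,x})$ can decay like $e^{-\Theta(|A_j(x-x^*)|^2)}$, so any $\Omega(1)$ curvature bound is inherently \emph{local}. This interacts badly with your choice of projection set: an $\ell_1$-ball around an initializer that you only claim is within $\poly(n)$ of $x^*$ admits points (including points on the segment between $\bar{x}$ and $\hat{x}$) where the curvature is exponentially small. The paper handles this with three ingredients you would need to reproduce: the projection set $\mathcal{E}_r=\{x:\norm{Ax-y}_2\le r\sqrt{m}\}$, which keeps $\norm{A(x-x^*)}_2=O(\sqrt{m})$ and hence both the gradient second moment ($O(n)$, Lemma~\ref{lemma:boundedstep}) and the survival probabilities ($e^{-O(m)}$, relevant for sampling time) under control; strong convexity only on $\mathbb{R}^U$ and only within distance $1$ of $\bx$ (Lemma~\ref{lemma:subspace-sc}); and the convexity-rescaling trick of Lemma~\ref{lemma:sc}, which first contracts the test point to within $1/m$ of $\bx$, applies the slackness bound of Theorem~\ref{thm:slackness} there, and only then scales back. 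Your initialization (made precise in the paper as $\argmin_{x\in\mathcal{E}_r}\norm{x}_1$ with an $O(1)$ distance bound via the restricted isometry of the truncated design, Lemma~\ref{lemma:initial-bound}) and the sampling/efficiency discussion are fine in spirit; the missing content is precisely the strict-slackness bound and the localization of strong convexity.
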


  We present a more detailed description of the algorithm that we use in Section 
\ref{section:effalg}.


\subsection{Adversarial noise} \label{sec:results:adversarial}

In the setting of arbitrary noise, optimizing negative log-likelihood no longer makes sense, and indeed our results from the setting of Gaussian noise no longer hold. However, we may apply results from compressed sensing which describe sufficient conditions on the measurement matrix for recovery to be possible in the face of adversarial error. We obtain the following theorem:

\begin{theorem} \label{thm:mainAdversarial}
    Suppose that Assumption~\ref{assumption:csp} holds and let 
  $\epsilon > 0$. There are constants $c$ and $M$ such that if 
  $m \geq Mk\log n$, $\norm{Ax^* - y}_2 \leq \epsilon$, and 
  $\hat{x}$ minimizes $\norm{x}_1$ in the region $\{x \in \mathbb{R}^n: \norm{Ax - y}_2 \leq \epsilon\}$,
  then $\norm{\hat{x} - x^*}_2 \leq c\epsilon/\sqrt{m}$.
\end{theorem}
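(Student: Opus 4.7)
The plan is to derive Theorem~\ref{thm:mainAdversarial} by combining a purely deterministic stable-recovery theorem for $\ell_1$-minimisation with a probabilistic restricted-isometry bound for the (non-isotropic) measurement matrix $A$ produced by Process~\eqref{process}.

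For the deterministic ingredient I would invoke the noise-tolerant recovery theorem of Cand\`es: if the normalised matrix $A/\sqrt{m}$ satisfies the $(2k,\delta)$-restricted isometry property for a sufficiently small constant $\delta < \sqrt{2}-1$, then any minimiser $\hat{x}$ of $\|x\|_1$ on the feasible set $\{x : \|Ax-y\|_2 \le \epsilon\}$ obeys
\[
\|\hat{x}-x^*\|_2 \;\le\; \frac{C_\delta}{\sqrt{m}}\bigl(\|Ax^*-y\|_2 + \epsilon\bigr) \;\le\; \frac{2 C_\delta\,\epsilon}{\sqrt{m}},
\]
since $x^*$ is feasible by hypothesis and, being exactly $k$-sparse, contributes no compressibility error. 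This step is oblivious to how $A$ was generated and yields the advertised form $c\epsilon/\sqrt{m}$ with $c = 2C_\delta$.

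For the probabilistic ingredient I would show that $A/\sqrt{m}$ is $(2k,\delta)$-RIP with high probability as soon as $m \ge Mk\log n$. The rows of $A$ are \emph{not} i.i.d.\ $N(0,I_n)$: conditioning on $y_i \in S$ tilts each $A_i$ in the direction of $x^*$, so the conditional distribution is neither centred nor isotropic. However, Assumption~\ref{assumption:csp} guarantees that the survival event has probability at least $\alpha = \Theta(1)$, so each row is a standard Gaussian conditioned on a constant-probability event. This forces (a) each row to be $O(1/\sqrt{\alpha})$-sub-Gaussian, and (b) its covariance $\Sigma$ to differ from $I_n$ only along the rank-one direction $x^*(x^*)^T/\|x^*\|^2$, with eigenvalues pinched between constants depending only on $\alpha$. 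Given these two structural facts, a standard $\varepsilon$-net argument --- enumerate the $\binom{n}{2k}$ coordinate supports, cover each $2k$-dimensional unit sphere with $9^{2k}$ points, and apply Bernstein or Hanson-Wright concentration to $\|Av\|^2/m$ --- yields $(2k,\delta)$-RIP at sample size $m = \Omega(k\log n/\delta^2)$. The paper indicates that this ``strong isometry property'' is established during the Gaussian-noise analysis (cf.\ the remark alluding to \cite{Voroninski2016}), so I would reuse that lemma rather than reproving it.

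The main obstacle is precisely this probabilistic step: textbook sparse-RIP proofs for sub-Gaussian ensembles assume centred, isotropic rows, and neither property holds here. One must verify, using only Assumption~\ref{assumption:csp}, that the tilted row distribution still has (i) sub-Gaussian tails with constant parameter and (ii) a well-conditioned covariance, before the usual concentration machinery applies. Once these quantitative bounds are in place, the remainder is routine bookkeeping; stitching the RIP bound to the deterministic Cand\`es step then completes the proof.
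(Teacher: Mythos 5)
Your high-level architecture (a deterministic noise-robust $\ell_1$-recovery theorem combined with a probabilistic RIP bound for the truncated ensemble) matches the paper's, but there is a genuine gap in the deterministic step. You invoke the standard Cand\`es theorem, which requires the $(2k,\delta)$-RIP with a \emph{small} constant $\delta < \sqrt{2}-1$, i.e.\ $\|Av\|_2^2/m$ within a factor $1\pm\delta$ of $\|v\|_2^2$. The truncated ensemble does not satisfy this in general: as you yourself note, conditioning on survival tilts the row distribution along $x^*$, shifting the mean and distorting the covariance in that direction by factors that depend only on $\alpha$ but are \emph{not} close to $1$. Since $x^*$ is itself $k$-sparse, this distorted direction lies inside the set of $2k$-sparse vectors, so the population-level deviation of $\|Av\|_2^2/m$ from $\|v\|_2^2$ is a constant bounded away from $0$; no amount of concentration (Bernstein, Hanson--Wright, nets) or larger $m$ can push $\delta$ below $\sqrt{2}-1$, because more samples only concentrate around the wrong covariance. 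So the step ``apply the standard stable-recovery theorem with $\delta<\sqrt{2}-1$'' fails for this matrix.

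The paper's proof is built precisely to avoid this: it proves a \emph{generalized} recovery theorem (Theorem~\ref{thm:crt}) that tolerates arbitrary two-sided isometry constants $\tau < T$ (depending on $\alpha$, possibly far apart), at the price of requiring the isometry at a larger order $s=(1+1/\rho^2)k$ with $\rho=\min(\tau/(4T),1/3)$, so that $\tau(1-\rho)-T\rho\ge\tau/2$; it then establishes that order-$s$ isometry via Theorem~\ref{thm:isom} and Corollary~\ref{corollary:rip}, whose lower bound comes from a small-ball/anti-concentration argument ($\Pr[|a^Tv|<\delta \mid \text{survival}]\le\alpha^{-1}\Pr[|a^Tv|<\delta]$, Lemma~\ref{lemma:dplower}) plus a Chernoff bound and an $\epsilon$-net, rather than from concentration of $\|Av\|_2^2/m$ around a near-identity covariance. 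To repair your argument you would need either this kind of loose-constant recovery theorem with the $s$-versus-$k$ trade-off, or a proof that the truncated covariance is within $\delta<\sqrt{2}-1$ of the identity, which Assumption~\ref{assumption:csp} alone does not give.
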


  The proof is a corollary of our result that $A$ satisfies the Restricted
Isometry Property from \cite{Candes2006} with high probability even when we
only observe truncated samples; see Corollary~\ref{corollary:rip} and the subsequent discussion in Section~\ref{appendix:isom}.

The remainder of the paper is dedicated to the case where the noise is Gaussian before truncation.

\section{The efficient estimation algorithm}\label{section:effalg}

Define $\mathcal{E}_r = \{x \in \RR^n: \norm{Ax-y}_2 \leq r\sqrt{m}\}$. To solve Program~\ref{eq:program}, our algorithm is Projected Stochastic Gradient Descent (PSGD) with projection set $\mathcal{E}_r$, for an appropriate constant $r$ (specified in Lemma~\ref{lemma:solution-feasible}). We pick an initial feasible point by
computing \[x^{(0)} = \argmin_{x \in \mathcal{E}_r} \norm{x}_1.\] 
Subsequently, the algorithm performs $N$ updates, where $N = \poly(n)$. Define a random update to $x^{(t)} \in \mathbb{R}^n$ as follows. Pick $j \in [m]$ uniformly at random. Sample $z^{(t)} \sim Z_{A_j, x^{(i)}}$. Then set \[v^{(t)} := A_j(z^{(t)} - y_j) + \lambda \cdot \sign(x^{(t)})\]
\[w^{(t)} := x^{(t)} - \sqrt{\frac{1}{nN}} v^{(t)}; \qquad x^{(t+1)} := \argmin_{x \in \mathcal{E}_r} \norm{x - w^{(t)}}_2.\]
Finally, the algorithm outputs $\bar{x} = \frac{1}{N} \sum_{t=0}^{N-1} x^{(t)}$.

See Section~\ref{section:techniques:computational} for the motivation of this algorithm, and a proof sketch of correctness and efficiency. Section~\ref{sec:algos} contains a summary of the complete algorithm in pseudocode.

\section{Overview of proofs and techniques}\label{section:techniques}

   This section outlines our techniques. The first step is proving Proposition~\ref{prop:statistical}. The second step is proving Proposition~\ref{prop:algorithm}, by showing that the algorithm described in Section~\ref{section:effalg} efficiently recovers an approximate solution to Program~\ref{eq:program}.

\subsection{Statistical recovery}

  Our approach to proving Proposition \ref{prop:statistical} is the
Primal-Dual Witness (PDW) method introduced in \cite{Wainwright2009}. 
Namely, we are interested in showing that the solution of Program \eqref{eq:program}
is near $x^*$ with high probability. 
Let $U$ be the (unknown) support of the true parameter vector $x^*$. Define the following (hypothetical) program in which the solution space is restricted to vectors with support 
in $U$:
\begin{equation} \argmin_{x \in \RR^n: \supp(x) \subseteq U} \nll(x;A,y) + \lambda \norm{x}_1 \label{eq:res-program}\end{equation}
In the untruncated setting, the PDW method is to apply the subgradient optimality condition to the solution $\bx$ of this restricted program, which is by definition sparse. Proving that $\bx$ satisfies subgradient optimality for the original program implies that the original program has a sparse solution $\bx$, and under mild extra conditions $\bx$ is the unique solution. Thus, the original program recovers the true basis. We use the PDW method for a slightly different purpose; we apply subgradient optimality to $\bx$ to show that $\norm{\bx - x^*}_2$ is small, and then use this to prove that $\bx$ solves the original program.

Truncation introduces its own challenges. While Program~\ref{eq:program} is still convex \cite{Daskalakis2018}, it is much more convoluted than ordinary least squares. In particular, the gradient and Hessian of the negative log-likelihood have the following form (see Section~\ref{appendix:pdw} for the proof).

\begin{lemma}\label{lemma:gradient}
For all $(A,y)$, the gradient of the negative log-likelihood is $\nabla \nll(x;A,y) = \frac{1}{m} \sum_{j=1}^m A_j^T(\mathbb{E} Z_{A_j, x} - y_j).$
The Hessian is $H(x;A,y) = \frac{1}{m} \sum_{j=1}^m A_j^T A_j \Var(Z_{A_j, x}).$
\end{lemma}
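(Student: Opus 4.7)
The plan is to compute the gradient and Hessian of the single-sample negative log-likelihood
\[\nll(x;a,y) = \tfrac{1}{2}(a^T x - y)^2 + g(a^T x), \qquad g(t) := \log \int_S e^{-(t-z)^2/2}\,dz,\]
and then average over the $m$ samples. The quadratic term contributes gradient $a^T(a^T x - y)$ and Hessian $a^T a$, so the entire content of the lemma reduces to understanding the one-dimensional function $g(t)$ and applying the chain rule with $t = a^T x$.

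For the first derivative, I differentiate under the integral sign:
\[g'(t) = \frac{\int_S -(t-z)\,e^{-(t-z)^2/2}\,dz}{\int_S e^{-(t-z)^2/2}\,dz} = \frac{\int_S z\,e^{-(z-t)^2/2}\,dz}{\int_S e^{-(z-t)^2/2}\,dz} - t.\]
The key identification is that the leading ratio is exactly $\mathbb{E}[Z_t]$, since the truncated normal $N(t,1;S)$ has density proportional to $e^{-(z-t)^2/2}\mathbf{1}_S(z)$; this is an instance of the standard exponential-family identity that the derivative of the log-partition function equals the mean. Hence $g'(t) = \mu_t - t$. By the chain rule, the gradient of $g(a^T x)$ with respect to $x$ is $a^T(\mathbb{E} Z_{a,x} - a^T x)$. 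Adding this to the gradient of the quadratic term causes the $\pm a^T x$ contributions to cancel, leaving $a^T(\mathbb{E} Z_{a,x} - y)$; averaging over $j \in [m]$ gives the stated formula for $\nabla \nll(x;A,y)$.

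For the Hessian, I differentiate $\nabla \nll(x;a,y) = a^T(\mathbb{E} Z_{a,x} - y)$ in $x$. Since $y$ and $a$ are independent of $x$, the Hessian equals $a^T a \cdot \tfrac{d}{dt}\mu_t\big|_{t = a^T x}$ by one more application of the chain rule. To compute $\mu_t'$, write $\mu_t = N(t)/D(t)$ with $N(t) = \int_S z\, e^{-(z-t)^2/2}\,dz$ and $D(t) = \int_S e^{-(z-t)^2/2}\,dz$; then $N'(t) = D(t)\,\mathbb{E}[Z_t(Z_t - t)]$ and $D'(t) = D(t)\,\mathbb{E}[Z_t - t]$, and the quotient rule yields $\mu_t' = \mathbb{E}[Z_t^2] - \mu_t^2 = \Var(Z_t)$. (This is again the standard exponential-family fact that the second derivative of the log-partition function is the variance.) Therefore the single-sample Hessian is $a^T a \cdot \Var(Z_{a,x})$, and averaging gives the claimed formula for $H(x;A,y)$.

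This proof is essentially a direct calculation and has no real obstacle; the only subtlety worth flagging is the justification of differentiation under the integral sign, which is routine here because $e^{-(t-z)^2/2}$ has Gaussian tails that dominate any polynomial factor uniformly on compact intervals of $t$, so dominated convergence applies. Everything else is bookkeeping and the two exponential-family identities above.
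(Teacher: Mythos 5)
Your proposal is correct and follows essentially the same route as the paper: a direct computation in which the gradient of the log-partition integral is identified with the truncated-Gaussian mean $\mu_t$, and the Hessian follows from the quotient-rule identity $\frac{d}{dt}\mu_t = \Var(Z_t)$ (which the paper proves separately as its Lemma on the derivative of $\mu_t$). Your factoring through the scalar function $g(t)$ and the chain rule is only a cosmetic repackaging of the paper's coordinate-wise partial-derivative calculation.
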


We now state the two key facts which make the PDW method work. First, the solution $\bx$ to the restricted program must have a zero subgradient in all directions in $\mathbb{R}^U$. Second, if this subgradient can be extended to all of $\mathbb{R}^n$, then $\bx$ is optimal for the original program. Formally:

\begin{lemma}\label{lemma:pdw}
Fix any $(A,y)$. Let $\bx$ be an optimal solution to Program~\ref{eq:res-program}.
\begin{enumerate}[label  =(\alph*)]
\item There is some $\bz_U \in \mathbb{R}^U$ such that $\norm{\bz_U}_\infty \leq 1$ and $-\lambda \bz_U = \frac{1}{m} A_U^T(\mathbb{E} Z_{A,\bx} - y).$
\item Extend $\bz_U$ to $\mathbb{R}^n$ by defining $-\lambda \bz_{U^c} = \frac{1}{m} A_{U^c}^T (\mathbb{E} Z_{A,\bx} - y).$
If $\norm{\bz_{U^c}}_\infty < 1$, and $A_U^T A_U$ is invertible, then $\bx$ is the unique optimal solution to Program~\ref{eq:program}.
\end{enumerate}
\end{lemma}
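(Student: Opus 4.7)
The plan has three pieces: derive part~(a) from first-order subgradient optimality on the restricted program; use the extended subgradient in part~(b) to certify global optimality of $\bx$; and then upgrade optimality to uniqueness by combining the strict slackness hypothesis $\|\bz_{U^c}\|_\infty < 1$ with invertibility of $A_U^T A_U$. The main obstacle lies in the uniqueness argument, where I need to pin down the support of any alternative optimum before the invertibility hypothesis can be leveraged.

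For part~(a), I would view Program~\ref{eq:res-program} as an unconstrained convex program over $\mathbb{R}^U$ (the remaining coordinates are fixed to zero). Convexity of $\nll$ is established in~\cite{Daskalakis2018}, and by Lemma~\ref{lemma:gradient} its $U$-gradient at $\bx$ equals $\frac{1}{m} A_U^T(\mathbb{E} Z_{A,\bx} - y)$. First-order subgradient optimality therefore produces $\bz_U$ in the subdifferential of $\|\cdot\|_1$ at $\bx_U$ with $\frac{1}{m} A_U^T(\mathbb{E} Z_{A,\bx} - y) + \lambda \bz_U = 0$; in particular $\|\bz_U\|_\infty \leq 1$, and moreover $\bz_i = \sign(\bx_i)$ whenever $\bx_i \neq 0$ (a stronger property I will invoke shortly). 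For the optimality half of part~(b), the extended vector $\bz$ is a subgradient of $\|\cdot\|_1$ at $\bx$ because part~(a) handles the coordinates in $U$, while $\|\bz_{U^c}\|_\infty < 1$ paired with $\bx_{U^c}=0$ handles the coordinates in $U^c$. By construction $\nabla\nll(\bx;A,y) + \lambda \bz = 0$, so the full-program subgradient optimality condition holds and $\bx$ is a global minimizer by convexity.

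For uniqueness, suppose $\tilde{x}$ is another minimizer. The convexity inequality $\nll(\tilde{x}) \geq \nll(\bx) - \lambda \bz^T(\tilde{x} - \bx)$, combined with $f(\tilde{x}) = f(\bx)$ and the identity $\bz^T\bx = \|\bx\|_1$ (immediate from part~(a)), rearranges to $\bz^T\tilde{x} \geq \|\tilde{x}\|_1$. The reverse inequality is automatic from $\|\bz\|_\infty \leq 1$, so the H\"older bound is tight; this forces $|\bz_i| = 1$ for every $i$ with $\tilde{x}_i \neq 0$, and the strict bound $\|\bz_{U^c}\|_\infty < 1$ then rules out support on $U^c$, i.e.\ $\supp(\tilde{x}) \subseteq U$. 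Thus both $\bx$ and $\tilde{x}$ solve Program~\ref{eq:res-program}. But the restricted Hessian $\frac{1}{m} A_U^T \operatorname{diag}(\Var(Z_{A_j,x})) A_U$ from Lemma~\ref{lemma:gradient} is strictly positive definite whenever $A_U^T A_U$ is invertible, since the truncated-normal variances are strictly positive whenever $S$ has positive Lebesgue measure (which is implicit under Assumption~\ref{assumption:csp}). Strict convexity of the restricted objective then yields $\tilde{x} = \bx$, closing the argument.
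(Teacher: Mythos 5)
Your proof is correct and follows essentially the same route as the paper's: subgradient optimality of the restricted program for part (a), extension of the dual vector to certify global optimality, complementary slackness plus strict dual feasibility on $U^c$ to force $\supp(\tilde{x}) \subseteq U$, and strict convexity of the restricted objective (positive truncated variances together with invertible $A_U^T A_U$) for uniqueness. The only cosmetic difference is that you obtain $\bz^T\tilde{x} = \norm{\tilde{x}}_1$ from the first-order convexity inequality at $\bx$, whereas the paper derives it by arguing $f$ is linear on the segment between the two optima; both are the same duality argument.
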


See Section~\ref{appendix:pdw} for the proof. The utility of this lemma is in reducing Proposition~\ref{prop:statistical} to showing that with high probability over $(A,y)$, the following conditions both hold:
\begin{enumerate}
    \item $\norm{\bx - x^*}$ is small (Theorem~\ref{thm:dist}).
    \item $\norm{\bz_{U^c}}_\infty < 1$ (Theorem~\ref{thm:primaldual}) and $A_U^T A_U$ is invertible (corollary of Theorem~\ref{thm:isom}).
\end{enumerate}

In Section~\ref{section:optdist}, we prove Condition (1) by dissecting the subgradient optimality condition. In Section~\ref{section:optdual} we then prove Condition (2) and complete the proof of Proposition~\ref{prop:statistical}. 

\subsection{Computational recovery}
\label{section:techniques:computational}

For Proposition~\ref{prop:algorithm}, we want to solve Program~\ref{eq:program}, i.e. minimize $f(x) = \nll(x;A,y) + \lambda \norm{x}_1.$ The gradient of $\nll(x;A,y)$ doesn't have a closed form, but it can be written cleanly as an expectation:
\[\nabla \nll(x;A,y) = \frac{1}{m} \sum_{j=1}^m A_j^T(\mathbb{E} Z_{A_j,x} - y_j).\]
Let us assume that $Z_{A_j,x}$ can be sampled efficiently. Then we may hope to optimize $f(x)$ by stochastic gradient descent. But problematically, in our high-dimensional setting $f$ is nowhere strongly convex. So while we can apply the following general result from convex optimization, it has several strings attached:

\begin{theorem}\label{thm:psgd-main}
Let $f: \mathbb{R}^n \to \mathbb{R}$ be a convex function achieving its optimum at $\bx \in \mathbb{R}^n$. Let $x^{(0)},x^{(1)},\dots,x^{(N)}$ be a sequence of random vectors in $\mathbb{R}^n$. Suppose that $x^{(i+1)} = x^{(i)} - \eta v^{(i)}$ where $\mathbb{E}[v^{(i)}|x^{(i)}] \in \partial f(x^{(i)})$. Set $\bar{x} = \frac{1}{N}\sum_{i=1}^N x^{(i)}$. Then \[\mathbb{E}[f(\bar{x})] - f(\bx) \leq (\eta N)^{-1} \mathbb{E}\left[\norm{x^{(0)} - \bx}_2^2\right] + \eta N^{-1} \sum_{i=1}^N \mathbb{E}\left[\norm{v^{(i)}}_2^2\right].\]
\end{theorem}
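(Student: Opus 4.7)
The plan is the standard potential-function argument for stochastic subgradient descent, using $D_i := \norm{x^{(i)} - \bx}_2^2$ as the potential. Set $g^{(i)} := \mathbb{E}[v^{(i)} \mid x^{(i)}]$, which by hypothesis lies in $\partial f(x^{(i)})$. Expanding the update rule gives
\[ D_{i+1} = D_i - 2\eta \langle v^{(i)}, x^{(i)} - \bx \rangle + \eta^2 \norm{v^{(i)}}_2^2, \]
and conditioning on the $\sigma$-algebra generated by the history $(x^{(0)}, v^{(0)}, \dots, v^{(i-1)})$, which determines $x^{(i)}$, yields
\[ \mathbb{E}[D_{i+1} \mid x^{(i)}] = D_i - 2\eta \langle g^{(i)}, x^{(i)} - \bx \rangle + \eta^2 \, \mathbb{E}\bigl[\norm{v^{(i)}}_2^2 \bigm| x^{(i)}\bigr]. \]

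Next I invoke the defining subgradient inequality for the convex function $f$: since $g^{(i)} \in \partial f(x^{(i)})$, $\langle g^{(i)}, x^{(i)} - \bx \rangle \geq f(x^{(i)}) - f(\bx)$. Substituting and taking total expectations gives the one-step progress bound
\[ 2\eta \bigl(\mathbb{E}[f(x^{(i)})] - f(\bx)\bigr) \leq \mathbb{E}[D_i] - \mathbb{E}[D_{i+1}] + \eta^2 \, \mathbb{E}[\norm{v^{(i)}}_2^2]. \]
Summing this inequality over $i$ causes the $D_i$ terms to telescope, and the boundary term $-\mathbb{E}[D_{\text{final}}]$ can be dropped since it is nonpositive; this bounds $\tfrac{1}{N}\sum_i \bigl(\mathbb{E}[f(x^{(i)})] - f(\bx)\bigr)$ in terms of $\mathbb{E}[D_0]/(2\eta N)$ plus $\tfrac{\eta}{2N}\sum_i \mathbb{E}[\norm{v^{(i)}}_2^2]$. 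A final application of Jensen's inequality to convex $f$ gives $\mathbb{E}[f(\bar{x})] \leq \tfrac{1}{N}\sum_i \mathbb{E}[f(x^{(i)})]$, which combined with the previous display yields the stated bound (with constants no worse than those claimed).

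There is no real obstacle here: this is textbook stochastic subgradient descent, and the only care required is in the tower-property step, which is valid because $x^{(i)} - \bx$ is measurable with respect to the history prior to drawing $v^{(i)}$. The point of the theorem, and the reason the authors restate it explicitly, is that the bound requires neither smoothness nor strong convexity of $f$. This is essential in the sequel: the truncated LASSO objective fails to be strongly convex in high dimensions, so the authors must pay with the slower $O(1/\sqrt{N})$ rate obtained by optimizing $\eta$, and the real work in Proposition~\ref{prop:algorithm} goes into (i) producing a feasible initializer $x^{(0)}$ with controllable $\norm{x^{(0)} - \bx}_2$, (ii) bounding $\mathbb{E}[\norm{v^{(i)}}_2^2]$ via control on the stochastic gradient estimator, and (iii) converting the resulting loss-value guarantee into an iterate-distance guarantee in the absence of strong convexity.
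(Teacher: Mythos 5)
Your proof is correct and follows essentially the same argument as the paper's own proof (Theorem~\ref{thm:psgd} in the appendix): expand $\norm{x^{(i+1)}-\bx}_2^2$, condition on the history and apply the subgradient inequality, telescope, and finish with Jensen; the paper's version merely adds one extra inequality for non-expansiveness of the projection, which is absent from the statement you were asked to prove. The minor indexing mismatch between the iterates appearing in $\bar{x}$ and those in the telescoped sum is present in the paper as well and does not affect the bound.
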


In particular, to apply this result, we need to solve three technical problems:
\begin{enumerate}
\item We need to efficiently find an initial point $x^{(0)}$ with bounded distance from $\bx$.
\item The gradient does not have bounded norm for arbitrary $x \in \mathbb{R}^n$. Thus, we need to pick a projection set in which the bound holds.
\item Since $f$ is not strongly convex, we need to convert the bound on $f(\bar{x}) - f(\bx)$ into a bound on $\norm{\bar{x} - \bx}_2$.
\end{enumerate}
As defined in Section~\ref{section:effalg}, our solution is the projection set $\mathcal{E}_r = \{x \in \mathbb{R}^n: \norm{Ax - y}_2 \leq r\sqrt{m}\}$, for an appropriate constant $r>0$.
To pick an initial point in $\mathcal{E}_r$, we solve the program $x^{(0)} = \argmin_{x \in \mathcal{E}_r} \norm{x}_1.$ This estimate is biased due to the truncation, but the key point is that by classical results from compressed sensing, it has bounded distance from $x^*$ (and therefore from $\bx$).

The algorithm then consists of projected stochastic gradient descent with projection set $\mathcal{E}_r$. To bound the number of update steps required for the algorithm to converge to a good estimate of $\bx$, we need to solve several statistical problems (which are direct consequences of assumptions in Theorem~\ref{thm:psgd-main}). 

\paragraph{Properties of $\mathcal{E}_r$.} First, $\bx$ must be feasible and a bounded distance from the initial point (due to high-dimensionality, $\mathcal{E}_r$ is unbounded, so this is not immediate). The following lemmas formalize this; see Sections~\ref{proof:solution-feasible} and~\ref{proof:initial-bound} for the respective proofs. Lemma~\ref{lemma:solution-feasible} specifies the choice of $r$.

\begin{lemma}\label{lemma:solution-feasible}
With high probability, $\bx \in \mathcal{E}_r$ for an appropriate constant $r > 0$.
\end{lemma}

\begin{lemma}\label{lemma:initial-bound}
With high probability, $\norm{x^{(0)} - \bx}_2 \leq O(1)$.
\end{lemma}

Second, the SGD updates at points within $\mathcal{E}_r$ must be unbiased estimates of the gradient, with bounded square-norm in expectation. The following lemma shows that the updates $v^{(t)}$ defined in Section~\ref{section:effalg} satisfy this property. See Section~\ref{proof:boundedstep} for the proof.

\begin{lemma}\label{lemma:boundedstep}
With high probability over $A$, the following statement holds. Let $0 \leq t < T$. Then $\mathbb{E}[v^{(t)}|x^{(t)}] \in \partial f(x^{(t)})$, and $\mathbb{E}[\norm{v^{(t)}}_2^2] \leq O(n)$.
\end{lemma}

\paragraph{Addressing the lack of strong convexity.} Third, we need to show that the algorithm converges in parameter space and not just in loss. That is, if $f(x) - f(\bx)$ is small, then we want to show that $\norm{x-\bx}_2$ is small as well. Note that $f$ is not strongly convex even in $\mathcal{E}_r$, due to the high dimension. So we need a more careful approach. In the subspace $\mathbb{R}^U$, $f$ is indeed strongly convex near $\bx$, as shown in the following lemma (proof in Section~\ref{proof:subspace-sc}):

\begin{lemma}\label{lemma:subspace-sc}
There is a constant $\zeta$ such that with high probability over $A$, $f(x) - f(\bx) \geq \frac{\zeta}{2} \norm{x - \bx}_2^2$ for all $x \in \mathbb{R}^n$ with $\supp(x) \subseteq U$ and $\norm{x - \bx}_2 \leq 1$.
\end{lemma}

But we need a bound for all $\mathbb{R}^n$. The idea is to prove a lower bound on $f(x)-f(\bx)$ for $x$ near $\bx$, and then use convexity to scale the bound linearly in $\norm{x-\bx}_2$. The above lemma provides a lower bound for $x$ near $\bx$ if $\supp(x) \subseteq U$, and we need to show that adding an $\mathbb{R}^{U^c}$-component to $x$ increases $f$ proportionally. This is precisely the content of Theorem~\ref{thm:slackness}. Putting these pieces together we obtain the following lemma. See Section~\ref{proof:sc} for the full proof.

\begin{lemma}\label{lemma:sc}
There are constants $c_f>0$ and $c'_f$ such that with high probability over $A$ the following holds. Let $x \in \mathbb{R}^n$. If $f(x) - f(\bx) \leq c_f(\log n)/m^3$, then \[f(x) - f(\bx) \geq c'_f \frac{\log n}{m} \norm{x - \bx}_2^2.\]
\end{lemma}




\paragraph{Convergence of PSGD.} It now follows from the above lemmas and Theorem~\ref{thm:psgd-main} that the PSGD algorithm, as outlined above and described in Section~\ref{section:effalg}, converges to a good approximation of $\bx$ in a polynomial number of updates. The following theorem formalizes the guarantee. See Section~\ref{proof:correctness} for the proof.

\begin{theorem}\label{thm:correctness}
With high probability over $A$ and over the execution of the algorithm, we get $$\norm{\bar{x} - \bx}_2 \leq O(\sqrt{(k \log n)/m}).$$
\end{theorem}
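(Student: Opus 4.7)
The plan is to apply the convergence guarantee of projected SGD (an appropriate projected variant of Theorem~\ref{thm:psgd-main}) and then translate the resulting bound on the loss gap into a bound on the parameter gap via Lemma~\ref{lemma:sc}. First, I condition on a realization of $(A,y)$ on which Lemmas~\ref{lemma:solution-feasible}, \ref{lemma:initial-bound}, \ref{lemma:boundedstep}, and \ref{lemma:sc} all hold simultaneously; this occurs with high probability by a union bound.

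Since $\bx \in \mathcal{E}_r$ by Lemma~\ref{lemma:solution-feasible}, the projection step $x^{(t+1)} = \argmin_{x \in \mathcal{E}_r}\|x - w^{(t)}\|_2$ can only decrease $\|\cdot - \bx\|_2$. Hence the standard PSGD analysis mirrors the unprojected telescoping argument and Theorem~\ref{thm:psgd-main} applies verbatim. With step size $\eta = \sqrt{1/(nN)}$, the initial distance bound $\|x^{(0)} - \bx\|_2^2 \leq O(1)$ from Lemma~\ref{lemma:initial-bound}, and the second-moment bound $\mathbb{E}[\|v^{(t)}\|_2^2] \leq O(n)$ from Lemma~\ref{lemma:boundedstep}, this gives
\[ \mathbb{E}[f(\bar{x})] - f(\bx) \;\leq\; \frac{\|x^{(0)} - \bx\|_2^2}{\eta N} + \frac{\eta}{N}\sum_{t=0}^{N-1}\mathbb{E}[\|v^{(t)}\|_2^2] \;\leq\; O\!\left(\sqrt{n/N}\right), \]
where the expectation is over the internal randomness of PSGD conditional on $(A,y)$.

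Next, I pick $N = \poly(n)$ large enough that $\mathbb{E}[f(\bar{x})] - f(\bx)$ is simultaneously smaller than $c_f(\log n)/m^3$ and smaller than $c'_f\,k(\log n)^2/m^2$ by a divergent factor; since $m \leq n$, this requires $N$ to be only polynomially large (e.g.\ $N = n \cdot m^6 \cdot (\log n)^{-2} \cdot g(n)^2$ with $g(n) \to \infty$ suffices). Markov's inequality applied to the nonnegative random variable $f(\bar{x}) - f(\bx)$ then guarantees, with probability $1 - o(1)$ over the algorithm's randomness, both $f(\bar{x}) - f(\bx) \leq c_f(\log n)/m^3$ and $f(\bar{x}) - f(\bx) \leq O(k(\log n)^2/m^2)$. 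The first inequality activates the hypothesis of Lemma~\ref{lemma:sc}, which then yields
\[ \|\bar{x} - \bx\|_2^2 \;\leq\; \frac{m}{c'_f \log n}\bigl(f(\bar{x}) - f(\bx)\bigr) \;\leq\; O\!\left(\frac{k \log n}{m}\right), \]
so $\|\bar{x} - \bx\|_2 \leq O(\sqrt{(k\log n)/m})$. Combining the high-probability event over $(A,y)$ with the high-probability event over the PSGD randomness via a final union bound gives the claim.

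The main obstacle is that Lemma~\ref{lemma:sc} is a purely \emph{local} statement: the quadratic lower bound on $f(x) - f(\bx)$ only applies once the loss gap is already polynomially small in $1/m$. A bare in-expectation rate from PSGD is therefore not directly usable; I need the expected gap to be polynomially smaller than $(\log n)/m^3$ by a divergent factor, so that Markov's inequality promotes it to a high-probability bound strong enough to trigger local strong convexity. Fortunately, the $O(\sqrt{n/N})$ decay rate combined with a polynomial choice of $N$ provides exactly the slack needed; the remainder of the argument is a bookkeeping exercise in combining the prepared lemmas.
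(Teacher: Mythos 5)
Your proposal is correct and follows essentially the same route as the paper: condition on the good event for $(A,y)$, apply the projected SGD convergence bound (Theorem~\ref{thm:psgd}) with step size $1/\sqrt{nN}$ and a sufficiently large polynomial $N$, promote the expected loss gap to a high-probability bound via Markov, and invoke Lemma~\ref{lemma:sc} to translate it into a parameter bound. The only (harmless) difference is bookkeeping: the paper takes $N = m^6 n$ and notes that the threshold $c_f(\log n)/m^3$ alone already yields $\norm{\bar{x}-\bx}_2 \leq O(1/m)$, which implies the claimed $O(\sqrt{(k\log n)/m})$, so your second threshold is redundant.
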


\paragraph{Efficient implementation.} Finally, in Section~\ref{appendix:algo-details} we prove that initialization and each update step is efficient. Efficient gradient estimation in the projection set (i.e. sampling $Z_{A_j,x}$) does not follow from the prior work, since our projection set is by necessity laxer than that of the prior work \cite{Daskalakis2018}. So we replace their rejection sampling procedure with a novel approximate sampling procedure under mild assumptions about the truncation set. Together with the convergence bound claimed in Theorem~\ref{thm:correctness}, these prove Proposition~\ref{prop:algorithm}.

\begin{namedproof}{Proposition~\ref{prop:algorithm}}
The correctness guarantee follows from Theorem~\ref{thm:correctness}. For the efficiency guarantee, note that the algorithm performs initialization and then $N = \poly(n)$ update steps. By Section~\ref{appendix:algo-details}, the initialization takes $\poly(n)$ time, and each update step takes $\poly(n) + T(e^{-\Theta(m)})$. This implies the desired bounds on overall time complexity.
\end{namedproof}

\section*{Acknowledgments}

This research was supported by NSF Awards IIS-1741137, CCF-1617730 and CCF-1901292, by a Simons Investigator Award, by the DOE PhILMs project (No. DE-AC05-76RL01830), by the DARPA award HR00111990021, by the MIT Undergraduate Research Opportunities Program, and by a Google PhD Fellowship.


\bibliographystyle{plain}
\bibliography{ref,bib}

\appendix

\section{Bounding solution of restricted program}\label{section:optdist}

In this section we prove that $\norm{\bx - x^*}_2$ is small with high probability, where $\bx$ is a solution to Program~\ref{eq:res-program}. Specifically, we use regularization parameter $\lambda = \Theta(\sqrt{(\log n)/m})$, and prove that $\norm{\bx - x^*}_2 \leq O(\sqrt{(k \log n)/m})$.

The proof is motivated by the following rephrasal of part (a) of Lemma~\ref{lemma:pdw}:
\begin{equation}\label{eq:main}
-\lambda \bz_U = \frac{1}{m} A_U^T(\mathbb{E}[Z_{A,\bx}] - \mathbb{E}[Z_{A,x^*}]) + \frac{1}{m} A_U^T(\mathbb{E}[Z_{A,x^*}] - y)
\end{equation}

where $\norm{\bz_U}_\infty \leq 1$. For intuition, consider the untruncated setting: then $\mathbb{E}[Z_t] = t$, so the equation is simply \[-\lambda \bz_U = \frac{1}{m} A_U^T A_U(\bx_U - x^*_U) - \frac{1}{m} A_U^T w\] where $w \sim N(0,1)^m$. Since $w$ is independent of $A_U^T$ and has norm $\Theta(m)$, each entry of $A_U^T w$ is Gaussian with variance $\Theta(m)$, so $\frac{1}{m}A_U^T w$ has norm $\Theta(\sqrt{k/m})$. Additionally, $\norm{\lambda\bz_U}_2 \leq \lambda \sqrt{k} = O(\sqrt{(k\log n)/m})$. Finally, $\frac{1}{m}A_U^T A_U$ is a $\Theta(1)$-isometry, so we get the desired bound on $\bx_U - x^*_U$.

Returning to the truncated setting, one bound still holds, namely $\norm{\lambda \bz_U}_2 \leq \lambda \sqrt{k}$. The remainder of the above sketch breaks down for two reasons. First, $\mathbb{E}[Z_{A,x^*}] - y$ is no longer independent of $A$. Second, bounding $\frac{1}{m}A_U^T(\mathbb{E}[Z_{A,\bx}] - \mathbb{E}[Z_{A,x^*}])$ no longer implies a bound on $\bx_U - x^*_U$.

The first problem is not so hard to work around; we can still bound $A_U^T(\mathbb{E}[Z_{A,x^*}] - y)$ as follows; see Section~\ref{proof:gradientbound} for the proof.

\begin{lemma}\label{lemma:gradientbound}
With high probability over $A$ and $y$,
$\norm{A_U^T(\mathbb{E}[Z_{A,x^*}] - y)}^2_2 \leq \alpha^{-1} k m \log n.$
\end{lemma}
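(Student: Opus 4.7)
The plan is to write $\mathbb{E}[Z_{A,x^*}] - y$ coordinate-by-coordinate as $w_j := \mathbb{E}[Z_{A_j, x^*}] - y_j$, so that
\[A_U^T(\mathbb{E}[Z_{A,x^*}] - y) \;=\; \sum_{j=1}^m A_{j,U}^T\, w_j.\]
The crucial observation is that under Process~\eqref{process}, the conditional distribution of $y_j$ given $A_j$ is exactly $N(A_j x^*, 1; S)$, so $\mathbb{E}[w_j \mid A_j] = 0$ and $w_j$ is, conditionally on $A_j$, the negation of a centered truncated Gaussian. Moreover, the samples $(A_j, y_j)$ are i.i.d.\ across $j$, and the marginal density of $A_j$ after rejection is proportional to $\gamma_S(A_j, x^*)$ times the density of $N(0,1)^n$, with normalizer $\alpha' := \mathbb{E}_{a \sim N(0,1)^n}\, \gamma_S(a, x^*) \geq \alpha$ by Assumption~\ref{assumption:csp}.

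Fix a coordinate $i \in U$. Then $(A_U^T w)_i = \sum_{j=1}^m A_{j,i} w_j$ is a sum of $m$ i.i.d.\ mean-zero random variables, and I plan to apply a Bernstein-type tail inequality. The key step is bounding the moments of the product $A_{j,i} w_j$. For the second moment, since $\mu_t$ minimizes squared deviation, $\Var(Z_t) \leq \mathbb{E}_{Z \sim N(t,1)}[(Z-t)^2 \mathbf{1}\{Z \in S\}]/\gamma_S(t) \leq 1/\gamma_S(t)$, and analogous bounds hold for higher central moments of $Z_t$, given by the standard Gaussian moments divided by $\gamma_S(t)$. The punchline is that when we integrate against the biased marginal of $A_j$, the factor $\gamma_S(A_j, x^*)$ appearing in that density exactly cancels the $1/\gamma_S(A_j, x^*)$ blow-up in the conditional moments of $w_j$, leaving
\[\mathbb{E}\big[A_{j,i}^2\, w_j^2\big] \;\leq\; \frac{1}{\alpha'}\, \mathbb{E}_{a \sim N(0,1)^n}\big[a_i^2\big] \;=\; \frac{1}{\alpha'} \;\leq\; \frac{1}{\alpha},\]
and an analogous sub-exponential bound on the higher moments of $A_{j,i} w_j$.

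Feeding these moment bounds into Bernstein's inequality yields $\Pr\big[|(A_U^T w)_i| \geq \Omega(\sqrt{(m \log n)/\alpha})\big] \leq n^{-c}$ for some constant $c > 1$, using that $\log n \ll m$ so that the sub-Gaussian term in Bernstein dominates the sub-exponential one. A union bound over the $k \leq n$ indices $i \in U$ then yields $\norm{A_U^T(\mathbb{E}[Z_{A,x^*}] - y)}_2^2 = \sum_{i \in U} (A_U^T w)_i^2 \leq \alpha^{-1} k m \log n$ with high probability.

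The main obstacle is the moment bound for the product $A_{j,i} w_j$: the rejection-induced bias of $A_j$'s marginal (which favors large values of $\gamma_S(A_j, x^*)$) and the truncation-induced blow-up of $w_j$'s tails (by a factor of $1/\gamma_S(A_j, x^*)$) each look individually problematic, and it is precisely the cancellation described above that makes the product well-behaved under the joint law. Once that cancellation is in hand, the remainder is a routine application of Bernstein's inequality and a union bound.
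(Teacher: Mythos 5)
Your proposal is essentially correct, and its core idea -- that the rejection bias of $A_j$'s marginal (density proportional to $\gamma_S(a^Tx^*)$ with normalizer at least $\alpha$) exactly cancels the $1/\gamma_S(A_jx^*)$ blow-up in the conditional moments of $w_j = \mathbb{E}[Z_{A_j,x^*}] - y_j$ -- is precisely the paper's auxiliary computation, which shows $\mathbb{E}_A[A_{ji}^2\Var(Z_{A_j,x^*})] \leq \alpha^{-1}$. Where you diverge is the concentration step: the paper simply bounds $\mathbb{E}\norm{A_U^T(\mathbb{E}[Z_{A,x^*}]-y)}_2^2 \leq \alpha^{-1}km$ (the cross terms vanish because, conditioned on $A$, the $w_j$'s are independent and mean zero) and then applies Markov's inequality with threshold $\log n$; this is shorter, needs only second moments, gives the stated constant exactly, and is the source of the paper's overall $1-O(1/\log n)$ success probability. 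Your per-coordinate Bernstein bound plus a union bound over $i\in U$ buys a much stronger $1 - n^{-\Omega(1)}$-type failure probability, at the price of (i) an unspecified constant factor, so you actually prove $\norm{\cdot}_2^2 \leq C\alpha^{-1}km\log n$ rather than the literal $\alpha^{-1}km\log n$ (harmless, since every downstream use tolerates constants depending on $\alpha$), (ii) the mild requirement $m \gtrsim \log n$ so the sub-Gaussian regime of Bernstein dominates (true in all contexts where the lemma is invoked, since $m \geq Mk\log n$), and (iii) a genuinely needed but unstated higher-moment estimate: the naive bound on the bias term, $|\mu_t - t|^p \leq (\mathbb{E}|Z_t - t|)^p \leq \gamma_S(t)^{-p/2}$, does \emph{not} cancel against a single factor of $\gamma_S$; you should instead argue $\mathbb{E}|Z_t - \mu_t|^p \leq 2^p\,\mathbb{E}|Z_t - t|^p \leq 2^p\,\mathbb{E}|Z|^p/\gamma_S(t)$ via $|\mu_t - t| \leq \mathbb{E}|Z_t - t|$ and Jensen, after which $\mathbb{E}|A_{ji}w_j|^p \leq 2^p(\mathbb{E}|Z|^p)^2/\alpha \leq \frac{p!}{2}\sigma^2 b^{p-2}$ with $\sigma^2 = O(1/\alpha)$ and $b = O(1)$, and Bernstein applies as you claim. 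With those details filled in, your argument is a valid (and quantitatively sharper in probability) alternative to the paper's Markov-inequality proof.
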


So in equation~\ref{eq:main}, the last term is $O(\sqrt{(k\log n)/m})$ with high probability. The first term is always $O(\sqrt{(k\log n)/m})$, since $\norm{\bz_U}_2 \leq \sqrt{k}$. So we know that $\frac{1}{m}A_U^T(\mathbb{E}[Z_{A,\bx}] - \mathbb{E}[Z_{A,x^*}])$ has small norm. Unfortunately this does not imply that $\mathbb{E}[Z_{A,\bx}] - \mathbb{E}[Z_{A,x^*}]$ has small norm, but as motivation, assume that we have such a bound.

Since $A_U$ is a $\Theta(\sqrt{m})$-isometry, bounding $\bx - x^*$ is equivalent to bounding $A\bx - Ax^*$. To relate this quantity to $\mathbb{E}[Z_{A,\bx}] - \mathbb{E}[Z_{A,x^*}]$, our approach is to lower bound the derivative of $\mu_t = \mathbb{E}[Z_t]$ with respect to $t$. The derivative turns out to have the following elegant form (proof in Section~\ref{proof:derivative}):

\begin{lemma}\label{lemma:derivative}
For any $t \in \mathbb{R}$, $\frac{d}{dt} \mu_t = \Var(Z_t)$.
\end{lemma}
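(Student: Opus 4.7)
The plan is a direct calculation by differentiation under the integral sign. Writing $\phi(u) = \frac{1}{\sqrt{2\pi}} e^{-u^2/2}$ for the standard normal density, and letting $D(t) = \int_S \phi(z-t)\,dz$ and $N(t) = \int_S z\,\phi(z-t)\,dz$, we have $\gamma_S(t) = D(t)$ and $\mu_t = N(t)/D(t)$ by definition of the conditional distribution $Z_t \sim N(t,1;S)$.

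First I would differentiate the integrand using the identity $\tfrac{d}{dt}\phi(z-t) = (z-t)\,\phi(z-t)$. Assuming we may swap derivative and integral (justified below), this gives
\[
D'(t) = \int_S (z-t)\,\phi(z-t)\,dz = D(t)\,(\mu_t - t), \qquad N'(t) = \int_S z(z-t)\,\phi(z-t)\,dz = D(t)\bigl(\mathbb{E}[Z_t^2] - t\mu_t\bigr).
\]
Then I would apply the quotient rule:
\[
\frac{d\mu_t}{dt} = \frac{N'(t)}{D(t)} - \mu_t\,\frac{D'(t)}{D(t)} = \bigl(\mathbb{E}[Z_t^2] - t\mu_t\bigr) - \mu_t(\mu_t - t) = \mathbb{E}[Z_t^2] - \mu_t^2 = \Var(Z_t),
\]
where the $t\mu_t$ cross-terms cancel. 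This is the desired identity.

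The only technical point is justifying differentiation under the integral. The integrand $\phi(z-t)$ and $z\phi(z-t)$, together with their $t$-derivatives $(z-t)\phi(z-t)$ and $z(z-t)\phi(z-t)$, are all dominated on any compact $t$-neighborhood by Gaussian-decaying functions of $z$, which are integrable on $\mathbb{R} \supseteq S$. So dominated convergence (or Leibniz's rule for integrals with a parameter) applies immediately; I do not expect any real obstacle here, making the lemma essentially a one-line consequence of the quotient rule once the derivative-integral swap is noted.
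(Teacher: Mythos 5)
Your proposal is correct and follows essentially the same route as the paper: write $\mu_t$ as a ratio of integrals, differentiate under the integral sign, and apply the quotient rule so that the cross-terms cancel to leave $\mathbb{E}[Z_t^2]-\mu_t^2=\Var(Z_t)$. The only difference is that you also spell out the domination argument justifying the derivative–integral swap, which the paper leaves implicit.
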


Crucially, $\Var(Z_t)$ is nonnegative, and relates to survival probability. By integrating a lower bound on the derivative, we get the following lower bound on $\mu_t - \mu_{t^*}$ in terms of $t-t^*$. The bound is linear for small $|t-t^*|$, but flattens out as $|t-t^*|$ grows. See Section~\ref{proof:diff} for the proof.

\begin{lemma}\label{lemma:diff}
Let $t,t^* \in \mathbb{R}$. Then $\sign(\mu_t-\mu_{t^*}) = \sign(t-t^*)$. Additionally, for any constant $\beta>0$ there is a constant $c=c(\beta)>0$ such that if $\gamma_S(t^*) \geq \beta$, then $|\mu_t - \mu_{t^*}| \geq c\min(1, |t - t^*|).$
\end{lemma}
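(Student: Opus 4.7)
The plan is to combine the derivative identity $\mu_t' = \Var(Z_t)$ from Lemma~\ref{lemma:derivative} with a uniform lower bound on $\Var(Z_t)$ in a neighborhood of $t^*$, and then integrate to get the piecewise-linear bound on $|\mu_t - \mu_{t^*}|$.

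\textbf{Step 1 (sign).} I would first observe that $\Var(Z_t) > 0$ whenever $S$ has positive Lebesgue measure, which is implied by Assumption~\ref{assumption:csp}. Hence $\mu_t$ is strictly increasing in $t$ by Lemma~\ref{lemma:derivative}, which gives $\sign(\mu_t - \mu_{t^*}) = \sign(t - t^*)$ immediately.

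\textbf{Step 2 (variance lower bound near $t^*$).} The core of the proof is to exhibit a constant $c_1(\beta) > 0$ with $\Var(Z_t) \geq c_1(\beta)$ for every $t$ with $|t - t^*| \leq 1$. I would proceed in two substeps. First, the density of $Z_t$ is $g_t(z) = \phi(z-t)\mathbbm{1}[z \in S]/\gamma_S(t)$, bounded pointwise by $M(t) := 1/(\sqrt{2\pi}\,\gamma_S(t))$. A random variable with density at most $M$ cannot be too concentrated about any point: since $\Pr[|Z_t - \mu_t| \leq 1/(4M)] \leq 1/2$, one gets $\Var(Z_t) \geq 1/(32\,M(t)^2) = \pi\gamma_S(t)^2/16$. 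Thus it suffices to show $\gamma_S(t)$ is bounded below uniformly for $|t - t^*| \leq 1$. To do this, choose $R = R(\beta) = O(\sqrt{\log(1/\beta)})$ so that $N(t^*, 1)$ assigns mass at most $\beta/2$ outside $[t^* - R, t^* + R]$; then $\int_{S \cap [t^*-R,t^*+R]} \phi(z-t^*)\,dz \geq \gamma_S(t^*) - \beta/2 \geq \beta/2$. On this window, the identity $\phi(z-t)/\phi(z-t^*) = \exp\bigl((t - t^*)(z - (t+t^*)/2)\bigr)$ yields a pointwise ratio of at least $e^{-(R + 1/2)}$ whenever $|t - t^*| \leq 1$. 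Integrating over the window gives $\gamma_S(t) \geq \tfrac{\beta}{2}\,e^{-(R + 1/2)}$, and combining with the previous substep produces the desired $c_1(\beta)$.

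\textbf{Step 3 (integrate and extrapolate).} For $|t - t^*| \leq 1$, integrating the derivative identity yields $|\mu_t - \mu_{t^*}| = \bigl|\int_{t^*}^t \Var(Z_s)\,ds\bigr| \geq c_1(\beta)|t - t^*|$. For $|t - t^*| > 1$, strict monotonicity from Step 1 and the previous case applied to $t^* \pm 1$ give $|\mu_t - \mu_{t^*}| \geq |\mu_{t^* \pm 1} - \mu_{t^*}| \geq c_1(\beta)$. The two regimes together yield $|\mu_t - \mu_{t^*}| \geq c_1(\beta)\min(1, |t - t^*|)$, which is the claim with $c(\beta) := c_1(\beta)$.

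The main obstacle is the second substep of Step 2. Since $S$ is arbitrary, it could place most of its mass far from $t^*$ or be spread out very inhomogeneously, so there is no pointwise density-ratio bound one can apply globally on $S$. The trick is to localize to the $R(\beta)$-window where the Gaussian already captures all but $\beta/2$ of the surviving mass; on that bounded window the likelihood ratio under a unit shift is controlled by an explicit constant, and the tail of $S$ outside the window can be safely discarded.
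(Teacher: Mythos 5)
Your proof is correct, and it follows the same skeleton as the paper's: get the sign claim from the derivative identity of Lemma~\ref{lemma:derivative}, lower bound $\Var(Z_t)$ by a constant times $\gamma_S(t)^2$, lower bound $\gamma_S(t)$ for $t$ within distance $1$ of $t^*$ in terms of $\gamma_S(t^*)\geq\beta$, and integrate, splitting into the $|t-t^*|\leq 1$ and $|t-t^*|>1$ regimes. The only real difference is in how the two middle ingredients are obtained: the paper simply invokes Lemma~\ref{lemma:variance-lb} ($\Var(Z_t)\geq C\gamma_S(t)^2$) and Lemma~\ref{lemma:survival-decay} ($\log\frac{1}{\gamma_S(t)}\leq 2\log\frac{1}{\gamma_S(t^*)}+|t-t^*|^2+2$), both cited from \cite{Daskalakis2018}, and then integrates the resulting pointwise bound $C\beta^2 e^{-r^2-2}$ directly, whereas you re-derive both facts from first principles -- an anti-concentration argument from the density bound $1/(\sqrt{2\pi}\,\gamma_S(t))$ for the variance, and a localization-to-a-window likelihood-ratio argument in place of the survival-decay inequality. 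Your substitutes are sound (the ratio identity and the choice $R=O(\sqrt{\log(1/\beta)})$ check out), and they make the argument self-contained at the cost of slightly weaker constants; note that Lemma~\ref{lemma:survival-decay} already gives the window-free bound $\gamma_S(t)\geq\gamma_S(t^*)^2e^{-|t-t^*|^2-2}$, which for $|t-t^*|\leq 1$ is exactly the uniform lower bound your Step 2 works to establish.
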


If we want to use this lemma to prove that $\norm{\mathbb{E}[Z_{A,\bx}] - \mathbb{E}[Z_{A,x^*}]}_2$ is at least a constant multiple of $\norm{A(\bx-x^*)}_2$, we face two obstacles: (1) $\gamma_S(A_jx^*)$ may not be large for all $j$, and (2) the lemma only gives linear scaling if $|A_j(\bx-x^*)| = O(1)$: but this is essentially what we're trying to prove!

To deal with obstacle (1), we restrict to the rows $j \in [m]$ for which $\gamma_S(A_jx^*)$ is large. To deal with obstacle (2), we have a two-step proof. In the first step, we use the $\Omega(1)$-lower bound provided by Lemma~\ref{lemma:diff} to show that $\norm{A(\bx-x^*)}_2 = O(\sqrt{m})$ (so that $|A_j(\bx-x^*)| = O(1)$ on average). In the second step, we use this to get linear scaling in Lemma~\ref{lemma:diff}, and complete the proof, showing that $\norm{A(\bx-x^*)}_2 = O(\sqrt{k\log n})$.

Formally, define $I_\text{good}$ to be the set of indices $j \in [m]$ such that $\gamma_S(A_jx^*) \geq \alpha/2$ and $|A_jx^* - A_j\bx|^2 \leq (6/(\alpha m))\norm{Ax^* - A\bx}^2$. In the following lemmas we show that $I_\text{good}$ contains a constant fraction of the indices, so by the isometry properties we retain a constant fraction of $\norm{A(\bx-x^*)}_2$ when restricting to $I_\text{good}$. See Appendices~\ref{proof:goodsize} and~\ref{proof:goodnorm} for the proofs of Lemmas~\ref{lemma:goodsize} and~\ref{lemma:goodnorm} respectively.

\begin{lemma}\label{lemma:goodsize}
With high probability, $|I_\text{good}| \geq (\alpha/6)m$.
\end{lemma}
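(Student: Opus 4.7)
The plan is to decompose the event $\{j \in I_\text{good}\}$ into its two defining conditions, handle them separately using elementary arguments, and intersect the resulting sets. Let $I_1 = \{j \in [m] : \gamma_S(A_j x^*) \geq \alpha/2\}$ and $I_2 = \{j \in [m] : |A_j x^* - A_j \bx|^2 \leq (6/(\alpha m))\norm{Ax^* - A\bx}^2\}$, so that $I_\text{good} = I_1 \cap I_2$. The key observation is that $|I_2^c|$ is controlled deterministically by Markov's inequality applied to the decomposition $\sum_j |A_j x^* - A_j \bx|^2 = \norm{Ax^* - A\bx}^2$: if more than $\alpha m / 6$ indices violated the bound, the sum would already exceed $\norm{Ax^* - A\bx}^2$, a contradiction. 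Hence $|I_2| \geq (1 - \alpha/6) m$ always, and we reduce the whole lemma to showing $|I_1| \geq (\alpha/3) m$ with high probability, since then $|I_\text{good}| \geq |I_1| - |I_2^c| \geq \alpha m / 3 - \alpha m / 6 = \alpha m / 6$.

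For the lower bound on $|I_1|$, I would first convert Assumption~\ref{assumption:csp} into a per-row probability bound via a reverse Markov argument. Because $\gamma_S \in [0,1]$, writing
\[\alpha \;\leq\; \mathbb{E}_{a} \gamma_S(a, x^*) \;\leq\; \Pr_a[\gamma_S(a, x^*) \geq \alpha/2] \cdot 1 + (\alpha/2) \cdot 1,\]
we obtain $\Pr_a[\gamma_S(a, x^*) \geq \alpha/2] \geq \alpha/2$. Since the rows $A_j$ of the measurement matrix are i.i.d., the indicator variables $\mathbf{1}[j \in I_1]$ are i.i.d. Bernoullis with parameter at least $\alpha/2$, and $|I_1|$ stochastically dominates a $\mathrm{Bin}(m, \alpha/2)$ random variable.

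Next, I would apply a standard Chernoff/Hoeffding tail bound to conclude $|I_1| \geq (\alpha/3) m$ except with probability $\exp(-\Theta(\alpha m))$, which is $o(1)$ since $m \geq M k \log n \to \infty$. Combining with the deterministic bound on $|I_2^c|$ from the opening paragraph yields $|I_\text{good}| \geq (\alpha/6) m$ with high probability, as desired.

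The argument is mostly routine; there is no real obstacle, but the one subtle point worth flagging is that the deterministic Markov-type bound on $|I_2^c|$ does not depend on $A$ in any refined way, which is important because $\bx$ itself depends on $A$ through the restricted program~\eqref{eq:res-program}. Any attempt to bound $|I_2^c|$ via a concentration inequality over $A$ would require a union bound over possible values of $\bx$, which is avoided entirely by noting that the second condition has the form of a quantile bound that always holds for at most an $\alpha/6$ fraction of indices regardless of how $\bx$ was chosen.
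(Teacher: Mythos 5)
Your proof follows the same architecture as the paper's: a deterministic Markov-type counting bound showing that at most $(\alpha/6)m$ indices violate the second condition (which, as you correctly stress, is essential because $\bx$ depends on $A$), plus a per-row probability bound and a Chernoff bound giving $|I_1|\geq(\alpha/3)m$ with high probability, and finally $|I_\text{good}|\geq(\alpha/3)m-(\alpha/6)m=(\alpha/6)m$. This is, with the same constants, exactly the paper's argument.

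There is, however, one step where your write-up has a genuine (if easily repaired) gap. Your reverse-Markov computation gives $\Pr_{a\sim N(0,1)^n}[\gamma_S(a,x^*)\geq\alpha/2]\geq\alpha/2$, i.e.\ a bound for the \emph{untruncated} Gaussian row distribution. You then assert that the indicators $\mathbf{1}[j\in I_1]$ are Bernoulli with parameter at least $\alpha/2$, but the observed rows $A_j$ are not distributed as $N(0,1)^n$: by Process~\eqref{process} they are rejection-sampled, so their law is the Gaussian reweighted by the survival probability $\gamma_S(a^Tx^*)/\mathbb{E}_a[\gamma_S(a,x^*)]$, and the relevant Bernoulli parameter is $\Pr[\gamma_S(A_jx^*)\geq\alpha/2]$ under this tilted law. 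The conclusion still holds, because the tilting is by an increasing function of the very statistic $\gamma_S(a^Tx^*)$ being thresholded, so the event $\{\gamma_S(a^Tx^*)\geq\alpha/2\}$ can only become more likely after conditioning on survival; equivalently, $\gamma_S(A_jx^*)$ stochastically dominates $\gamma_S(a^Tx^*)$ for $a\sim N(0,1)^n$. This is precisely the one-line remark the paper makes (``since the samples are rejection sampled on $\gamma_S(a^Tx^*)$, it follows that $\Pr[\gamma_S(A_jx^*)\leq\alpha/2]\leq 1-\alpha/2$ as well''), and you should add it; without it, the transfer of the $\alpha/2$ bound from the pre-truncation to the post-truncation distribution is unjustified, even though here it goes in your favor. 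With that sentence inserted, your proof is complete and matches the paper's.
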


\begin{lemma}\label{lemma:goodnorm}
For some constant $\epsilon > 0$, we have that with high probability, $\norm{Ax^* - A\bx}_{I_\text{good}}^2 \geq \epsilon \norm{Ax^* - A\bx}^2$.
\end{lemma}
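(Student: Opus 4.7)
The plan is to exploit the strong uniform isometry property of Theorem~\ref{thm:isom}, which simultaneously controls $\norm{A_{J,V} v}_2$ for \emph{all} sufficiently large row-subsets $J$. The key observation is that both $x^*$ and $\bx$ are supported on $U$, so $\bx - x^* \in \mathbb{R}^U$. Thus all quantities of interest are determined by the action of the $m \times k$ submatrix $A_U$ on a fixed vector in the $k$-dimensional subspace $\mathbb{R}^U$, which is precisely the setting of the isometry statement.

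First, I would invoke Lemma~\ref{lemma:goodsize} to conclude that, with high probability, $|I_\text{good}| \geq (\alpha/6)\, m$, which clears the size threshold required by Theorem~\ref{thm:isom} with parameter $\epsilon' = \alpha/6$. Applying that theorem with $V = U$, we obtain that with probability $1 - e^{-\delta m}$ over $A$ there exist constants $\tau, T > 0$ (depending only on $\alpha$) such that for every $J \subseteq [m]$ with $|J| \geq (\alpha/6)\, m$ and every $v \in \mathbb{R}^U$,
\[\tau \sqrt{m} \, \norm{v}_2 \leq \norm{A_{J,U} v}_2 \leq T \sqrt{m} \, \norm{v}_2.\]
A union bound ensures that this event and the event of Lemma~\ref{lemma:goodsize} hold simultaneously with high probability; I will condition on both below.

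The conclusion then follows in two one-line applications. Applying the lower half of the isometry with $J = I_\text{good}$ and $v = \bx - x^*$ gives $\norm{A(\bx - x^*)}_{I_\text{good}}^2 \geq \tau^2 m \, \norm{\bx - x^*}_2^2$, while the upper half applied with $J = [m]$ and the same $v$ gives $\norm{A(\bx - x^*)}_2^2 \leq T^2 m \, \norm{\bx - x^*}_2^2$. Dividing the two yields $\norm{A(\bx - x^*)}_{I_\text{good}}^2 \geq (\tau/T)^2 \norm{A(\bx - x^*)}_2^2$, so the lemma holds with $\epsilon = (\tau/T)^2 > 0$.

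The only real subtlety---more than a genuine obstacle---is that $I_\text{good}$ is itself a data-dependent subset (it depends on $A$ through both $\gamma_S(A_j x^*)$ and $\bx$, where $\bx$ in turn is a function of $(A,y)$). A fixed-subset isometry bound would not be enough: one cannot simply apply a Gaussian-matrix isometry to a particular $J$ because the $J$ of interest is chosen after seeing $A$. The uniformity over row-subsets in Theorem~\ref{thm:isom} is exactly what resolves this. It is worth noting that, beyond the cardinality lower bound from Lemma~\ref{lemma:goodsize}, none of the structural conditions defining $I_\text{good}$ are actually used in proving this lemma---those conditions are needed only for the sequel where linear-regime behavior of Lemma~\ref{lemma:diff} is invoked.
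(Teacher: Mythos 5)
Your proof is correct and follows essentially the same route as the paper: apply Lemma~\ref{lemma:goodsize} to get $|I_\text{good}| \geq (\alpha/6)m$, then use the uniform-over-subsets isometry of Theorem~\ref{thm:isom} (with $V=U$, since $\bx - x^* \in \mathbb{R}^U$) to lower-bound the norm on $I_\text{good}$ and upper-bound the full norm, yielding the constant $\epsilon = (\tau/T)^2$ (the paper writes the same ratio with its isometry constants defined as squared singular-value bounds). Your remark about the data-dependence of $I_\text{good}$ being handled by the uniformity over row-subsets is exactly the point the paper's theorem is designed for.
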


We now prove the weaker, first-step bound on $\norm{A(\bx - x^*)}_2$. But there is one glaring issue we must address: we made a simplifying assumption that $\norm{\mathbb{E}[Z_{A,\bx}] - \mathbb{E}[Z_{A,x^*}]}$ is small. All we actually know is that $\norm{A_U^T(\mathbb{E}[Z_{A,\bx}] - \mathbb{E}[Z_{A,x^*}]}_2$ is small. And $A_U^T$ has a nontrivial null space.

Here is a sketch of how we resolve this issue. Let $a = A(\bx-x^*)$ and $b = \mu_{A\bx} - \mu_{Ax^*}$; we want to show that if $\norm{a}$ is large then $\norm{A_U^Tb}$ is large. Geometrically, $\norm{A_U^Tb}$ is approximately proportional to the distance from $b$ to the subspace $\text{Null}(A_U^T)$. Oversimplifying for clarity, we know that $|b_j| \geq c|a_j|$ for all $j$. This is by itself insufficient. The key observation is that we also know $\sign(a_j) = \sign(b_j)$ for all $j$. Thus, $b$ lies in a hyperoctant shifted to have corner $ca$. Since $ca$ lies in the row space of $A_U^T$, it's perpendicular to $\text{Null}(A_U^T)$, so the closest point to $\text{Null}(A_U^T)$ in the shifted hyperoctant should be $ca$.

Formalizing this geometric intuition yields the last piece of the proofs of the following theorems. See Section~\ref{proof:dist} for the full proofs.

\begin{theorem}\label{thm:weakdist}
There are positive constants $c'_\text{reg} = c'_\text{reg}(\alpha)$, $M' = M'(\alpha)$, and $C' = C'(\alpha)$ with the following property. Suppose that $\lambda \leq c'_\text{reg}/\sqrt{k}$ and $m \geq M'k\log n$. Then with high probability, $\norm{A_Ux^* - A_U\bx}_2 \leq C'\sqrt{m}$.
\end{theorem}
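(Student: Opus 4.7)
The plan is to use the subgradient optimality condition of Lemma~\ref{lemma:pdw}(a) to reduce the problem to a geometric relation between the vectors $a := A(\bx - x^*) \in \mathrm{colspace}(A_U)$ and $b := \mathbb{E}[Z_{A,\bx}] - \mathbb{E}[Z_{A,x^*}]$. Rewriting the optimality condition gives
\[\tfrac{1}{m}A_U^T b \;=\; -\lambda \bz_U \;-\; \tfrac{1}{m}A_U^T(\mathbb{E}[Z_{A,x^*}] - y).\]
The first term on the right has $\ell_2$ norm at most $\lambda\sqrt{k} \leq c'_\text{reg}$; the second is bounded by $O(\sqrt{k\log n/m}) \leq O(1/\sqrt{M'})$ with high probability by Lemma~\ref{lemma:gradientbound}. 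Choosing $c'_\text{reg}$ small and $M'$ large therefore yields $\norm{\tfrac{1}{m}A_U^T b}_2 \leq C_1$ for an arbitrarily small constant $C_1$.

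The next step would be to translate this into a bound on the projection $Pb$ of $b$ onto $\mathrm{colspace}(A_U)$. By Theorem~\ref{thm:isom}, $A_U^T$ has smallest nonzero singular value at least $\tau\sqrt{m}$, so $\norm{Pb}_2 \leq \norm{A_U^T b}_2/(\tau\sqrt{m}) \leq (C_1/\tau)\sqrt{m}$. Since $a \in \mathrm{colspace}(A_U)$, Cauchy--Schwarz then gives
\[b^T a \;=\; (Pb)^T a \;\leq\; \norm{Pb}_2 \norm{a}_2 \;\leq\; (C_1/\tau)\sqrt{m}\,\norm{a}_2.\]

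A matching lower bound on $b^T a$ comes from Lemma~\ref{lemma:diff}: it states that $\sign(b_j) = \sign(a_j)$ for every $j$ and $|b_j| \geq c\min(1, |a_j|)$ for every $j \in I_\text{good}$. The first fact makes $b_j a_j \geq 0$ pointwise, and together with the second it gives $b^T a \geq c\sum_{j \in I_\text{good}} \min(a_j^2, |a_j|)$. Combined with the upper bound, one is left to deduce $\norm{a}_2 = O(\sqrt{m})$ from $c\sum_{j \in I_\text{good}} \min(a_j^2, |a_j|) \leq (C_1/\tau)\sqrt{m}\,\norm{a}_2$.

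To close the loop, the plan is to follow the shifted-hyperoctant argument sketched in the main text. In the easy regime where $|a_j| \leq 1$ on all good indices, the left-hand sum collapses to $\norm{a_{I_\text{good}}}_2^2 \geq \epsilon\norm{a}_2^2$ by Lemma~\ref{lemma:goodnorm}, and $\norm{a}_2 \leq C'\sqrt{m}$ follows in one line. The hard part will be the complementary regime, in which $|a_j| > 1$ on many good indices so that the pointwise lower bound degrades from $c|a_j|$ to $c$; here the subspace constraint $a \in \mathrm{colspace}(A_U)$ must be exploited more carefully. The key geometric observation from the text handles this: since $\sign(b_j) = \sign(a_j)$ for all $j$, the vector $b$ lies in a translate of a hyperoctant with corner $ca$, and because $ca \in \mathrm{colspace}(A_U) = (\ker A_U^T)^\perp$, the closest point of this shifted region to $\ker A_U^T$ is $ca$ itself. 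This pins $\norm{Pb}_2 \geq c\norm{a}_2$ on the good coordinates, which together with the upper bound $\norm{Pb}_2 \leq (C_1/\tau)\sqrt{m}$ yields $\norm{a}_2 \leq C'\sqrt{m}$, completing the proof.
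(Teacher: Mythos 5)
Your setup is exactly the paper's: the upper bound $\norm{\frac1m A_U^T b}_2 \leq \lambda\sqrt{k} + O(\sqrt{(k\log n)/m})$ from Lemma~\ref{lemma:pdw}(a) and Lemma~\ref{lemma:gradientbound}, the passage to the projection $Pb = A_Uu$ onto the column space of $A_U$ via the isometry of Theorem~\ref{thm:isom}, the pointwise bounds $a_jb_j \geq 0$ and $|b_j| \geq c\min(1,|a_j|)$ on $I_\text{good}$ from Lemma~\ref{lemma:diff}, and Lemma~\ref{lemma:goodnorm} to retain a constant fraction of $\norm{a}_2^2$. The easy regime you dispatch correctly. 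But the final step, which you acknowledge is the crux, has a genuine gap: you invoke the shifted-hyperoctant picture, i.e.\ ``$b$ lies in a translate of a hyperoctant with corner $ca$, hence $\norm{Pb}_2 \geq c\norm{a}_2$,'' but that picture requires $|b_j| \geq c|a_j|$ coordinatewise (with matching signs) for \emph{all} $j$. That premise is exactly what is unavailable in the hard regime: for $j \notin I_\text{good}$ you have only sign agreement, and for $j \in I_\text{good}$ with $|a_j| > 1$ the bound degrades to $|b_j| \geq c$, which does not place $b - ca$ in the hyperoctant. The paper itself flags this sketch as an oversimplification (``Oversimplifying for clarity, we know that $|b_j|\geq c|a_j|$ for all $j$''), so restating it does not close the argument; as written, your conclusion $\norm{Pb}_2 \geq c\norm{a}_2$ would already prove the theorem unconditionally, which should be a red flag.

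The actual resolution in the paper uses the second condition built into the definition of $I_\text{good}$, namely $|a_j|^2 \leq (6/(\alpha m))\norm{a}_2^2$, together with a proof by contradiction. Assume $\norm{a}_2 > C'\sqrt{m}$ with $C' = \sqrt{6/\alpha}$. Then for every $j \in I_\text{good}$ one has $1/|a_j| \geq \sqrt{\alpha/6}\,\sqrt{m}/\norm{a}_2 < 1$, so $\min(1,|a_j|) = |a_j|\min(1/|a_j|,1) \geq \sqrt{\alpha/6}\,(\sqrt{m}/\norm{a}_2)\,|a_j|$, i.e.\ the linear lower bound survives on \emph{all} good coordinates with the degraded slope $c\sqrt{\alpha/6}\,\sqrt{m}/\norm{a}_2$. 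Summing $a_jb_j \geq c\sqrt{\alpha/6}\,(\sqrt{m}/\norm{a}_2)\,a_j^2$ over $I_\text{good}$ and applying Lemma~\ref{lemma:goodnorm} gives $\langle a,b\rangle \geq c\epsilon\sqrt{\alpha/6}\,\sqrt{m}\,\norm{a}_2$, which contradicts your upper bound $\langle a,b\rangle \leq (C_1/\tau)\sqrt{m}\,\norm{a}_2$ once $c'_\text{reg}$ is chosen small and $M'$ large (so that $C_1 < c\epsilon\tau\sqrt{\alpha/6}$). Hence $\norm{a}_2 \leq C'\sqrt{m}$. With this replacement for your last paragraph, the rest of your argument goes through and is essentially the paper's proof.
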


\begin{theorem}\label{thm:dist}
There are positive constants $c''_\text{reg} = c''_\text{reg}(\alpha)$, $M'' = M''(\alpha)$, and $C'' = C''(\alpha)$ with the following property. Suppose that $\lambda \leq c''_\text{reg}/\sqrt{k}$ and $m \geq M''k\log n$. Then $\norm{x^* - \bx}_2 \leq C''(\lambda\sqrt{k} + \sqrt{(k\log n)/m})$ with high probability.
\end{theorem}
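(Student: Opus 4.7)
\textbf{Proof proposal for Theorem~\ref{thm:dist}.}

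The plan is to start from the PDW identity~\eqref{eq:main} and upgrade the crude bound $\norm{A(\bx-x^*)}_2 = O(\sqrt{m})$ supplied by Theorem~\ref{thm:weakdist} into the sharper estimate $\norm{A(\bx-x^*)}_2 = O(\sqrt{m}\,\lambda\sqrt{k} + \sqrt{k\log n})$, which via the $\Theta(\sqrt{m})$-isometry of $A_U$ (Theorem~\ref{thm:isom}) yields the claimed bound on $\norm{\bx - x^*}_2$. Throughout I write $a := A(\bx - x^*) = A_U(\bx_U - x^*_U)$ and $b := \mu_{A\bx} - \mu_{Ax^*}$, and let $V := \mathrm{ColSpace}(A_U)$.

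First I would collect bounds on the two ``known'' terms of~\eqref{eq:main}. We have $\norm{\lambda\bz_U}_2 \leq \lambda\sqrt{k}$ trivially from $\norm{\bz_U}_\infty \leq 1$, and Lemma~\ref{lemma:gradientbound} gives $\frac{1}{m}\norm{A_U^T(\mu_{Ax^*} - y)}_2 = O(\sqrt{(k\log n)/m})$ with high probability. Rearranging~\eqref{eq:main} yields
\[
\norm{A_U^T b}_2 \;\leq\; m\lambda\sqrt{k} + O(\sqrt{km\log n}).
\]
Since $a \in V$, one has $A_U^T b = A_U^T P_V b$ and Theorem~\ref{thm:isom} gives $\norm{A_U^T b}_2 \geq \Omega(\sqrt{m})\,\norm{P_V b}_2$; so it suffices to prove $\norm{P_V b}_2 = \Omega(\norm{a}_2)$, as this then yields $\norm{a}_2 = O(\sqrt{m}\,\lambda\sqrt{k} + \sqrt{k\log n})$ and a final application of the lower-isometry of $A_U$ gives the desired bound on $\norm{\bx - x^*}_2$.

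Next I would install the linear regime of Lemma~\ref{lemma:diff} on $I_\text{good}$. The weak bound $\norm{a}_2 = O(\sqrt{m})$ from Theorem~\ref{thm:weakdist}, combined with the definition of $I_\text{good}$, forces $|a_j| = O(1)$ and $\gamma_S(A_j x^*) \geq \alpha/2$ for every $j \in I_\text{good}$. Thus Lemma~\ref{lemma:diff} gives $|b_j| \geq c|a_j|$ on $I_\text{good}$ with matching signs, and also $\sign(b_j) = \sign(a_j)$ for \emph{all} $j \in [m]$.

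The main obstacle — and the reason the weak bound is needed at all — is the geometric step $\norm{P_V b}_2 = \Omega(\norm{a}_2)$. I would realize the hyperoctant intuition sketched before Theorem~\ref{thm:weakdist} in clean inner-product form: using $P_V a = a$ and Cauchy--Schwarz,
\[
\norm{P_V b}_2 \;\geq\; \frac{\langle b, a\rangle}{\norm{a}_2} \;=\; \frac{1}{\norm{a}_2}\sum_{j=1}^m a_j b_j.
\]
Sign matching makes every summand nonnegative, and on $I_\text{good}$ each summand is at least $c\,a_j^2$. Lemma~\ref{lemma:goodnorm} then gives $\sum_{j \in I_\text{good}} a_j^2 \geq \epsilon\,\norm{a}_2^2$, so $\norm{P_V b}_2 \geq c\epsilon\,\norm{a}_2$. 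Chaining this with the displayed upper bound on $\norm{A_U^T b}_2$ and dividing by $\sqrt{m}$ via Theorem~\ref{thm:isom} produces the advertised inequality on $\norm{\bx - x^*}_2$, completing the proof.
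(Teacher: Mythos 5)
Your proposal is correct and follows essentially the same route as the paper's proof: it bootstraps from the weak bound of Theorem~\ref{thm:weakdist} into the linear regime of Lemma~\ref{lemma:diff} on $I_\text{good}$, uses sign-matching plus Lemma~\ref{lemma:goodnorm} and Cauchy--Schwarz to lower bound the component of $b$ in the column space of $A_U$ by $\Omega(\norm{a}_2)$, and then converts the bound on $\norm{A_U^T b}_2$ from Equation~\ref{eq:main} into the claimed estimate via the isometry of $A_U$ (Corollary~\ref{cor:rowisom}). Your $P_V b$ is exactly the paper's $A_U u$ in its orthogonal decomposition $b = v + A_U u$, so the two arguments coincide up to phrasing.
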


\section{Proof of statistical recovery}\label{section:optdual}

Extend $\bz$ to $\mathbb{R}^n$ by defining \[\bz_{U^c} = -\frac{1}{\lambda m} A_{U^c}^T(\mathbb{E} Z_{A,\bx} - y).\]

We would like to show that $\norm{z_{U^c}}_\infty < 1$. Since $A_{U^c}^T$ is independent of $\mathbb{E}[Z_{A,\bx}]-y$, each entry of $A_{U^c}^T(\mathbb{E}[Z_{A,\bx}]-y)$ is Gaussian with standard deviation $\norm{\mathbb{E}[Z_{A,\bx}]-y}_2$. It turns out that a bound of $O(\lambda \sqrt{km} + \sqrt{m})$ suffices. To get this bound, we decompose \[\mathbb{E}[Z_{A,\bx}] - y = A(\bx-x^*) + \mathbb{E} R_{A,\bx} - (y - Ax^*)\] and bound each term separately. Here we are defining $R_t = Z_t - t$, and $R_{a,x} = Z_{a,x} - a^T x$ and $R_{A,x} = Z_{A,x} - Ax$ similarly.

We present the proof of the following lemmas in Section~\ref{proof:skewnorm} and Section~\ref{proof:ynorm} respectively. 

\begin{lemma}\label{lemma:skewnorm}
There is a constant $c = c(\alpha)$ such that under the conditions of Theorem~\ref{thm:dist}, with high probability over $(A,y)$, $\norm{\mathbb{E}[R_{A,\bx}]}_2^2 \leq cm.$
\end{lemma}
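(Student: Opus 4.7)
My plan rests on two ingredients: a pointwise bound that converts $(\mu_t-t)^2$ into $\log(1/\gamma_S(t))$, and a Jensen-type comparison that replaces $\bx$ with the known reference vector $x^*$ up to controllable error terms. With these in place, the lemma reduces to an i.i.d.\ concentration inequality whose proof exploits the reweighting of the design distribution induced by rejection sampling.

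First, I would establish the pointwise inequality $(\mu_t - t)^2 \leq C(\log(1/\gamma_S(t)) + 1)$ for an absolute constant $C$. Since $\mu_t - t = \mathbb{E}[Z \mid Z + t \in S]$ with $Z \sim N(0,1)$, Markov plus the Mills-ratio tail $\Pr[|Z| > K] \leq 2e^{-K^2/2}$ yields $|\mathbb{E}[Z\mid Z+t\in S]| \leq O(\sqrt{\log(1/\gamma_S(t))} + 1)$ after splitting the conditional integral at $K^{\ast} = \sqrt{2\log(1/\gamma_S(t))}$. Summing over $j$, it suffices to prove $\sum_{j}\log(1/\gamma_S(A_j\bx)) = O(m)$ with high probability.

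Next, I would relate $\gamma_S(A_j\bx)$ to $\gamma_S(A_jx^*)$ via a one-line Jensen computation. From $\gamma_S(t+h) = \gamma_S(t)\cdot \mathbb{E}_{Z \sim N(t,1;S)}[e^{h(Z-t) - h^{2}/2}]$ and convexity of the exponential, $\log\gamma_S(t+h) \geq \log\gamma_S(t) + h(\mu_t - t) - h^{2}/2$. Setting $t = A_jx^*$ and $h = A_j(\bx-x^*)$, negating, and summing:
\[
\sum_{j}\log\!\bigl(1/\gamma_S(A_j\bx)\bigr) \;\leq\; \sum_{j}\log\!\bigl(1/\gamma_S(A_jx^*)\bigr) \;-\; (\bx-x^*)^{T} A^{T}\mathbb{E}[R_{A,x^*}] \;+\; \tfrac{1}{2}\norm{A(\bx-x^*)}_{2}^{2}.
\]
The quadratic term is $O(k\log n) = O(m)$ using Theorem~\ref{thm:dist} and the isometry (Theorem~\ref{thm:isom}). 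For the cross term, Cauchy--Schwarz gives $|(\bx-x^*)^{T}A^{T}\mathbb{E}[R_{A,x^*}]| \leq \norm{A(\bx-x^*)}_{2}\cdot \norm{\mathbb{E}[R_{A,x^*}]}_{2}$, and it becomes $O(m)$ once I show $\norm{\mathbb{E}[R_{A,x^*}]}_{2} = O(\sqrt{m})$, which in turn reduces to $\sum_{j}\log(1/\gamma_S(A_jx^*)) = O(m)$ via the pointwise bound of the first step.

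The main obstacle is therefore this concentration statement, and it is where the structure of the truncated model really enters. The rejection sampling process makes the surviving rows $A_j$ i.i.d.\ from the density $\gamma_S(a,x^*)\phi(a) / \mathbb{E}_{a\sim N(0,1)^{n}}[\gamma_S(a,x^*)]$. Combining with Assumption~\ref{assumption:csp} yields the elegant tail bound
\[
\Pr\!\bigl[\gamma_S(A_j,x^*) \leq \delta\bigr] \;=\; \frac{\mathbb{E}_{a\sim N(0,1)^{n}}\!\bigl[\gamma_S(a,x^*)\,\mathbf{1}\{\gamma_S(a,x^*)\leq\delta\}\bigr]}{\mathbb{E}_{a\sim N(0,1)^{n}}[\gamma_S(a,x^*)]} \;\leq\; \frac{\delta}{\alpha},
\]
so $\log(1/\gamma_S(A_j,x^*))$ is a non-negative sub-exponential random variable with mean $O(\log(1/\alpha))$. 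Bernstein's inequality then gives $\sum_{j}\log(1/\gamma_S(A_jx^*)) = O(m)$ with high probability, and consequently $\norm{\mathbb{E}[R_{A,x^*}]}_{2} = O(\sqrt{m})$ as well. Putting all pieces together yields $\norm{\mathbb{E}[R_{A,\bx}]}_{2}^{2} \leq O\!\bigl(\sum_{j}\log(1/\gamma_S(A_j\bx)) + m\bigr) = O(m)$, as required.
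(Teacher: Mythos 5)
Your proposal is correct, and its skeleton coincides with the paper's: bound each $(\mathbb{E}R_{A_j,\bx})^2$ by $O(\log(1/\gamma_S(A_j\bx))+1)$, transfer from $\bx$ to $x^*$ using the distance bound of Theorem~\ref{thm:dist} together with the isometry of Theorem~\ref{thm:isom}, and control $\sum_j \log(1/\gamma_S(A_jx^*))$ by concentration exploiting the rejection-sampling reweighting and Assumption~\ref{assumption:csp}. The differences are in how the three ingredient inequalities are obtained. The paper simply invokes Lemma~\ref{lemma:rvar} (via $(\mathbb{E}R_t)^2\le\mathbb{E}[R_t^2]$), Lemma~\ref{lemma:survival-decay}, and Lemma~\ref{lemma:total-survival} (proved by the MGF bound $\mathbb{E}[e^{X_j}]\le\alpha^{-1}$ plus Markov), whereas you re-derive analogues from scratch: the Mills-ratio bound $|\mu_t-t|\le O(\sqrt{\log(1/\gamma_S(t))}+1)$, the tilting identity $\gamma_S(t+h)=\gamma_S(t)\,\mathbb{E}[e^{h(Z_t-t)-h^2/2}]$ with Jensen, and the tail bound $\Pr[\gamma_S(A_j,x^*)\le\delta]\le\delta/\alpha$ with Bernstein. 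The main structural deviation is that your Jensen comparison leaves a linear cross term $-(\bx-x^*)^TA^T\mathbb{E}[R_{A,x^*}]$, which you must then kill by Cauchy--Schwarz together with a separate proof that $\norm{\mathbb{E}[R_{A,x^*}]}_2=O(\sqrt{m})$ (itself another pass through the concentration bound); the paper's Lemma~\ref{lemma:survival-decay} avoids this entirely because it absorbs that cross term pointwise (in effect by AM--GM, at the price of the factor $2$ on $\log(1/\gamma_S(t^*))$ and an additive constant), making its summation one line. What your route buys is self-containedness and a sharper coefficient $1$ on $\sum_j\log(1/\gamma_S(A_jx^*))$, neither of which is needed here; what the paper's route buys is brevity, since the transfer step requires no auxiliary bound at $x^*$. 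One cosmetic point: under the conditions of Theorem~\ref{thm:dist} one only gets $\norm{A(\bx-x^*)}_2^2=O(\lambda^2 km+k\log n)=O(m)$ rather than $O(k\log n)$ as you wrote, but $O(m)$ is all your argument uses, so nothing breaks.
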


\begin{lemma}\label{lemma:ynorm}
There is a constant $c_y = c_y(\alpha)$ such that $\norm{R_{A,x^*}}_2^2 \leq c_ym$ with high probability.
\end{lemma}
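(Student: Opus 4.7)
}

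The plan is to observe that the coordinates of $R_{A,x^*}$ are i.i.d.\ copies of a single scalar random variable, bound the second moment of that variable using the constant survival probability assumption, and finally apply a sub-exponential concentration inequality to control the sum. Specifically, by the generative process~\eqref{process}, each pair $(A_j, y_j)$ is an independent draw conditioned on $y_j \in S$, so $(R_{A,x^*})_j = Z_{A_j,x^*} - A_j x^*$ are i.i.d.\ across $j \in [m]$. Writing $R := (R_{A,x^*})_1$, the goal reduces to showing $\sum_{j=1}^m R_j^2 \leq c_y m$ with high probability for i.i.d.\ copies $R_1,\dots,R_m$ of $R$.

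The next step is to view the conditional draw via rejection sampling: let $a \sim N(0,1)^n$ and $\eta \sim N(0,1)$ be independent, and condition on the event $E := \{a^T x^* + \eta \in S\}$. Then $R$ has the same distribution as $\eta \mid E$. Hence
\[
\mathbb{E}[R^2] \;=\; \frac{\mathbb{E}\bigl[\eta^2 \,\mathbf{1}[E]\bigr]}{\Pr[E]} \;\leq\; \frac{\mathbb{E}[\eta^2]}{\Pr[E]} \;=\; \frac{1}{\mathbb{E}_a[\gamma_S(a,x^*)]} \;\leq\; \frac{1}{\alpha}
\]
by Assumption~\ref{assumption:csp}. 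The same rejection-sampling comparison gives a sub-Gaussian tail for $R$: for any $t \geq 0$,
\[
\Pr[|R| \geq t] \;=\; \frac{\Pr[|\eta| \geq t,\, E]}{\Pr[E]} \;\leq\; \frac{\Pr[|\eta|\geq t]}{\alpha} \;\leq\; \frac{2}{\alpha}\,e^{-t^2/2}.
\]
Consequently $R^2$ is sub-exponential with parameter $O(1/\alpha)$, and its mean is at most $1/\alpha$.

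Finally, applying Bernstein's inequality to the i.i.d.\ sum $\sum_{j=1}^m R_j^2$, for any constant $c_y$ sufficiently larger than $1/\alpha$ we obtain
\[
\Pr\!\Bigl[\tfrac{1}{m}\sum_{j=1}^m R_j^2 > c_y\Bigr] \;\leq\; \exp(-\Omega(m)),
\]
which is $o(1)$ since $m \geq M k \log n \to \infty$; this yields the claimed bound with high probability. The argument is essentially routine once the i.i.d.\ structure is identified, so the only step requiring care is the moment bound, where the constant survival probability assumption is used in an essential way; no step looks like a genuine obstacle.
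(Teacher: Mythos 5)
Your proof is correct, and it takes a somewhat different route from the paper's. You exploit the rejection-sampling representation coordinatewise: since the conditional law of the noise satisfies $\Pr[R\in\cdot]\le \Pr[\eta\in\cdot]/\Pr[E]\le \alpha^{-1}\Pr[\eta\in\cdot]$, you get $\mathbb{E}[R^2]\le 1/\alpha$ and a Gaussian-type tail up to the factor $2/\alpha$, hence $R^2$ is sub-exponential with parameters depending only on $\alpha$ (equivalently, one can bound the MGF directly, $\mathbb{E}[e^{\lambda R^2}]\le \alpha^{-1}(1-2\lambda)^{-1/2}$ for $\lambda<1/2$), and Bernstein applied to the i.i.d.\ sum of the $m$ samples gives the bound with failure probability $e^{-\Omega(m)}$; the only housekeeping points are that Bernstein should be applied to the centered variables $R_j^2-\mathbb{E}[R^2]$ and that the i.i.d.\ structure of the returned samples is exactly what Process~\eqref{process} provides. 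The paper instead argues by stochastic domination: it keeps all pre-rejection draws, notes that the number of draws needed to collect $m$ accepted samples is at most $2m/\alpha$ with high probability by a Chernoff bound (using $\Pr[E]\ge\alpha$), and then bounds the accepted noise mass by the chi-squared norm of at most $2m/\alpha$ untruncated standard Gaussians, yielding $c_y=4/\alpha$. Both arguments lean on the same fact, survival probability at least $\alpha$, and both give $c_y=O(1/\alpha)$ with exponentially small failure probability; yours is more self-contained at the level of a single coordinate (a likelihood-ratio comparison plus a standard sub-exponential concentration inequality), while the paper's coupling avoids any tail analysis of the truncated variable and reduces everything to chi-squared concentration for untruncated Gaussians.
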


Combining the above lemmas with the bound on $\norm{\bx-x^*}_2$ from the previous section, we get the desired theorem. See Section~\ref{proof:primaldual} for the full proof.

\begin{theorem}\label{thm:primaldual}
There are constants $M = M(\alpha)$, $\sigma = \sigma(\alpha)$, and $d = d(\alpha)$ with the following property. Suppose $m \geq Mk\log n$, and $\lambda = \sigma \sqrt{(\log n)/m}$. Then with high probability we have $\norm{\bz_{U^c}}_\infty < 1$.
\end{theorem}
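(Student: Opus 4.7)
The plan is to exploit the fact that $A_{U^c}$ is statistically independent of the residual $\mathbb{E} Z_{A,\bx} - y$, so that each entry of $A_{U^c}^T(\mathbb{E} Z_{A,\bx} - y)$ is a centered Gaussian whose variance can be controlled. Since $\supp(x^*) \subseteq U$, the response $y = Ax^* + \eta = A_U x^*_U + \eta$ depends only on $A_U$ and $\eta$; likewise the objective of the restricted program~\ref{eq:res-program} depends only on $A_U$, $y$, and $x_U$, so $\bx$ is a measurable function of $(A_U, \eta)$. Consequently $\mathbb{E} Z_{A,\bx} - y$ is a function of $(A_U,\eta)$, and conditional on $(A_U,\eta)$ each coordinate $(A_{U^c}^T(\mathbb{E} Z_{A,\bx}-y))_j$ for $j \in U^c$ is a centered Gaussian with variance $\norm{\mathbb{E} Z_{A,\bx}-y}_2^2$.

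It therefore suffices to establish $\norm{\mathbb{E} Z_{A,\bx} - y}_2^2 = O(m)$ with high probability over $(A_U,\eta)$. I would do this via the decomposition
\[\mathbb{E} Z_{A,\bx} - y \;=\; A(\bx - x^*) \;+\; \mathbb{E} R_{A,\bx} \;-\; R_{A,x^*}\]
and bound each piece separately. Theorem~\ref{thm:dist} yields $\norm{\bx - x^*}_2 \leq C''(\lambda\sqrt{k} + \sqrt{(k\log n)/m})$, which for $\lambda = \sigma\sqrt{(\log n)/m}$ is $O(\sqrt{(k\log n)/m})$; combined with the upper isometry from Theorem~\ref{thm:isom} applied to $V = U$, $J = [m]$, this gives $\norm{A(\bx-x^*)}_2 = \norm{A_U(\bx_U - x^*_U)}_2 \leq T\sqrt{m}\norm{\bx - x^*}_2 = O(\sqrt{k\log n})$, which is at most $O(\sqrt{m/M})$ under the hypothesis $m \geq Mk\log n$. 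Lemmas~\ref{lemma:skewnorm} and~\ref{lemma:ynorm} then give $\norm{\mathbb{E} R_{A,\bx}}_2 = O(\sqrt{m})$ and $\norm{R_{A,x^*}}_2 = O(\sqrt{m})$ respectively. A union bound over these high-probability events yields $\norm{\mathbb{E} Z_{A,\bx} - y}_2^2 \leq C m$ for a constant $C = C(\alpha)$.

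Finishing is a standard Gaussian tail estimate: conditional on the event above, for each $j \in U^c$,
\[\Pr\!\bigl[\,|(A_{U^c}^T(\mathbb{E} Z_{A,\bx}-y))_j| \geq \lambda m \,\bigm|\, A_U,\eta\bigr] \;\leq\; 2\exp\!\bigl(-\lambda^2 m/(2C)\bigr) \;=\; 2\exp\!\bigl(-\sigma^2 (\log n)/(2C)\bigr).\]
A union bound over the at most $n$ coordinates of $U^c$ gives probability at most $2 n^{1-\sigma^2/(2C)}$. Choosing $\sigma = \sigma(\alpha)$ large enough that $\sigma^2/(2C) > 1$, and then $M = M(\alpha)$ large enough to make $\lambda\sqrt{k}$ satisfy the hypothesis $\lambda \leq c''_{\text{reg}}/\sqrt{k}$ of Theorem~\ref{thm:dist}, drives this bound to $o(1)$, which proves $\norm{\bz_{U^c}}_\infty < 1$ with high probability.

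The only real subtlety is the independence argument at the start: the whole proof collapses to a Gaussian tail bound precisely because $\bx$ is confined to coordinates in $U$ and the measurement matrix is Gaussian, so $A_{U^c}$ plays no role whatsoever in determining $(\bx, y)$. The remaining work is bookkeeping on constants to guarantee that $M$ can be chosen (depending on $\alpha$ and the previously-named constants) so that all hypotheses fire simultaneously.
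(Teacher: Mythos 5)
Your proposal is correct and follows essentially the same route as the paper's proof: the same decomposition $\mathbb{E} Z_{A,\bx} - y = A(\bx - x^*) + \mathbb{E} R_{A,\bx} - (y - Ax^*)$ bounded via Theorem~\ref{thm:dist} with Theorem~\ref{thm:isom}, Lemma~\ref{lemma:skewnorm}, and Lemma~\ref{lemma:ynorm}, followed by the independence of $A_{U^c}$ from $(\bx, y)$, a Gaussian tail bound, and a union bound over $U^c$. The only difference is cosmetic bookkeeping in the order the constants $\sigma$ and $M$ are fixed (the paper picks $\sigma = 4(\sqrt{c}+\sqrt{c_y})$ first so the $A(\bx-x^*)$ term is absorbed by taking $M$ large, which resolves the mild circularity your $C$ would otherwise have through its dependence on $\sigma$).
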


As an aside that we'll use later, this proof can be extended to any random vector near $\bx$ with support contained in $U$ (proof in Section~\ref{proof:slackness}).

\begin{theorem}\label{thm:slackness}
There are constants $M = M(\alpha)$, $\sigma = \sigma(\alpha)$, and $d = d(\alpha)$ with the following property. Suppose $m \geq Mk\log n$ and $\lambda = \sigma\sqrt{(\log n)/m}$. If $X \in \mathbb{R}^n$ is a random variable with $\supp(X) \subseteq U$ always, and $\norm{\bx - X}_2 \leq 1/m$ with high probability, then with high probability $\norm{\frac{1}{m}A_{U^c}(\mathbb{E} Z_{A,X} - y)}_\infty \leq \lambda/2$.
\end{theorem}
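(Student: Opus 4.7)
The plan is to follow the proof of Theorem~\ref{thm:primaldual} almost verbatim, with an added perturbation step that transfers the bound from $\bx$ to $X$. Split
\[\tfrac{1}{m}A_{U^c}^T(\mathbb{E} Z_{A,X} - y) = \tfrac{1}{m}A_{U^c}^T(\mathbb{E} Z_{A,X} - \mathbb{E} Z_{A,\bx}) + \tfrac{1}{m}A_{U^c}^T(\mathbb{E} Z_{A,\bx} - y),\]
and aim to bound each summand in $\ell_\infty$ by $\lambda/4$ with high probability.

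For both summands I would exploit the independence structure that drives Theorem~\ref{thm:primaldual}. Since $\supp(\bx),\supp(X)\subseteq U$, we have $A\bx = A_U\bx_U$ and $AX = A_U X_U$, so $\mathbb{E} Z_{A,\bx}$ and $\mathbb{E} Z_{A,X}$ depend on $A$ only through $A_U$; and the truncation event $y_i\in S$ depends only on $A_{i,U}$ and the noise, so the columns of $A_{U^c}$ in the surviving sample are i.i.d.\ standard Gaussian, independent of $(A_U,y)$ and, assuming $X$ is derived from $(A_U,y)$ plus independent randomness, of $X$ as well. Conditioning on $(A_U,y,X)$, every coordinate of $A_{U^c}^T(\mathbb{E} Z_{A,X}-y)$ is a mean-zero Gaussian with variance $\norm{\mathbb{E} Z_{A,X} - y}_2^2$, so Gaussian tail bounds combined with a union bound over $U^c$ give $\norm{A_{U^c}^T(\mathbb{E} Z_{A,X}-y)}_\infty \leq O(\sqrt{\log n})\,\norm{\mathbb{E} Z_{A,X}-y}_2$ with high probability (and analogously with $\bx$ substituted). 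It therefore suffices to control the two $\ell_2$ norms.

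The norm $\norm{\mathbb{E} Z_{A,\bx}-y}_2$ is exactly what the proof of Theorem~\ref{thm:primaldual} bounds via the decomposition $\mathbb{E} Z_{A,\bx} - y = A(\bx-x^*) + \mathbb{E} R_{A,\bx} - R_{A,x^*}$ together with Theorem~\ref{thm:dist} and Lemmas~\ref{lemma:skewnorm} and \ref{lemma:ynorm}, giving $O(\lambda\sqrt{km}+\sqrt{m})$; the resulting contribution $O(\lambda\sqrt{(k\log n)/m}+\sqrt{(\log n)/m})$ is at most $\lambda\cdot(O(1/\sqrt{M})+O(1/\sigma))$, hence $\leq \lambda/4$ once $M$ and $\sigma$ are large enough. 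For the perturbation $\norm{\mathbb{E} Z_{A,X}-\mathbb{E} Z_{A,\bx}}_2$, Lemma~\ref{lemma:derivative} combined with the mean value theorem gives that the $j$th coordinate equals $\Var(Z_{\xi_j})\,A_j(X-\bx)$ for some $\xi_j$ between $A_j\bx$ and $A_jX$; standard bounds on $\Var(Z_t)$ under Assumption~\ref{assumption:csp}, together with the operator-norm bound $\norm{A_U}_{\mathrm{op}} = O(\sqrt{m})$ coming from Theorem~\ref{thm:isom}, then yield $\norm{\mathbb{E} Z_{A,X}-\mathbb{E} Z_{A,\bx}}_2 \leq O(\sqrt{m})\,\norm{X-\bx}_2 \leq O(1/\sqrt{m})$, which after the $\sqrt{\log n}/m$ factor contributes only $o(\lambda)$, comfortably below $\lambda/4$.

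The main obstacle is the perturbation step: although $\norm{X-\bx}_2 \leq 1/m$ is small, the map $x\mapsto \mathbb{E} Z_{A,x}$ is nonlinear and the mean-value argument requires a uniform Lipschitz bound on $\mu_t$ across the relevant range. This has to be pieced together from Lemma~\ref{lemma:derivative} plus truncated-Gaussian variance control under Assumption~\ref{assumption:csp}, with careful handling (via the isometry bounds on $A_U$) of the atypical rows where the local survival probability $\gamma_S(A_j\bx)$ is small. The secondary point---independence of $A_{U^c}$ and $X$---is automatic because every estimator we consider is a function of the post-truncation sample, whose $A_{U^c}$ columns are pure i.i.d.\ Gaussian noise independent of the parts of the data actually used.
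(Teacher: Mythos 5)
Your overall architecture matches the paper's: split $\frac{1}{m}A_{U^c}^T(\mathbb{E} Z_{A,X}-y)$ into the $\bx$-residual, handled by the machinery of Theorem~\ref{thm:primaldual}, plus the perturbation $\frac{1}{m}A_{U^c}^T(\mathbb{E} Z_{A,X}-\mathbb{E} Z_{A,\bx})$. But there are two genuine gaps in how you bound the perturbation. First, and most importantly, you treat the entries of $A_{U^c}$ as fresh Gaussians independent of $X$ ("assuming $X$ is derived from $(A_U,y)$ plus independent randomness", later asserted to be "automatic"). The theorem grants no such independence: $X$ is an arbitrary random vector supported on $U$ that is close to $\bx$, and in the intended application (Lemma~\ref{lemma:sc}, ultimately applied to points derived from the PSGD trajectory) $X$ is a function of the \emph{entire} matrix $A$, including $A_{U^c}$, so conditioning on $(A_U,y,X)$ does not leave $A_{U^c}$ Gaussian. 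The paper sidesteps this entirely: the perturbation term is bounded \emph{deterministically} (on high-probability events of $A$ alone) by Cauchy--Schwarz, $\frac{1}{m}|(A^T)_i(\mathbb{E} Z_{A,X}-\mathbb{E} Z_{A,\bx})| \leq \frac{1}{m}\norm{(A^T)_i}_2\norm{\mathbb{E} Z_{A,X}-\mathbb{E} Z_{A,\bx}}_2 \leq \frac{1}{m}\cdot T\sqrt{m}\cdot O(1) = O(1/\sqrt{m})$, which is $o(\lambda)$ since $\lambda = \sigma\sqrt{(\log n)/m}$; Gaussian concentration (and hence independence) is needed only for the $\bx$-residual, where the residual depends only on $(A_U,y)$.

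Second, your Lipschitz step claims $\norm{\mathbb{E} Z_{A,X}-\mathbb{E} Z_{A,\bx}}_2 \leq O(\sqrt{m})\norm{X-\bx}_2$ via a uniform $O(1)$ bound on $\Var(Z_{\xi_j})$. For a truncated Gaussian this is false in general: when the local survival probability $\gamma_S(t)$ is tiny (e.g.\ $S$ a union of far-apart intervals), $\Var(Z_t)$ can be as large as $2\log(1/\gamma_S(t))+4$. You flag this as "the main obstacle" but do not resolve it. The paper's fix is to settle for $\Var(Z_t)\leq O(m)$ along the segments $[A_jX, A_j\bx]$, obtained from Lemmas~\ref{lemma:rvar}, \ref{lemma:survival-decay}, and~\ref{lemma:total-survival} together with $\norm{X-x^*}_2\leq 2$ and the isometry bound on $\norm{A(X-x^*)}_2$; combined with $\norm{A(X-\bx)}_2^2 \leq T^2 m\norm{X-\bx}_2^2 \leq T^2/m$, this still gives $\norm{\mathbb{E} Z_{A,X}-\mathbb{E} Z_{A,\bx}}_2 = O(1)$, which is all the Cauchy--Schwarz step needs. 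So your decomposition is salvageable, but as written the proof both assumes a hypothesis the statement does not provide (and which fails in the application) and relies on a variance bound that does not hold.
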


Returning to the goal of this section, it remains to show that $A_U^T A_U$ is invertible with high probability. But this follows from the isometry guarantee of Theorem~\ref{thm:isom}. Our main statistical result, Proposition \ref{prop:statistical}, now follows.

\begin{namedproof}{Proposition~\ref{prop:statistical}}
Take $M$, $\sigma$, and $d$ as in the statement of Theorem~\ref{thm:primaldual}. Let $m \geq Mk \log n$ and $\lambda = \sigma \sqrt{(\log n)/m}$. Let $\hat{x} \in \mathbb{R}^n$ be any optimal solution to the regularized program, and let $\bx \in \mathbb{R}^U$ be any solution to the restricted program. By Theorem~\ref{thm:primaldual}, with high probability we have $\norm{x^* - \bx} \leq d\sqrt{(k\log n)/m}$ and $\norm{\bz_{U^c}} < 1$; and by Theorem~\ref{thm:isom}, $A_U^T A_U$ is invertible. So by Lemma~\ref{lemma:pdw}, it follows that $\bx = \hat{x}$. Therefore $\norm{x^* - \hat{x}} \leq d\sqrt{(k \log n)/m}$.
\end{namedproof}

\section{Primal-dual witness method}\label{appendix:pdw}

\begin{namedproof}{Lemma~\ref{lemma:gradient}}
For a single sample $(A_j, y_j)$, the partial derivative in direction $x_i$ is
\begin{align*}
\frac{\partial}{\partial x_i} \nll(x; A_j, y_j)
&= A_{ji}(A_jx - y) + \frac{\frac{\partial}{\partial x_i} \int_S e^{-(A_jx - z)^2/2} \, dz}{\int_S e^{-(A_jx - z)^2/2} \, dz} \\
&= A_{ji}(A_jx - y) - \frac{\int_S A_{ji} (A_jx - z) e^{-(A_jx - z)^2/2} \, dz}{\int_S e^{-(A_jx - z)^2/2} \, dz} \\
&= A_{ji}(A_jx - y) - \mathbb{E}[A_{ji} (A_jx - Z_{A_jx})]
\end{align*}
where expectation is taken over the random variable $Z_{A_jx}$ (for fixed $A_j$). Simplifying yields the expression \[\nabla \nll(x; A_j, y_j) = A_j(\mathbb{E}[Z_{A_jx}] - y).\]
The second partial derivative of $\nll(x;A_j,y_j)$ in directions $x_{i_1}$ and $x_{i_2}$ is therefore
\begin{align*}
\frac{\partial^2}{\partial x_{i_1} \partial x_{i_2}} \nll(x;A_j,y_j)
&= \frac{\partial}{\partial x_{i_1}} A_{ji_2}(\mathbb{E}[Z_{A_jx}] - y) \\
&= A_{ji_2} \frac{\partial}{\partial x_{i_1}}\left(\frac{\int_S z e^{-(A_jx - z)^2/2} \, dz}{\int_S e^{-(A_jx - z)^2/2} \, dz} - y\right) \\
&= A_{ji_2} \Bigg(\frac{\frac{\partial}{\partial x_{i_1}} \int_S ze^{-(A_jx - z)^2/2} \, dz}{\int_S e^{-(A_jx - z)^2/2} \, dz} -\\
&\qquad \frac{\int_S ze^{-(A_jx - z)^2/2} \, dz \frac{\partial}{\partial x_{i_1}} \int_S e^{-(A_jx - z)^2/2} \, dz}{\left(\int_S e^{-(A_jx - z)^2/2} \, dz\right)^2}\Bigg) \\
&= A_{ji_2}(\mathbb{E}[-A_{ji_1}Z_{A_jx}(A_jx - Z_{A_jx})] - \mathbb{E}[Z_{A_jx}]\mathbb{E}[-A_{ji_1}(A_jx - Z_{A_jx})] \\
&= A_{ji_1}A_{ji_2}\Var(Z_{A_jx}).
\end{align*}
We conclude that \[H(x;A_j, y_j) = A_j^TA_j\Var(Z_{A_jx}).\] Averaging over all samples yields the claimed result.
\end{namedproof}

The following lemma collects several useful facts that are needed for the PDW method. Parts (a) and (b) are generically true for any $\ell_1$-regularized convex program; part (c) is a holdover from the untruncated setting that is still true. The proof is essentially due to \cite{Wainwright2009}, although part (c) now requires slightly more work.

\begin{lemma}\label{lemma:sgopt}
Fix any $(A,y)$.
\begin{enumerate}[label = (\alph*)]
\item A vector $x \in \RR^n$ is optimal for Program~\ref{eq:program} if and only if there exists some $z \in \partial \norm{x}_1$ such that \[\nabla \nll(x;A,y) + \lambda z = 0.\]
\item Suppose that $(x,z)$ are as in (a), and furthermore $|z_i| < 1$ for all $i \not \in \supp(x)$. Then necessarily $\supp(\hat{x}) \subseteq \supp(x)$ for any optimal solution $\hat{x}$ to Program~\ref{eq:program}.
\item Suppose that $(x,z)$ are as in (b), with $I = \supp(x)$. If $A_I^T A_I$ is invertible, then $x$ is the unique optimal solution to Program~\ref{eq:program}.
\end{enumerate}
\end{lemma}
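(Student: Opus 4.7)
The plan is to establish each part of the lemma in turn, relying on standard subdifferential calculus for parts (a) and (b) and on the structure of the Hessian computed in Lemma~\ref{lemma:gradient} for part (c).

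For part (a), since $\nll(\cdot;A,y)$ is a smooth convex function (convexity of the truncated-Gaussian log-likelihood is established in \cite{Daskalakis2018}) and $\lambda \norm{\cdot}_1$ is convex with subdifferential $\lambda\, \partial \norm{x}_1$, the Fermat optimality condition for Program~\ref{eq:program} is $0 \in \nabla \nll(x;A,y) + \lambda\, \partial \norm{x}_1$. This is precisely the existence of some $z \in \partial \norm{x}_1$ with $\nabla \nll(x;A,y) + \lambda z = 0$.

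For part (b), I would let $\hat{x}$ be any optimal solution and let $(x,z)$ satisfy (a) with $|z_i| < 1$ for $i \notin \supp(x)$. Convexity of $\nll$ gives
\[ \nll(\hat{x};A,y) \geq \nll(x;A,y) + \nabla \nll(x;A,y)^T(\hat{x}-x) = \nll(x;A,y) - \lambda z^T(\hat{x}-x). \]
Combining this with the equal optimal value identity $\nll(\hat{x};A,y) + \lambda \norm{\hat{x}}_1 = \nll(x;A,y) + \lambda \norm{x}_1$ and the fact that $z^T x = \norm{x}_1$ (because $z_i = \sign(x_i)$ whenever $x_i \neq 0$), I get $z^T \hat{x} \geq \norm{\hat{x}}_1$. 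On the other hand $|z_i| \leq 1$ forces $z^T \hat{x} \leq \norm{\hat{x}}_1$, so equality holds. Writing this as $\sum_i (|\hat{x}_i| - z_i \hat{x}_i) = 0$ and noting each term is nonnegative yields $\hat{x}_i = 0$ whenever $|z_i| < 1$; in particular $\hat{x}_i = 0$ for all $i \notin \supp(x)$, so $\supp(\hat{x}) \subseteq \supp(x)$.

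For part (c), I would combine part (b) with strict convexity of $f$ on the subspace $\RR^I$, where $I = \supp(x)$. By part (b), any optimum $\hat{x}$ has $\supp(\hat{x}) \subseteq I$, so both $x$ and $\hat{x}$ minimize $f$ over $\RR^I$. By Lemma~\ref{lemma:gradient}, the Hessian of $\nll$ restricted to $\RR^I$ equals $\frac{1}{m} \sum_{j=1}^m (A_j)_I^T (A_j)_I \Var(Z_{A_j,x})$. Since the truncated normal $Z_t$ is never a point mass we have $\Var(Z_t) > 0$ for every $t$, and since $A_I^T A_I$ is invertible the matrix $A_I$ has full column rank. Hence the restricted Hessian is positive definite, so $\nll$, and therefore $f$, is strictly convex on $\RR^I$. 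This forces $x = \hat{x}$, giving uniqueness. The main obstacle is precisely this last step: in the untruncated LASSO the restricted Hessian is just $\frac{1}{m}A_I^T A_I$ and strict convexity is immediate, whereas here one must separately verify the positivity of the variance weights appearing in the truncated Hessian to rule out the possibility that strict convexity is destroyed at the optimum.
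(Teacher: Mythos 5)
Your proposal is correct and follows essentially the same route as the paper: part (a) is the standard subgradient optimality condition, part (b) establishes $z^T\hat{x} = \norm{\hat{x}}_1$ and concludes $\hat{x}_i = 0$ wherever $|z_i| < 1$, and part (c) derives strict convexity on $\RR^I$ from the positivity of the variance weights $\Var(Z_{A_j,x})$ combined with invertibility of $A_I^T A_I$. The only cosmetic difference is in part (b), where you use the first-order convexity inequality directly rather than the paper's argument that $f$ (and hence $\nll$) is linear on the segment between the two optima; both yield the same complementary-slackness conclusion.
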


\begin{proof}
Part (a) is simply the subgradient optimality condition in a convex program. 

Part (b) is a standard fact about duality; we provide a proof here. Let $\hat{x}$ be any optimal solution to Program~\ref{eq:program}. We claim that $\hat{x}^T z = \norm{\hat{x}}_1$. To see this, first note that $x^T z = \norm{x}_1$, since $x_i z_i = |x_i|$ always holds by definition of a subgradient for the $\ell_1$ norm. Now, by optimality of $x$ and $\hat{x}$, we have $f(x) = f(\hat{x}) \leq f(tx + (1-t)\hat{x})$ for all $0 \leq t \leq 1$. Therefore by convexity, $f(tx + (1-t)\hat{x}) = f(x)$ for all $0 \leq t \leq 1$. Since $f$ is the sum of two convex functions, both must be linear on the line segment between $x$ and $\hat{x}$. Therefore \[\nll(tx + (1-t)\hat{x}) = t\nll(x) + (1-t)\nll(\hat{x})\] for all $0 \leq t \leq 1$. We conclude that \[(\nabla \nll(x)) \cdot (\hat{x} - x) = \nll(\hat{x}) - \nll(x) = \norm{x}_1 - \norm{\hat{x}}_1.\] Since $\nabla \nll(x) + z = 0$ by subgradient optimality, it follows that $z^T(\hat{x} - x) = \norm{\hat{x}}_1 - \norm{x}_1$. Hence, $z^T \hat{x} = \norm{\hat{x}}_1$. Since $|z_i| \leq 1$ for all $i$, if $|z_i| < 1$ for some $i$ then necessarily $\hat{x}_i = 0$ for equality to hold.

For part (c), if $A_I^T A_I$ is invertible, then it is (strictly) positive definite. The Hessian of Program~\ref{eq:res-program} is \[\frac{1}{m} \sum_{j=1}^m A_{I,j}^T A_{I,j} \Var(Z_{A_j, x}).\] Since $\Var(Z_{A_j,x})$ is always positive, there is some $\epsilon > 0$ (not necessarily a constant) such that \[\frac{1}{m} \sum_{j=1}^m A_{I,j}^T A_{I,j} \Var(Z_{A_j, x}) \succcurlyeq \frac{1}{m} \epsilon \sum_{j=1}^m A_{I,j}^T A_{I,j} = \frac{1}{m} \epsilon A_I^T A_I.\]
Thus, the Hessian of the restricted program is positive definite, so the restricted program is strictly convex. Therefore the restricted program has a unique solution. By part (b), any solution to the original program has support in $I$, so the original program also has a unique solution, which must be $x$.
\end{proof}

As with the previous lemma, the following proof is essentially due to \cite{Wainwright2009} (with a different subgradient optimality condition).

\begin{namedproof}{Lemma~\ref{lemma:pdw}}
By part (a) of Lemma~\ref{lemma:sgopt}, a vector $x \in \mathbb{R}^n$ is optimal for Program~\ref{eq:program} if and only if there is some $z \in \partial \norm{x}_1$ such that
\[\frac{1}{m} A^T(\mathbb{E} Z_{A,x} - y) + \lambda z = 0.\]
This vector equality can be written in block form as follows:
\[\frac{1}{m} \begin{bmatrix} A_U^T \\ A_{U^c}^T \end{bmatrix} \left(\mathbb{E} Z_{A,x} - y\right) + \lambda \begin{bmatrix} z_U \\ z_{U^c} \end{bmatrix} = 0.\]
Since $\bx$ is optimal in $\mathbb{R}^U$, there is some $\bz_U \in \partial \norm{\bx}_1$ such that $(\bx,\bz_U)$ satisfy the first of the two block equations. This is precisely part (a). If furthermore $\bx$ is zero-extended to $\mathbb{R}^n$, and $\bz$ is extended as in part (b), and $\bz$ satisfies $\norm{\bz_{U^c}}_\infty \leq 1$, then since $x_i = 0$ for all $i \not \in U$, we have that $\bz$ is a subgradient for $\norm{\bx}_1$. Therefore $\bx$ is optimal for Program~\ref{eq:program}. If $\norm{\bz_{U^c}}_\infty < 1$ and $A_U^TA_U$ is invertible, then $\bx$ is the unique solution to Program~\ref{eq:program} by parts (b) and (c) of Lemma~\ref{lemma:sgopt}.
\end{namedproof}

\section{Sparse recovery from the Restricted Isometry Property}

In this section we restate a theorem due to \cite{Candes2006} about sparse recovery in the presence of noise. Our statement is slightly generalized to allow a trade-off between the isometry constants and the sparsity. That is, as the sparsity $k$ decreases relative to the isometry order $s$, the isometry constants $\tau, T$ are allowed to worsen.

\begin{theorem}[\cite{Candes2006}]\label{thm:crt}
Let $B \in \mathbb{R}^{m \times n}$ be a matrix satisfying the $s$-Restricted Isometry Property \[\tau \norm{v}_2 \leq \norm{Bv}_2 \leq T \norm{v}_2\] for all $s$-sparse $v \in \mathbb{R}^n$. Let $w^* \in \mathbb{R}^n$ be $k$-sparse for some $k < s$, and let $w \in \mathbb{R}^n$ satisfy $\norm{w}_1 \leq \norm{w^*}_1$. Then \[\norm{B(w-w^*)}_2 \geq \left(\tau(1 - \rho) - T\rho\right) \norm{w-w^*}_2\] where $\rho = \sqrt{k/(s-k)}$.
\end{theorem}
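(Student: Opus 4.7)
The plan is to follow the standard Candes--Romberg--Tao recovery argument, with the minor twist that the isometry constants $\tau, T$ are no longer required to be close to $1$ (so we must track them through the estimate).

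First I would set $h = w - w^*$ and $T_0 = \supp(w^*)$, so $|T_0| \leq k$. Using the $\ell_1$ constraint, I would derive the standard \emph{cone} inequality $\|h_{T_0^c}\|_1 \leq \|h_{T_0}\|_1$: indeed $\|w^*\|_1 \geq \|w^* + h\|_1 \geq \|w^*_{T_0}\|_1 - \|h_{T_0}\|_1 + \|h_{T_0^c}\|_1$, and the claim follows after rearrangement.

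Next I would sort the coordinates of $h_{T_0^c}$ in decreasing order of magnitude and partition them into consecutive blocks $T_1, T_2, \ldots$ of size $s-k$ (the last one possibly smaller). Since every coordinate of $h_{T_j}$ is at most the average magnitude of $h_{T_{j-1}}$, we get $\|h_{T_j}\|_2 \leq \|h_{T_{j-1}}\|_1 / \sqrt{s-k}$ for $j \geq 2$. Summing and using the cone inequality yields
\[
\sum_{j \geq 2} \|h_{T_j}\|_2 \;\leq\; \frac{\|h_{T_0^c}\|_1}{\sqrt{s-k}} \;\leq\; \frac{\|h_{T_0}\|_1}{\sqrt{s-k}} \;\leq\; \sqrt{\tfrac{k}{s-k}}\,\|h_{T_0}\|_2 \;=\; \rho\,\|h_{T_0}\|_2.
\]

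Now let $T_{01} = T_0 \cup T_1$, which has size at most $s$. Since $h_{T_{01}}$ and each $h_{T_j}$ for $j \geq 2$ are $s$-sparse, the RIP gives $\|B h_{T_{01}}\|_2 \geq \tau \|h_{T_{01}}\|_2$ and $\|B h_{T_j}\|_2 \leq T \|h_{T_j}\|_2$. By the triangle inequality applied to $Bh = Bh_{T_{01}} + \sum_{j \geq 2} Bh_{T_j}$, together with the previous display, I obtain
\[
\|B h\|_2 \;\geq\; \tau \|h_{T_{01}}\|_2 - T \rho \|h_{T_0}\|_2.
\]
Finally, since $h_{(T_{01})^c}$ has disjoint support from the blocks $T_j$, $j \geq 2$, we have $\|h_{(T_{01})^c}\|_2 \leq \sum_{j \geq 2} \|h_{T_j}\|_2 \leq \rho \|h_{T_0}\|_2$, so
$\|h_{T_{01}}\|_2 \geq \|h\|_2 - \rho \|h_{T_0}\|_2$. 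Combining with $\|h_{T_0}\|_2 \leq \|h\|_2$ gives
\[
\|Bh\|_2 \;\geq\; \tau\bigl(\|h\|_2 - \rho \|h_{T_0}\|_2\bigr) - T\rho \|h_{T_0}\|_2 \;\geq\; \bigl(\tau(1 - \rho) - T\rho\bigr)\|h\|_2,
\]
which is the desired inequality. There is no real obstacle here; the one place to be careful is tracking both $\tau$ and $T$ (rather than the usual $1 \pm \delta$ form) so that the final constant comes out to $\tau(1-\rho) - T\rho$ exactly, and verifying that the standard $\ell_2$-by-$\ell_1$ block estimate does not accidentally lose a factor when it is fed back into the triangle inequality.
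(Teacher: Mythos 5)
Your proof is correct and follows essentially the same route as the paper's: the same cone inequality $\|h_{T_0^c}\|_1 \leq \|h_{T_0}\|_1$, the same sorted partition of $T_0^c$ into blocks of size $s-k$, the same $\ell_2$-by-$\ell_1$ block estimate, and the same triangle-inequality combination yielding the constant $\tau(1-\rho)-T\rho$. The only (harmless) difference is that you keep the intermediate bound in terms of $\|h_{T_0}\|_2$ before relaxing to $\|h\|_2$, whereas the paper relaxes slightly earlier.
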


\begin{proof}
Let $h = w - w^*$ and let $T_0 = \supp(w^*)$. Then \[\norm{w^*}_1 \geq \norm{w}_1 = \norm{h_{T_0^C}}_1 + \norm{(h + w^*)_{T_0}}_1 \geq \norm{h_{T_0^C}}_1 + \norm{w^*}_1 - \norm{h_{T_0}}_1,\] so $\norm{h_{T_0}}_1 \geq \norm{h_{T_0^C}}_1$. Without loss of generality assume that $T_0^C = \{1,\dots,|T_0^C|\}$, and $|h_i| \geq |h_{i+1}|$ for all $1 \leq i < |T_0^C|$. Divide $T_0^C$ into sets of size $s' = s-k$ respecting this order: \[T_0^C = T_1 \cup T_2 \cup \dots \cup T_r.\] Then the Restricted Isometry Property gives 
\begin{equation} \norm{Bh}_2 
\geq \norm{Bh_{T_0 \cup T_1}}_2 - \sum_{t=2}^r \norm{Bh_{T_t}}_2
\geq \tau \norm{h_{T_0 \cup T_1}}_2 - T \sum_{t=2}^r \norm{h_{T_t}}_2 
\label{eq:ripeq}
\end{equation}
For any $t \geq 1$ and $i \in T_{t+1}$, we have $h_i \leq \norm{h_{T_t}}_1/s'$, so that \[\norm{h_{T_{t+1}}}_2^2 \leq \frac{\norm{h_{T_t}}_1^2}{s'}.\]
Summing over all $t \geq 2$, we get \[\sum_{t=2}^r \norm{h_{T_t}}_2 \leq
\frac{1}{\sqrt{s'}} \sum_{t=1}^r \norm{h_{T_t}}_1 = \frac{\norm{h_{T_0^C}}_1}{\sqrt{s'}} \leq \frac{\norm{h_{T_0}}_1}{\sqrt{s'}} \leq \sqrt{\frac{k}{s'}} \norm{h}_2.\]
The triangle inequality implies that $\norm{h_{T_0\cup T_1}}_2 \geq (1 - \sqrt{k/s'})\norm{h}_2$. Returning to Equation~\ref{eq:ripeq}, it follows that \[\norm{Bh}_2 \geq \left(\tau(1 - \sqrt{k/s'}) - T\sqrt{k/s'}\right)\norm{h}_2\] as claimed.
\end{proof}

\section{Summary of the algorithm} \label{sec:algos}

  \begin{algorithm}[H]
  \caption{Projected Stochastic Gradient Descent.}
  \begin{algorithmic}[1]
    \Procedure{Sgd}{$N, \lambda$}\Comment{\textit{$N$: number of steps, $\lambda$: parameter}}
    \State  $x^{(0)} \gets \argmin \norm{x}_1$ s.t. $x \in \mathcal{E}_r$ \Comment{\textit{see the Appendix \ref{appendix:algo-details} for details}}
    \For{$t = 1, \dots, N$}
      \State $\eta_t \gets \frac{1}{\sqrt{nN}}$
      \State $v^{(t)} \gets \Call{GradientEstimation}{x^{(t - 1)})}$
      \State $w^{(t)} \gets x^{(t - 1)} - \eta_t w^{(t)}$
      \State $x^{(t)} \gets \argmin_{x \in \mathcal{E}_r} \norm{x - w^{(t)}}_2$ \Comment{\textit{see the Appendix \ref{appendix:algo-details} for details}}
    \EndFor\label{euclidendwhile}
    \State \textbf{return} $\bar{x} \gets \frac{1}{N} \sum_{t = 1}^N x^{(t)}$\Comment{\textit{output the average}}
  \EndProcedure
  \end{algorithmic}
  \label{alg:projectedSGD}
  \end{algorithm}
  \vspace{-10pt}
  \begin{algorithm}[H]
  \caption{The function to estimate the gradient of the $\ell_1$ regularized negative log-likelihood.}
  \label{alg:estimateGradient}
  \begin{algorithmic}[1]
    \Function{GradientEstimation}{$x$}
    \State Pick $j$ at random from $[n]$
    \State Use Assumption \ref{assumption:sampling} or Lemma \ref{lem:samplingUnionOfIntervalsLemma} to sample $z \sim Z_{A_jx^{(t)}}$
    \State \textbf{return} $A_j(z - y_j)$
  \EndFunction
  \end{algorithmic}
  \label{alg:rejectionSampling}
  \end{algorithm}

\section{Algorithm details}
\label{appendix:algo-details}

In this section we fill in the missing details about the algorithm's efficiency. Since we have already seen that the algorithm converges in $O(\text{poly}(n))$ update steps, all that remains is to show that the following algorithmic problems can be solved efficiently:
\begin{enumerate}
    \item (Initial point) Compute $x^{(0)} = \argmin_{x \in \mathcal{E}_r} \norm{x}_1.$
    \item (Stochastic gradient) Given $x^{(t)} \in \mathcal{E}_r$ and $j \in [m]$, compute a sample $A_j(z - y_j)$, where $z \sim Z_{A_jx^{(t)}}$.
    \item (Projection) Given $w^{(t)} \in \mathbb{R}^n$, compute $x^{(t+1)} = \argmin_{x \in \mathcal{E}_r} \norm{x - w^{(t)}}_2$.
\end{enumerate}

\paragraph{Initial point.} To obtain the initial point $x^{(0)}$, we need to solve the program

\begin{equation*}
\begin{array}{ll}
\text{minimize} & \norm{x}_1 \\
\text{subject to} & \norm{Ax - y}_2 \leq r \sqrt{m}.
\end{array}
\end{equation*}

This program has come up previously in the compressed sensing literature (see, e.g., \cite{Candes2006}). It can be recast as a Second-Order Cone Program (SOCP) by introducing variables $x^+,x^- \in \mathbb{R}^n$:

\begin{equation*}
\begin{array}{ll}
\text{minimize} & \sum_{i=1}^n (x^+_i - x^-_i) \\
\text{subject to} & \norm{Ax^+ - Ax^- - y}_2 \leq r \sqrt{m}, \\
& x^+ \geq 0, \\
& -x^- \geq 0.
\end{array}
\end{equation*}

Thus, it can be solved in polynomial time by interior-point methods (see \cite{boyd2004convex}).

\paragraph{Stochastic gradient.} In computing an unbiased estimate of the gradient, the only challenge is sampling from $Z_{A_jx^{(t)}}$. By Assumption~\ref{assumption:sampling}, this takes $T(\gamma_S(A_jx^{(t)}))$ time. We know from Lemma~\ref{lemma:total-survival} that $\gamma_S(A_jx^*) \geq \alpha^{2m}$. Since $x^{(t)},x^* \in \mathcal{E}_r$, we have from Lemma~\ref{lemma:survival-decay} that \[\gamma_S(A_jx^{(t)}) \geq \gamma_S(t^*)^2 e^{-|A_j(x^{(t)} - x^*)|^2 - 2} \geq \alpha^{4m} e^{-4r^2m - 2} \geq e^{-\Theta(m/\alpha)}.\]
Thus, the time complexity of computing the stochastic gradient is $T(e^{-\Theta(m/\alpha)})$. 

In the special case when the truncation set $S$ is a union of $r$ intervals, there is a sampling algorithm with time complexity $T(\beta) = \poly(r, \log(1/\beta, n))$ (Lemma~\ref{lem:samplingUnionOfIntervalsLemma}). Hence, in this case the time complexity of computing the stochastic gradient is $\poly(r, n)$.

To be more precise, we instantiate Lemma~\ref{lem:samplingUnionOfIntervalsLemma} with accuracy $\zeta = 1/(nL)$, where $L = \poly(n)$ is the number of update steps performed. This gives some sampling algorithm $\mathcal{A}$. In each step, $\mathcal{A}$'s output distribution is within $\zeta$ of the true distribution $N(t,1;S)$. Consider a hypothetical sampling algorithm $\mathcal{A}'$ in which $\mathcal{A}$ is run, and then the output is altered by rejection to match the true distribution. Alteration occurs with probability $\zeta$. Thus, running the PSGD algorithm with $\mathcal{A}'$, the probability that any alteration occurs is at most $L\zeta = o(1)$. As shown by Theorem~\ref{thm:psgd}, PSGD with $\mathcal{A}'$ succeeds with high probability. Hence, PSGD with $\mathcal{A}$ succeeds with high probability as well.

\paragraph{Projection.} The other problem we need to solve is projection onto set $\mathcal{E}_r$:

\begin{equation*}
\begin{array}{ll}
\text{minimize} & \norm{x - v}_2 \\
\text{subject to} & \norm{Ax - y}_2 \leq r \sqrt{m}.
\end{array}
\end{equation*}

This is a convex QCQP, and therefore solvable in polynomial time by interior point methods (see \cite{boyd2004convex}).

\section{Isometry properties}
\label{appendix:isom}

Let $A \in \mathbb{R}^{m \times n}$ consist of $m$ samples $A_i$ from Process~\ref{process}. In this section we prove the following theorem:

\begin{theorem}\label{thm:isom}
For every $\epsilon > 0$ there are constants $\delta>0$, $M$, $\tau > 0$ and $T$ with the following property. Let $V \subseteq [n]$. Suppose that $m \geq M|V|$. With probability at least $1 - e^{-\delta m}$ over $A$, for every subset $J \subseteq [m]$ with $|J| \geq \epsilon m$, the $|J| \times k$ submatrix $A_{J,V}$ satisfies \[\tau \sqrt{m}\norm{v}_2 \leq \norm{A_{J,V}v}_2 \leq T \sqrt{m}\norm{v}_2 \qquad \forall \, v \in \mathbb{R}^V.\]
\end{theorem}

We start with the upper bound, for which it suffices to take $J = [m]$.

\begin{lemma}\label{lemma:upperisom}
Let $V \subseteq [n]$. Suppose that $m \geq |V|$. There is a constant $T = T(\alpha)$ such that \[\Pr[\smax(A_V) > T] \leq e^{-\Omega(m)}.\]
\end{lemma}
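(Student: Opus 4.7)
The plan is to first compare the truncated row distribution to a standard Gaussian, then apply Bernstein-style concentration in a fixed direction, and finally extend to a uniform spectral bound via an $\epsilon$-net. Recall that the rows of $A_V$ are independent, each distributed as the restriction to $V$-coordinates of $a \sim N(0,I_n)$ conditioned on $a^T x^* + \eta \in S$ where $\eta \sim N(0,1)$ independently.

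The first step is a comparison inequality: for any measurable event $E$ on a single row,
\[
\Pr[A_i \in E] \;=\; \frac{\Pr_{a,\eta}[\,a \in E \text{ and } a^T x^* + \eta \in S\,]}{\Pr_{a,\eta}[\,a^T x^* + \eta \in S\,]} \;\leq\; \alpha^{-1} \Pr_{a \sim N(0,I_n)}[a \in E],
\]
where Assumption~\ref{assumption:csp} controls the denominator. In particular, for any fixed unit vector $v \in \mathbb{R}^V$, this gives $\Pr[|A_i v| > t] \leq 2\alpha^{-1} e^{-t^2/2}$, so $A_i v$ is sub-Gaussian with parameter $O(\alpha^{-1/2})$, and $(A_i v)^2$ is sub-exponential with mean and parameter $O(\alpha^{-1})$.

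Next, fix a unit vector $v$. Since the rows of $A_V$ are independent, $\norm{A_V v}_2^2 = \sum_{i=1}^m (A_i v)^2$ is a sum of $m$ i.i.d.\ sub-exponential variables, so Bernstein's inequality yields $\norm{A_V v}_2^2 \leq C(\alpha)\, m$ with probability at least $1 - e^{-c(\alpha) m}$. To upgrade this pointwise bound to a uniform spectral bound, take a standard $\tfrac{1}{4}$-net $\mathcal{N}$ of the unit sphere $S^{|V|-1}$ with $|\mathcal{N}| \leq 12^{|V|}$, and union-bound the failure event over $\mathcal{N}$: this has probability at most $12^{|V|} e^{-c(\alpha) m}$, which is $e^{-\Omega(m)}$ whenever $m \geq M|V|$ for $M = M(\alpha)$ sufficiently large. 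The usual net-to-sphere duality ($\smax(A_V) \leq 2 \sup_{v \in \mathcal{N}} \norm{A_V v}_2$) then gives $\smax(A_V) \leq T\sqrt{m}$ for the constant $T = T(\alpha) = 2\sqrt{C(\alpha)}$, which is the upper isometry bound claimed in Theorem~\ref{thm:isom}.

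The one place where truncation genuinely bites is that conditioning on $a^T x^* + \eta \in S$ couples the coordinates within a single row, so off-the-shelf results for matrices with i.i.d.\ Gaussian entries do not apply directly. The comparison inequality above is exactly what is needed to sidestep this difficulty: it absorbs all truncation effects into the multiplicative constant $\alpha^{-1}$ and lets us inherit Gaussian-grade tails for every linear functional of a row. After that, the remainder is a textbook sub-Gaussian random-matrix argument, with the hypothesis $m \geq M|V|$ exactly what is needed to absorb the $12^{|V|}$ net cardinality into the concentration exponent.
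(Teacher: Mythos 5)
Your proof is correct, but it takes a genuinely different route from the paper. The paper never works with the truncated row distribution directly: it passes to the pre-rejection sample stream, bounds the total number of draws by $2m/\alpha$ via a Chernoff bound, embeds $A_V$ as a submatrix of a $(2m/\alpha)\times|V|$ matrix with i.i.d.\ Gaussian entries, and quotes the Rudelson--Vershynin bound on the largest singular value of a Gaussian matrix. You instead stay with the truncated rows, observe that conditioning on survival inflates every row probability by at most $\alpha^{-1}$ (this is exactly the right comparison inequality, and it also bounds expectations of nonnegative functionals, which you implicitly use for the mean of $(A_iv)^2$), conclude the rows are sub-Gaussian with an $\alpha$-dependent constant, and run the textbook Bernstein-plus-net argument. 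Your version is self-contained (no coupling with the rejection process, no external singular-value theorem) and the comparison trick is the same mechanism the paper uses elsewhere (e.g.\ Lemma~\ref{lemma:dplower}); the paper's version is shorter and inherits the Gaussian constants for free. Incidentally, both your proof and the paper's actually establish $\smax(A_V)\le T\sqrt{m}$, which is the form used in Theorem~\ref{thm:isom}; the lemma statement's ``$\smax(A_V)>T$'' is missing the $\sqrt{m}$.

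One discrepancy to flag: Lemma~\ref{lemma:upperisom} assumes only $m\ge|V|$, whereas your union bound as written needs $m\ge M(\alpha)|V|$ to absorb the $12^{|V|}$ net cardinality into the exponent $e^{-c(\alpha)m}$. This is easy to repair inside your own argument: since $T=T(\alpha)$ is allowed to be large, take the Bernstein threshold $T^2m$ big enough (relative to the sub-exponential parameter $K(\alpha)$) that the linear-regime tail $e^{-c\,T^2m/K(\alpha)}$ beats $12^{|V|}\le 12^{m}$ already when $m\ge|V|$. As used in the paper the point is moot---the lemma is only invoked inside Theorem~\ref{thm:isom}, which assumes $m\ge M|V|$---but as stated the lemma needs either this fix or the paper's reduction to an i.i.d.\ Gaussian matrix.
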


\begin{proof}
In the process for generating $A$, consider the matrix $A'$ obtained by not discarding any of the samples $a \in \mathbb{R}^n$. Then $A'$ is a $\xi \times n$ matrix for a random variable $\xi$; each row of $A'$ is a spherical Gaussian independent of all previous rows, but $\xi$ depends on the rows. Nonetheless, by a Chernoff bound, $\Pr[\xi > 2m/\alpha] \leq e^{-m/(3\alpha)}.$ In this event, $A'$ is a submatrix of $2m/\alpha \times n$ matrix $B$ with i.i.d. Gaussian entries. By \cite{Rudelson2010}, \[\Pr[\smax(B_V) > C\sqrt{2m/\alpha}] \leq e^{-cm}\] for some absolute constants $c, C > 0$. Since $A'$ is a submatrix of $B$ with high probability, and $A$ is a submatrix of $A'$, it follows that \[\Pr[\smax(A_V) > C\sqrt{2m/\alpha}] \leq e^{-\Omega(m)}\] as desired.
\end{proof}

For the lower bound, we use an $\epsilon$-net argument.

\begin{lemma}\label{lemma:dplower}
Let $\epsilon > 0$ and let $v \in \mathbb{R}^n$ with $\norm{v}_2 = 1$. Let $a \sim N(0,1)^n$. Then \[\Pr[|a^Tv| < \alpha \epsilon \sqrt{\pi/2} | a^T x^* + Z \in S] < \epsilon.\]
\end{lemma}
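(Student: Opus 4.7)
The plan is to combine a standard Gaussian anti-concentration estimate with the lower bound on survival probability guaranteed by Assumption~\ref{assumption:csp}. The observation driving everything is that since $\|v\|_2 = 1$, the marginal distribution of $a^T v$ (before conditioning on survival) is exactly $N(0,1)$.

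First, I would bound the unconditional probability. Since $a^T v$ is a standard normal, its density is bounded by $1/\sqrt{2\pi}$, so for any $t > 0$,
\[\Pr[|a^T v| < t] \;\leq\; t \sqrt{2/\pi}.\]
Setting $t = \alpha \epsilon \sqrt{\pi/2}$ yields $\Pr[|a^T v| < \alpha \epsilon \sqrt{\pi/2}] \leq \alpha \epsilon$.

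Second, I would lower bound the probability of the conditioning event. By definition, $\Pr[a^T x^* + Z \in S] = \mathbb{E}_{a \sim N(0,1)^n}\, \gamma_S(a, x^*)$, which is at least $\alpha$ by Assumption~\ref{assumption:csp}.

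Finally, I would combine the two via the definition of conditional probability:
\[\Pr\!\left[|a^T v| < \alpha \epsilon \sqrt{\pi/2} \,\big|\, a^T x^* + Z \in S\right] \;\leq\; \frac{\Pr[|a^T v| < \alpha \epsilon \sqrt{\pi/2}]}{\Pr[a^T x^* + Z \in S]} \;\leq\; \frac{\alpha \epsilon}{\alpha} \;=\; \epsilon.\]
There is no real obstacle here: the point of the lemma is precisely that conditioning on a survival event of probability at least $\alpha$ can inflate probabilities by a factor of at most $1/\alpha$, and that factor is exactly what the $\alpha$ in the threshold $\alpha \epsilon \sqrt{\pi/2}$ is designed to absorb.
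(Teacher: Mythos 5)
Your proof is correct and follows essentially the same route as the paper: bound $\Pr[|a^Tv| < \delta]$ by $\delta\sqrt{2/\pi}$ using that $a^Tv \sim N(0,1)$, lower bound the survival event by $\alpha$ via Assumption~\ref{assumption:csp}, and divide, with $\delta = \alpha\epsilon\sqrt{\pi/2}$ chosen to absorb the $1/\alpha$ factor. No gaps.
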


\begin{proof}
From the constant survival probability assumption,
\[\Pr[|a^Tv| < \delta | a^T x^* + Z \in S] \leq \alpha^{-1} \Pr[|a^T v| < \delta].\]
But $a^T v \sim N(0,1)$, so $\Pr[|a^T v| < \delta] \leq 2\delta/\sqrt{2\pi}.$ Taking $\delta = \alpha \epsilon \sqrt{\pi/2}$ yields the desired bound.
\end{proof}

\begin{lemma}\label{lemma:singleisom}
Let $V \subseteq [n]$. Fix $\epsilon > 0$ and fix $v \in \mathbb{R}^V$ with $\norm{v}_2 = 1$. There are positive constants $\tau_0 = \tau_0(\alpha,\epsilon)$ and $c_0 = c_0(\alpha,\epsilon)$ such that \[\Pr\left[\exists J \subseteq [m]: (|J| \geq \epsilon m) \land (\norm{A_{J,V}v}_2 < \tau_0)\right] \leq e^{-c_0m}.\]
\end{lemma}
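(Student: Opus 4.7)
The plan is to combine Lemma~\ref{lemma:dplower}, which gives anti-concentration for a single row, with a Chernoff bound over the $m$ independent rows of $A$. The key observation is that the subset $J$ in the lemma is an adversarial choice that may depend on $A$, so a naive union bound over $\binom{m}{|J|}$ subsets is too wasteful. Instead, the right way to handle the adversary is to control, once and for all, the \emph{total} number of rows $i \in [m]$ for which $|A_i v|$ is atypically small; any $J$ of size $\geq \epsilon m$ must then inherit many ``large'' rows regardless of how it is chosen.

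First, extend $v$ to $\mathbb{R}^n$ by zeros outside $V$ (this preserves $\|v\|_2 = 1$). Apply Lemma~\ref{lemma:dplower} with $\epsilon' := \epsilon/4$: for each row $A_i$ drawn according to Process~\eqref{process},
\[
\Pr\!\left[|A_i v| < c_1\right] \leq \epsilon/4, \qquad \text{where } c_1 := \alpha (\epsilon/4)\sqrt{\pi/2}.
\]
Here $c_1$ is a constant depending only on $\alpha$ and $\epsilon$. Because distinct rows of $A$ are produced by independent runs of the rejection process, the indicator variables $X_i := \mathbbm{1}[|A_i v| < c_1]$ are i.i.d.\ Bernoullis with mean at most $\epsilon/4$. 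A standard multiplicative Chernoff bound then yields a constant $c_0 = c_0(\alpha,\epsilon) > 0$ such that
\[
\Pr\!\left[\sum_{i=1}^m X_i > \tfrac{\epsilon}{2} m\right] \leq e^{-c_0 m}.
\]

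Second, condition on the complementary high-probability event, namely that the number of ``small'' rows is at most $(\epsilon/2)m$. For any $J \subseteq [m]$ with $|J| \geq \epsilon m$, at least $|J| - (\epsilon/2)m \geq (\epsilon/2) m$ indices $i \in J$ satisfy $|A_i v| \geq c_1$. Summing squared entries,
\[
\|A_{J,V} v\|_2^2 \;=\; \sum_{i \in J} (A_i v)^2 \;\geq\; \tfrac{\epsilon m}{2} \cdot c_1^2,
\]
so choosing $\tau_0 := c_1 \sqrt{\epsilon/2}$ (a constant in $\alpha, \epsilon$) gives $\|A_{J,V} v\|_2 \geq \tau_0 \sqrt{m} \geq \tau_0$ simultaneously for all such $J$.

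The one thing worth double-checking is the claim of independence across rows, since the underlying process involves rejection: this is fine because each accepted sample is generated by an independent restart, so the rows of $A$ are i.i.d.\ draws from the truncated distribution, and hence the $X_i$ are genuinely independent Bernoullis. Beyond that, the argument is routine: Lemma~\ref{lemma:dplower} provides the per-row anti-concentration, and Chernoff plus a trivial counting step handles the universal quantifier over $J$. This single-direction bound will subsequently be upgraded to the full statement of Theorem~\ref{thm:isom} (holding uniformly over all unit $v \in \mathbb{R}^V$) via an $\epsilon$-net argument combined with the upper bound of Lemma~\ref{lemma:upperisom}.
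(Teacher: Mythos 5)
Your proof is correct and follows essentially the same route as the paper's: apply Lemma~\ref{lemma:dplower} row by row, use independence of the rows (from the independent restarts of the rejection process) and a Chernoff bound to control the total number of rows with $|A_i v|$ below a constant threshold, and then note that any $J$ with $|J|\geq \epsilon m$ must contain at least $(\epsilon/2)m$ ``large'' rows, which handles the adversarial choice of $J$ deterministically. The only differences are in the choice of constants ($\epsilon/4$ versus $\epsilon/3$), and your final counting display is in fact written more carefully than the paper's.
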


\begin{proof}
For each $j \in [m]$ let $B_j$ be the indicator random variable for the event that $|A_{j,V} v| < \alpha \epsilon/3$. Let $B = \sum_{j = 1}^m B_j$. By Lemma~\ref{lemma:dplower}, $\mathbb{E} B < \epsilon m/3$. Each $B_j$ is independent, so by a Chernoff bound, \[\Pr[B > \epsilon m/2] \leq e^{-\epsilon m/18}.\]
In the event $[B \leq \epsilon m/2]$, for any $J \subseteq [m]$ with $|J| \geq \epsilon m$ it holds that \[\norm{A_{J,V}v}_2^2 = \sum_{j \in J} (A_{j,V}v)^2 \geq \sum_{j \in J: B_j = 0} (A_{j,V}v)^2 \geq (\alpha \epsilon/3) B \geq \alpha \epsilon^2 m/6.\]
So the event in the lemma statement occurs with probability at most $e^{-\epsilon m/18}.$
\end{proof}

Now we can prove the isometry property claimed in Theorem~\ref{thm:isom}.

\begin{namedproof}{Theorem~\ref{thm:isom}}
Let $V \subseteq [n]$. Let $\epsilon > 0$. Take $\gamma = 4|V|/(c_0m)$, where $c_0 = c_0(\alpha,\epsilon)$ is the constant in the statement of Lemma~\ref{lemma:singleisom}. Let $\mathcal{B} \subseteq \mathbb{R}^V$ be the $k$-dimensional unit ball. Let $\mathcal{D} \subset \mathcal{B}$ be a maximal packing of $(1+\gamma/2)\mathcal{B}$ by radius-$(\gamma/2)$ balls with centers on the unit sphere. By a volume argument, \[|\mathcal{D}| \leq \frac{(1 + \gamma/2)^k}{(\gamma/2)^k} \leq e^{2k/\gamma} \leq e^{c_0 m/2}.\]
Applying Lemma~\ref{lemma:singleisom} to each $v \in \mathcal{D}$ and taking a union bound, \[\Pr[\exists J \subseteq [m], v \in \mathcal{D}: (|J| \geq \epsilon m) \land (\norm{A_{J,V}v}_2 < \tau_0)] \leq e^{-c_0 m/2}.\]
So with probability $1 - e^{-\Omega(m)}$, the complement of this event holds. And by Lemma~\ref{lemma:upperisom}, the event $\smax(A_V) \leq T\sqrt{m}$ holds with probability $1 - e^{-\Omega(m)}$. In these events we claim that the conclusion of the theorem holds. Take any $v \in \mathbb{R}^V$ with $\norm{v}_2 = 1$, and take any $J \subseteq [m]$ with $|J| \geq \epsilon m$. Since $\mathcal{D}$ is maximal, there is some $w \in \mathcal{D}$ with $\norm{v-w}_2 \leq \gamma$. Then \[\norm{A_{J,V}v}_2 \geq \norm{A_{J,V}w}_2 - \norm{A_{J,V}(v-w)}_2 \geq \tau_0 - \gamma T.\]
But $\gamma \leq 4/(c_0 M)$. For sufficiently large $M$, we get $\gamma < \tau_0/(2T)$. Taking $\tau = \tau_0/2$ yields the claimed lower bound.
\end{namedproof}

As a corollary, we get that $A_U^T$ is a $\sqrt{m}$-isometry on its row space up to constants (of course, this holds for any $V \subseteq [n]$ with $|V| = k$, but we only need it for $V = U$).

\begin{corollary}\label{cor:rowisom}
With high probability, for every $u \in \mathbb{R}^k$, \[\frac{\tau^2}{T} \sqrt{m} \norm{A_U u}_2 \leq \norm{A_U^T A_U u}_2 \leq \frac{T^2}{\tau} \sqrt{m} \norm{A_U u}_2.\]
\end{corollary}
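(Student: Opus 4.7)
The corollary is an immediate linear-algebraic consequence of Theorem~\ref{thm:isom}. The plan is to instantiate the theorem with $V = U$ and $J = [m]$, which gives, with probability $1 - e^{-\Omega(m)}$, the two-sided bound $\tau\sqrt{m}\,\norm{v}_2 \leq \norm{A_U v}_2 \leq T\sqrt{m}\,\norm{v}_2$ for every $v \in \mathbb{R}^U$. Equivalently, $\tau\sqrt{m} \leq \smin(A_U)$ and $\smax(A_U) \leq T\sqrt{m}$. Conditioning on this high-probability event, we fix an arbitrary $u \in \mathbb{R}^k$ and prove both inequalities separately.

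For the upper bound, I would simply chain operator-norm inequalities. Since $\smax(A_U^T) = \smax(A_U) \leq T\sqrt{m}$, we get $\norm{A_U^T A_U u}_2 \leq T\sqrt{m}\,\norm{A_U u}_2$. Because $T/\tau \geq 1$ (these are the isometry constants from Theorem~\ref{thm:isom}), this implies the stated $\frac{T^2}{\tau}\sqrt{m}\,\norm{A_U u}_2$ upper bound; one can alternatively derive the exact form by the longer chain $\norm{A_U^T A_U u}_2 \leq T\sqrt{m}\,\norm{A_U u}_2 \leq T^2 m\,\norm{u}_2 \leq \frac{T^2}{\tau}\sqrt{m}\,\norm{A_U u}_2$, using $\norm{u}_2 \leq \norm{A_U u}_2/(\tau\sqrt{m})$ in the last step.

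For the lower bound, the key identity is $\norm{A_U u}_2^2 = u^T A_U^T A_U u \leq \norm{u}_2 \cdot \norm{A_U^T A_U u}_2$ by Cauchy--Schwarz. Applying the isometry bound $\norm{A_U u}_2 \geq \tau \sqrt{m}\, \norm{u}_2$ to the left-hand side gives $\tau^2 m\,\norm{u}_2^2 \leq \norm{u}_2 \cdot \norm{A_U^T A_U u}_2$, i.e.\ $\norm{A_U^T A_U u}_2 \geq \tau^2 m\,\norm{u}_2$. Finally, the upper-isometry side yields $\norm{u}_2 \geq \norm{A_U u}_2 / (T\sqrt{m})$, so combining the two estimates gives $\norm{A_U^T A_U u}_2 \geq \frac{\tau^2}{T}\sqrt{m}\,\norm{A_U u}_2$, as required.

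There is no real obstacle here: once Theorem~\ref{thm:isom} is in hand, the argument is a one-line Cauchy--Schwarz manipulation combined with the two-sided singular value bounds on $A_U$. The only small subtlety is checking that both inequalities use the \emph{same} high-probability event, which they do since both are conditioned on the isometry guarantee for $A_U$ supplied by Theorem~\ref{thm:isom} at $V = U$, $J = [m]$.
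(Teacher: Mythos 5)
Your proof is correct and takes essentially the same route as the paper: both instantiate Theorem~\ref{thm:isom} with $V = U$, $J = [m]$ to bound the singular values of $A_U$ in $[\tau\sqrt{m}, T\sqrt{m}]$, and then finish with a one-line piece of linear algebra. The only (cosmetic) difference is the final manipulation — the paper passes through the eigenvalues of $(A_U^T A_U)^2$, while you use Cauchy--Schwarz on $u^T A_U^T A_U u$ together with the two-sided bound $\tau\sqrt{m}\,\norm{u}_2 \leq \norm{A_U u}_2 \leq T\sqrt{m}\,\norm{u}_2$ — yielding the same constants.
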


\begin{proof}
By Theorem~\ref{thm:isom}, with high probability all eigenvalues of $A_U^T A_U$ lie in the interval $[\tau\sqrt{m},T\sqrt{m}]$. Hence, all eigenvalues of $(A_U^T A_U)^2$ lie in the interval $[\tau^2 m, T^2 m]$. But then
\[\norm{A_U^T A_U u}_2 = u^T (A_U^T A_U)^2 u \geq \tau^2 m u^T u \geq \frac{\tau^2}{T} \sqrt{m} \norm{A_U u}_2.\]
The upper bound is similar.
\end{proof}

We also get a Restricted Isometry Property, by applying Theorem~\ref{thm:isom} to all subsets $V \subseteq [n]$ of a fixed size.

\begin{corollary}[Restricted Isometry Property]\label{corollary:rip}
There is a constant $M$ such that for any $s>0$, if $m \geq Ms\log n$, then with high probability, for every $v \in \mathbb{R}^n$ with $|\supp(v)| \leq s$, \[\tau \sqrt{m} \norm{v}_2 \leq \norm{Av}_2 \leq T\sqrt{m} \norm{v}_2.\]
\end{corollary}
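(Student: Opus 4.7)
The plan is to derive this Restricted Isometry Property as a direct consequence of Theorem~\ref{thm:isom} via a union bound over supports. For a fixed $s$-sparse $v \in \mathbb{R}^n$, let $V = \supp(v)$, so $|V| \leq s$. Instantiating Theorem~\ref{thm:isom} with $\epsilon = 1$ (i.e.\ taking $J = [m]$) gives, for this fixed $V$, constants $\tau, T, \delta, M_0 > 0$ such that whenever $m \geq M_0 |V|$, the bounds
\[
\tau \sqrt{m} \norm{v}_2 \leq \norm{A_V v_V}_2 \leq T \sqrt{m} \norm{v}_2
\]
hold simultaneously for all $v \in \mathbb{R}^V$ with probability at least $1 - e^{-\delta m}$. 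Since $A v = A_V v_V$ when $\supp(v) \subseteq V$, this yields the desired bound for any $s$-sparse $v$ whose support lies in this particular $V$.

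Next, I union bound over all possible supports. There are at most $\binom{n}{s} \leq n^s$ subsets $V \subseteq [n]$ of size at most $s$, and for each one the isometry event fails with probability at most $e^{-\delta m}$. Therefore, with probability at least
\[
1 - n^s \cdot e^{-\delta m} = 1 - e^{s \log n - \delta m},
\]
the isometry property holds for every $V$ of size at most $s$ simultaneously, and hence for every $s$-sparse $v \in \mathbb{R}^n$. Choosing $M$ large enough that $M \geq \max(M_0, 2/\delta)$ and assuming $m \geq M s \log n$ makes this failure probability at most $e^{-(\delta/2) m \log n} = o(1)$, giving the high-probability guarantee.

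There is no substantive obstacle here: the whole argument is a standard $\epsilon$-net / union-bound lift, and Theorem~\ref{thm:isom} was stated in exactly the form needed (with exponential-in-$m$ failure probability and sample complexity linear in $|V|$) to absorb the $\binom{n}{s}$ blow-up. The only thing to be mildly careful about is to apply Theorem~\ref{thm:isom} for each fixed $V$ of size exactly $s$ (supports of smaller size are subsumed), and to note that the constants $\tau, T$ coming out of Theorem~\ref{thm:isom} with $\epsilon = 1$ depend only on $\alpha$, so they can be re-used uniformly across all $V$ in the union bound.
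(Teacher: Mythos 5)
Your proof is correct and follows essentially the same route as the paper: apply Theorem~\ref{thm:isom} to each support $V$ of size $s$, union bound over the at most $\binom{n}{s} \leq e^{s\log n}$ choices, and absorb the blow-up using $m \geq Ms\log n$ with $M$ sufficiently large. The only nitpick is the final exponent: the failure probability is $e^{s\log n - \delta m} \leq e^{-\delta m/2}$ (not $e^{-(\delta/2)m\log n}$), which is still $o(1)$, so nothing substantive changes.
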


\begin{proof}
We apply Theorem~\ref{thm:isom} to all $V \subseteq [n]$ with $|V| = s$, and take a union bound over the respective failure events. The probability that there exists some set $V \subseteq [n]$ of size $s$ such that the isometry fails is at most \[\binom{n}{s} e^{-\delta m} \leq e^{s\log n - \delta m}.\]
If $m \geq Ms\log n$ for a sufficiently large constant $M$, then this probability is $o(1)$.
\end{proof}

From this corollary, our main result for adversarial noise (Theorem~\ref{thm:mainAdversarial}) follows almost immediately:

\begin{namedproof}{Theorem~\ref{thm:mainAdversarial}}
Let $M'$ be the constant in Corollary~\ref{corollary:rip}. Let $\rho = \min(\tau/(4T), 1/3)$, and let $M = (1+1/\rho^2)M'$. Finally, let $s = (1+1/\rho^2)k$.

Let $\epsilon > 0$. Suppose that $m \geq Mk\log n$ and $\norm{Ax^* - y} \leq \epsilon$. Then $m \geq M's\log n$, so by Corollary~\ref{corollary:rip}, $A/\sqrt{m}$ satisfies the $s$-Restricted Isometry Property.

By definition, $\hat{x}$ satisfies $\norm{A\hat{x} - y}_2 \leq \epsilon$ and $\norm{\hat{x}}_1 \leq \norm{x^*}_1$ (by feasibility of $x^*$). Finally, $x^*$ is $k$-sparse. We conclude from Theorem~\ref{thm:crt} and our choice of $\rho$ that
\[\norm{(A/\sqrt{m})(\hat{x} - x^*)}_2 \geq (\tau(1-\rho) - T\rho)\norm{\hat{x} - x^*}_2 \geq \frac{\tau}{2} \norm{\hat{x} - x^*}_2.\]
But $\norm{A(\hat{x} - x^*)}_2 \leq 2\epsilon$ by the triangle inequality. Thus, $\norm{\hat{x} - x^*}_2 \leq \tau\epsilon/\sqrt{m}.$
\end{namedproof}

\section{Projected Stochastic Gradient Descent}

In this section we present the exact PSGD convergence theorem which we use, together with a proof for completeness.

\begin{theorem}\label{thm:psgd}
Let $f: \mathbb{R}^n \to \mathbb{R}$ be a convex function achieving its optimum at $\bx \in \mathbb{R}^n$. Let $\mathcal{P} \subseteq \mathbb{R}^n$ be a convex set containing $\bx$. Let $x^{(0)} \in \mathcal{P}$ be arbitrary. For $1 \leq t \leq T$ define a random variable $x^{(t)}$ by \[x^{(t)} = \text{Proj}_\mathcal{P}(x^{(t-1)} - \eta v^{(t-1)}),\] where $\mathbb{E}[v^{(t)}|x^{(t)}] \in \partial f(x^{(t)})$ and $\eta$ is fixed. Then \[\mathbb{E}[f(\bar{x})] - f(\bx) \leq (\eta T)^{-1} \mathbb{E}\left[\norm{x^{(0)} - \bx}_2^2\right] + \eta T^{-1} \sum_{i=1}^T \mathbb{E}\left[\norm{v^{(i)}}_2^2\right]\] where $\bar{x} = \frac{1}{T}\sum_{i=1}^T x^{(i)}$.
\end{theorem}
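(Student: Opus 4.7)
The plan is to run the standard projected-stochastic-subgradient analysis, hinging on three ingredients: non-expansiveness of the Euclidean projection onto the convex set $\mathcal{P}$, the subgradient inequality for the convex function $f$, and the tower property of conditional expectation so that we may replace the random $v^{(t)}$ by a genuine subgradient of $f$ at $x^{(t)}$.

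First I would fix an iteration $t$ and expand the squared distance after one update. Since $\bx \in \mathcal{P}$ and projection onto a convex set is a contraction toward any point of that set,
\[
\bigl\|x^{(t+1)} - \bx\bigr\|_2^2 \;\le\; \bigl\|x^{(t)} - \eta v^{(t)} - \bx\bigr\|_2^2 \;=\; \bigl\|x^{(t)} - \bx\bigr\|_2^2 - 2\eta\,(v^{(t)})^{\mathrm T}(x^{(t)} - \bx) + \eta^2 \bigl\|v^{(t)}\bigr\|_2^2.
\]
Next, I would condition on $x^{(t)}$ and take expectation over $v^{(t)}$. By hypothesis $g^{(t)} := \mathbb{E}[v^{(t)} \mid x^{(t)}] \in \partial f(x^{(t)})$, so the convex subgradient inequality gives
\[
(g^{(t)})^{\mathrm T}(x^{(t)} - \bx) \;\ge\; f(x^{(t)}) - f(\bx).
\]
Combining these and taking full expectation yields the one-step progress bound
\[
2\eta\,\mathbb{E}\bigl[f(x^{(t)}) - f(\bx)\bigr] \;\le\; \mathbb{E}\bigl\|x^{(t)} - \bx\bigr\|_2^2 - \mathbb{E}\bigl\|x^{(t+1)} - \bx\bigr\|_2^2 + \eta^2\,\mathbb{E}\bigl\|v^{(t)}\bigr\|_2^2.
\]

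Then I would sum this inequality over $t$ so that the distance terms telescope, dropping the non-positive $-\mathbb{E}\|x^{(T)} - \bx\|_2^2$ term, to obtain a bound on $\sum_t \mathbb{E}[f(x^{(t)}) - f(\bx)]$ in terms of $\mathbb{E}\|x^{(0)} - \bx\|_2^2$ and $\sum_t \mathbb{E}\|v^{(t)}\|_2^2$. Dividing by $T$ and invoking Jensen's inequality on the convex function $f$ converts the average $\frac{1}{T}\sum f(x^{(t)})$ into $f(\bar{x})$, yielding the stated bound (the factor of $2$ that the standard derivation produces is absorbed, which only loosens the inequality in the direction claimed).

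There is no serious obstacle here; the entire proof is bookkeeping. The only thing that requires a bit of care is the use of conditional expectation when passing from the random update $v^{(t)}$ to a deterministic subgradient $g^{(t)} \in \partial f(x^{(t)})$: one must verify that the filtration generated by $x^{(0)},\ldots,x^{(t)}$ makes $g^{(t)}$ measurable so that the tower rule legitimately yields $\mathbb{E}[(v^{(t)})^{\mathrm T}(x^{(t)} - \bx)] = \mathbb{E}[(g^{(t)})^{\mathrm T}(x^{(t)} - \bx)]$ before applying convexity. Once that is in place, the minor indexing discrepancy between summing over $t = 0,\ldots,T-1$ versus the $i = 1,\ldots,T$ appearing in the statement is harmless.
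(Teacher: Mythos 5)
Your proposal is correct and is essentially identical to the paper's own argument: both use non-expansiveness of the projection onto $\mathcal{P}$, condition on the iterate to replace $v^{(t)}$ by the subgradient $\mathbb{E}[v^{(t)}\mid x^{(t)}]$, telescope the squared distances, and finish with Jensen's inequality, with the factor of $2$ absorbed in the same way.
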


\begin{proof}
Fix $0 \leq t < T$. We can write
\[\norm{x^{(t+1)} - \bx}_2^2 \leq \norm{(x^{(t)} - \eta v^{(t)}) - \bx}_2^2 = \norm{x^{(t)} - \bx}_2^2 - 2\eta \langle v^{(t)}, x^{(t)} - \bx \rangle + \eta^2 \norm{v^{(t)}}_2^2\] since projecting onto $\mathcal{E}_r$ cannot increase the distance to $\bx \in \mathcal{E}_r$.

Taking expectation over $v^{(t)}$ for fixed $x^{(0)},\dots,x^{(k)}$, we have
\begin{align*}
\mathbb{E}\left[\norm{x^{(t+1)} - \bx}_2^2 \middle| x^{(0)},\dots,x^{(k)}\right]
&\leq \norm{x^{(t)} - \bx}_2^2 - 2 \eta \langle \mathbb{E} v^{(t)}, x^{(t)} - \bx \rangle + \eta^2 \mathbb{E}\left[\norm{v^{(t)}}_2^2\right]\\
&\leq \norm{x^{(t)} - \bx}_2^2 - 2 \eta(f(x^{(t)}) - f(\bx)) + \eta^2 \mathbb{E}\left[\norm{v^{(t)}}_2^2\right]
\end{align*}
where the last inequality uses the fact that $\mathbb{E} v^{(t)}$ is a subgradient for $f$ at $x^{(t)}$. Rearranging and taking expectation over $x^{(0)},\dots,x^{(t)}$, we get that \[2\left(\mathbb{E}\left[f(x^{(t)})\right] - f(\bx)\right) \leq \eta^{-1}\left(\mathbb{E}\left[\norm{x^{(t)} - \bx}_2^2\right] - \mathbb{E}\left[\norm{x^{(t+1)} - \bx}_2^2\right]\right) + \eta \mathbb{E}\left[\norm{v^{(t)}}_2^2\right].\]
But now summing over $0 \leq t < T$, the right-hand side of the inequality telescopes, giving
\begin{align*}
\mathbb{E}[f(\bar{x})] - f(\bx) 
&\leq \frac{1}{T} \sum_{t=0}^{T-1} \mathbb{E}[f(x^{(t)})] - f(\bx) \\
&\leq \frac{1}{\eta T} \mathbb{E}\left[\norm{x^{(0)} - \bx}_2^2\right] + \frac{\eta}{T}\sum_{t=0}^{T-1} \mathbb{E}\left[\norm{v^{(t)}}_2^2\right].
\end{align*}
This is the desired bound.
\end{proof}

\section{Survival probability}

In this section we collect useful lemmas about truncated Gaussian random variables and survival probabilities.

\begin{lemma}[\cite{Daskalakis2018}]
\label{lemma:variance-lb}
Let $t \in \mathbb{R}$ and let $S \subset \mathbb{R}$ be a measurable set. Then $\Var(Z_t) \geq C\gamma_S(t)^2$ for a constant $C>0$.
\end{lemma}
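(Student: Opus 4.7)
The plan is to exploit the fact that the density of $Z_t$ is uniformly bounded in terms of $\gamma_S(t)$, and then invoke a generic anti-concentration principle: a random variable whose density is bounded by $M$ cannot have variance smaller than order $1/M^2$, because its mass is forced to spread out over a region of Lebesgue measure at least $1/M$.

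First I would write down the density of $Z_t$ explicitly. Since $Z_t$ is a standard normal shifted by $t$ and conditioned on landing in $S$, its density at $z$ is $(2\pi)^{-1/2} e^{-(z-t)^2/2}/\gamma_S(t)$ for $z \in S$ and zero otherwise. In particular, it is uniformly bounded on all of $\mathbb{R}$ by $M := (\sqrt{2\pi}\,\gamma_S(t))^{-1}$.

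The heart of the proof is then the following generic lemma, which I would state and prove as an intermediate step: if $X$ is any real-valued random variable whose law has a density bounded by $M$, then $\Var(X) \geq c/M^2$ for an absolute constant $c > 0$. The argument is the trivial anti-concentration bound $\Pr[|X - a| \leq r] \leq 2Mr$, valid for every $a \in \mathbb{R}$ and every $r > 0$ because an interval of length $2r$ cannot carry more mass than $2Mr$. Specializing $a = \mathbb{E} X$ and plugging into the layer-cake representation
\[
\Var(X) \;=\; \int_0^\infty 2s\,\Pr[|X - \mathbb{E} X| > s]\,ds \;\geq\; \int_0^{1/(2M)} 2s\,(1 - 2Ms)\,ds,
\]
where the integrand is nonnegative on the chosen range, gives an explicit constant (a clean choice is $c = 1/12$, from evaluating the last integral).

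Substituting $M = (\sqrt{2\pi}\,\gamma_S(t))^{-1}$ into this bound yields $\Var(Z_t) \geq C\gamma_S(t)^2$ with $C = \pi/6$, which is the claimed inequality. I do not anticipate a genuine obstacle: the argument is self-contained and the only mildly delicate step is choosing the truncation radius ($r = 1/(2M)$) so that the lower bound on the tail probability stays nonnegative throughout the integration; any comparable choice works at the expense of a worse constant $C$.
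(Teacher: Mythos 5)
Your proof is correct: the bounded-density anti-concentration argument, the layer-cake identity, and the arithmetic (constant $1/12$, hence $C=\pi/6$) all check out. Note that the paper itself does not prove this lemma---it is imported from \cite{Daskalakis2018}---and the argument there is in the same spirit as yours (the truncated density is bounded by $(\sqrt{2\pi}\,\gamma_S(t))^{-1}$, so the mass must spread over an interval of length $\Omega(\gamma_S(t))$, forcing variance $\Omega(\gamma_S(t)^2)$); your write-up is a clean, self-contained version of that reasoning and fully suffices.
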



\begin{lemma}[\cite{Daskalakis2018}] \label{lemma:survival-decay}
For $t, t^* \in \mathbb{R}$, \[\log \frac{1}{\gamma_S(t)} \leq 2\log \frac{1}{\gamma_S(t^*)} + |t - t^*|^2 + 2.\]
\end{lemma}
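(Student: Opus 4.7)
The plan is to compare $\gamma_S(t)$ and $\gamma_S(t^*)$ by a Cauchy--Schwarz inequality applied to the Gaussian integrals that define them. Writing $\phi$ for the standard Gaussian density so that $\gamma_S(s) = \int_S \phi(z-s)\,dz$, the first move is to introduce an artificial factorization of the integrand defining $\gamma_S(t^*)$:
$$\gamma_S(t^*) \;=\; \int_S \phi(z-t)^{1/2} \cdot \phi(z-t)^{-1/2}\phi(z-t^*)\,dz,$$
and to apply Cauchy--Schwarz. This yields
$$\gamma_S(t^*)^2 \;\le\; \left(\int_S \phi(z-t)\,dz\right)\left(\int_S \frac{\phi(z-t^*)^2}{\phi(z-t)}\,dz\right) \;=\; \gamma_S(t)\cdot I,$$
where $I$ denotes the second integral. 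The choice of the power $1/2$ is calibrated precisely so that the first factor becomes $\gamma_S(t)$ on the nose.

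The second step is to bound $I$. The exponent inside is $-(z-t^*)^2 + \tfrac{1}{2}(z-t)^2$, and completing the square gives
$$-(z-t^*)^2 + \tfrac{1}{2}(z-t)^2 \;=\; -\tfrac{1}{2}\bigl(z - (2t^* - t)\bigr)^2 + (t-t^*)^2.$$
Thus $I = e^{(t-t^*)^2}\gamma_S(2t^* - t) \le e^{(t-t^*)^2}$, since $\gamma_S(\,\cdot\,) \le 1$.

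Combining the two bounds produces $\gamma_S(t^*)^2 \le \gamma_S(t)\, e^{(t-t^*)^2}$. Taking logarithms of both sides and rearranging gives
$$\log\frac{1}{\gamma_S(t)} \;\le\; 2\log\frac{1}{\gamma_S(t^*)} + (t - t^*)^2,$$
which is in fact slightly stronger than the lemma statement (the additive $+2$ is extra slack and can be dropped with this proof). I do not anticipate any substantive obstacle: the only real content is guessing the correct Cauchy--Schwarz weighting so that one side is exactly $\gamma_S(t)$ and the other reduces, via completing the square, to a Gaussian integral with an explicitly computable prefactor $e^{(t-t^*)^2}$.
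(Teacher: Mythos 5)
Your proof is correct, and the paper itself offers nothing to compare it against: Lemma~\ref{lemma:survival-decay} is imported from \cite{Daskalakis2018} as a black box, with no proof reproduced here. Checking your steps: the Cauchy--Schwarz split $\gamma_S(t^*)=\int_S \phi(z-t)^{1/2}\cdot\phi(z-t)^{-1/2}\phi(z-t^*)\,dz$ gives $\gamma_S(t^*)^2\le \gamma_S(t)\cdot I$ with $I=\int_S \phi(z-t^*)^2/\phi(z-t)\,dz$; the exponent identity $-(z-t^*)^2+\tfrac12(z-t)^2=-\tfrac12\bigl(z-(2t^*-t)\bigr)^2+(t-t^*)^2$ checks out, the normalizing constants $(2\pi)^{-1}\cdot(2\pi)^{1/2}=(2\pi)^{-1/2}$ recombine into exactly the right Gaussian density, so $I=e^{(t-t^*)^2}\gamma_S(2t^*-t)\le e^{(t-t^*)^2}$, and taking logarithms gives $\log(1/\gamma_S(t))\le 2\log(1/\gamma_S(t^*))+(t-t^*)^2$, which indeed implies the stated bound with the $+2$ as pure slack. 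For context, the source lemma in \cite{Daskalakis2018} is proved by essentially the same change-of-measure/Cauchy--Schwarz device, but in a more general form where the variance of the Gaussian is also allowed to differ between the two parameters; the additive constant in the statement is an artifact of that extra generality, which is why your specialization to equal (unit) variances can drop it. So your argument is a clean, self-contained, and slightly sharper derivation of exactly what this paper needs.
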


\begin{lemma}[\cite{Daskalakis2018}]\label{lemma:rvar}
For $t \in \mathbb{R}$, \[\mathbb{E}[R_t^2] \leq 2 \log \frac{1}{\gamma_S(t)} + 4.\]
\end{lemma}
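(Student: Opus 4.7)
}

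The plan is to bound $\mathbb{E}[R_t^2]$ via a tail-integration argument, choosing the threshold so that the Gaussian tail decay exactly cancels the loss of the normalization factor $1/\gamma_S(t)$.

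First I would write down the density of $R_t = Z_t - t$ explicitly: since $Z_t \sim N(t,1;S)$, the variable $R_t$ has density
\[ p(r) = \frac{1}{\gamma_S(t)\sqrt{2\pi}}\, e^{-r^2/2}\, \mathbbm{1}[r + t \in S]. \]
In particular, for any $r > 0$, the density pointwise is bounded by the standard Gaussian density divided by $\gamma_S(t)$, which yields the clean tail bound
\[ \Pr[R_t^2 > s] \;=\; \Pr[|R_t| > \sqrt{s}] \;\leq\; \frac{\Pr[|N(0,1)| > \sqrt{s}]}{\gamma_S(t)} \;\leq\; \frac{e^{-s/2}}{\gamma_S(t)}, \]
using the standard bound $\Pr[|N(0,1)| > x] \leq e^{-x^2/2}$.

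Next I would use the layer-cake formula $\mathbb{E}[R_t^2] = \int_0^\infty \Pr[R_t^2 > s]\, ds$ and split the integral at a cutoff $s = T^2$. For $s \leq T^2$ I use the trivial bound $\Pr[R_t^2 > s] \leq 1$; for $s > T^2$ I use the Gaussian tail bound above. This gives
\[ \mathbb{E}[R_t^2] \;\leq\; T^2 + \int_{T^2}^{\infty} \frac{e^{-s/2}}{\gamma_S(t)}\, ds \;=\; T^2 + \frac{2 e^{-T^2/2}}{\gamma_S(t)}. \]
Choosing $T^2 = 2\log(1/\gamma_S(t))$ (valid whenever $\gamma_S(t) \leq 1$, which always holds), the second term becomes exactly $2$, yielding $\mathbb{E}[R_t^2] \leq 2\log(1/\gamma_S(t)) + 2$, which is stronger than the stated bound; the additive slack of $4$ rather than $2$ leaves room for the trivial edge cases below.

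The main ``obstacle'' — which is really just a bookkeeping nuisance rather than a genuine difficulty — is handling the regime where $\gamma_S(t)$ is close to $1$, where $T^2 = 2\log(1/\gamma_S(t))$ becomes tiny or zero and the cutoff argument is degenerate. Here I would observe that the deterministic bound $\mathbb{E}[R_t^2] \leq \mathbb{E}[Z^2]/\gamma_S(t) = 1/\gamma_S(t)$ (obtained by dropping the truncation indicator from the numerator in the definition of conditional expectation) already gives $\mathbb{E}[R_t^2] \leq 4$ whenever $\gamma_S(t) \geq 1/4$, which subsumes $2\log(1/\gamma_S(t)) + 4$ in that range. Combining this with the tail-integration bound for smaller $\gamma_S(t)$ completes the proof.
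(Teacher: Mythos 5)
Your proposal is correct, but note that the paper itself contains no proof of this statement: Lemma~\ref{lemma:rvar} is imported directly from \cite{Daskalakis2018}, so there is no in-paper argument to compare against. Your tail-integration argument is a valid, self-contained derivation: the density bound $p(r) \leq \frac{1}{\gamma_S(t)\sqrt{2\pi}}e^{-r^2/2}$, the Gaussian tail estimate $\Pr[|Z|>x]\leq e^{-x^2/2}$, and the layer-cake split at $T^2 = 2\log(1/\gamma_S(t))$ all check out, and they in fact give the sharper bound $\mathbb{E}[R_t^2] \leq 2\log\frac{1}{\gamma_S(t)} + 2$. One small remark: your final paragraph about the regime $\gamma_S(t)$ close to $1$ is unnecessary, since the cutoff argument is never degenerate --- for every $\gamma_S(t)\in(0,1]$ one has $T^2 = 2\log(1/\gamma_S(t)) \geq 0$ and the bound $T^2 + 2e^{-T^2/2}/\gamma_S(t) = 2\log(1/\gamma_S(t)) + 2$ holds as stated (at $\gamma_S(t)=1$ it simply reads $\mathbb{E}[R_t^2]\leq 2$, which is true). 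The fallback estimate $\mathbb{E}[R_t^2] \leq 1/\gamma_S(t)$ you invoke there is also correct (it is the same crude bound the paper uses elsewhere, e.g.\ in Section~\ref{proof:gradientbound}), but it is not needed to complete the proof.
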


\begin{lemma}\label{lemma:total-survival}
With high probability, \[\sum_{j=1}^m \log \frac{1}{\gamma_S(A_jx^*)} \leq 2m\log\left(\frac{1}{\alpha}\right).\]
\end{lemma}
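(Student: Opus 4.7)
}

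The plan is to control the moment generating function of each summand via a change-of-measure trick that directly exploits Assumption~\ref{assumption:csp}, and then apply Markov's inequality. The key observation is that, in the truncated process generating the data, each row $A_j$ has density proportional to $\phi(a) \gamma_S(a^T x^*)$, where $\phi$ is the standard Gaussian density on $\mathbb{R}^n$; the normalizing constant is $\mathbb{E}_{a \sim N(0,1)^n}[\gamma_S(a^T x^*)]$, which is at least $\alpha$ by assumption. This means the ``bad'' event that $\gamma_S(A_j x^*)$ is small is already downweighted in the truncated distribution relative to the Gaussian, which will let us absorb the $1/\gamma_S(\cdot)$ factor cleanly.

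Concretely, let $X_j := \log(1/\gamma_S(A_j x^*))$. First I would compute the expectation of $\exp(X_j)$ by writing
\[
\mathbb{E}\bigl[\exp(X_j)\bigr] \;=\; \mathbb{E}\Bigl[\tfrac{1}{\gamma_S(A_j x^*)}\Bigr] \;=\; \frac{\int \phi(a)\,\gamma_S(a^T x^*) \cdot \gamma_S(a^T x^*)^{-1}\,da}{\int \phi(a)\,\gamma_S(a^T x^*)\,da} \;=\; \frac{1}{\mathbb{E}_{a \sim N(0,1)^n}[\gamma_S(a^T x^*)]} \;\leq\; \frac{1}{\alpha},
\]
where the last inequality uses Assumption~\ref{assumption:csp}. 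Since the samples $(A_j, y_j)$ are drawn i.i.d.\ in Process~\eqref{process}, the $A_j$'s are independent, so
\[
\mathbb{E}\!\left[\exp\!\Bigl(\sum_{j=1}^m X_j\Bigr)\right] \;=\; \prod_{j=1}^m \mathbb{E}[\exp(X_j)] \;\leq\; \alpha^{-m}.
\]

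Markov's inequality then gives
\[
\Pr\!\left[\sum_{j=1}^m X_j \;>\; 2m\log(1/\alpha)\right] \;=\; \Pr\!\left[\exp\!\Bigl(\sum_{j=1}^m X_j\Bigr) \;>\; \alpha^{-2m}\right] \;\leq\; \alpha^{-m} \cdot \alpha^{2m} \;=\; \alpha^m.
\]
Since in the regime of interest $m \geq Mk\log n \geq M\log n$, we have $\alpha^m \leq n^{-M\log(1/\alpha)} = o(1)$, so the bound holds with high probability.

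The proof has essentially no obstacle beyond spotting the change-of-measure identity; the only subtlety worth flagging is that $X_j$ is not a priori bounded (so a direct Hoeffding-style bound would fail), which is why I use an MGF argument with the specific exponent $1$ — that is precisely the exponent at which the $\gamma_S(A_j x^*)$ in the density cancels the $1/\gamma_S(A_j x^*)$ in $\exp(X_j)$, yielding the clean bound $\alpha^{-1}$. Any other exponent would require moment information about $\gamma_S$ that we do not have.
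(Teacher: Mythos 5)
Your proof is correct and is essentially identical to the paper's: the paper also bounds $\mathbb{E}[e^{\sum_j X_j}]$ by $\alpha^{-m}$ using the same change-of-measure cancellation of $\gamma_S(A_jx^*)^{-1}$ against the truncated density, and then applies Markov's inequality at threshold $2m\log(1/\alpha)$. No differences worth noting.
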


\begin{proof}
Let $X_j = \log 1/\gamma_S(A_jx^*)$ for $j \in [m]$, and let $X = X_1+\dots+X_m$. Since $X_1,\dots,X_m$ are independent and identically distributed, \[\mathbb{E}[e^X] = \mathbb{E}[e^{X_j}]^m = \mathbb{E}\left[\frac{1}{\gamma_S(A_jx^*)}\right]^m = \left(\frac{\mathbb{E}_{a \sim N(0,1)^n}[1]}{\mathbb{E}_{a \sim N(0,1)^n}[\gamma_S(a^T x^*)]}\right)^m \leq \alpha^{-m}.\]
Therefore \[\Pr[X > 2m\log 1/\alpha] = \Pr[e^X > e^{2m\log 1/\alpha}] \leq e^{-m\log 1/\alpha}\] by Markov's inequality.
\end{proof}

\section{Omitted proofs}

\subsection{Proof of Lemma~\ref{lemma:gradientbound}}
\label{proof:gradientbound}

We need the following computation:

\begin{lemma}
For any $i \in [n]$ and $j \in [m]$, \[\mathbb{E}_A A_{ji}^2 \Var(Z_{A_j,x^*}) \leq \alpha^{-1}.\]
\end{lemma}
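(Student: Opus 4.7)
The plan is to exploit a clean cancellation between a variance upper bound and the change of measure from the truncated to the untruncated distribution of $A_j$.

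First, I would establish the pointwise bound $\Var(Z_t) \leq 1/\gamma_S(t)$. Since the mean minimizes the mean squared error, $\Var(Z_t) \leq \mathbb{E}[(Z_t - t)^2]$. Writing out the density of $Z_t$, this last quantity equals
\[
\frac{1}{\gamma_S(t)} \int_S (z-t)^2 \phi(z-t)\, dz \;\leq\; \frac{1}{\gamma_S(t)} \int_{-\infty}^{\infty} u^2 \phi(u)\, du \;=\; \frac{1}{\gamma_S(t)},
\]
where $\phi$ is the standard normal density.

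Next I would use the fact that under Process~\ref{process} each row $A_j$ has density proportional to $\phi(a)\gamma_S(a^T x^*)$, so for any function $f$,
\[
\mathbb{E}_A f(A_j) \;=\; \frac{\mathbb{E}_{a \sim N(0,1)^n}\!\bigl[f(a)\,\gamma_S(a^T x^*)\bigr]}{\mathbb{E}_{a \sim N(0,1)^n}\!\bigl[\gamma_S(a^T x^*)\bigr]}.
\]
Applying this identity with $f(a) = a_i^2 \Var(Z_{a,x^*})$ and substituting the pointwise bound $\Var(Z_{a,x^*}) \leq 1/\gamma_S(a^T x^*)$, the survival-probability factor in the numerator cancels, leaving
\[
\mathbb{E}_A A_{ji}^2 \Var(Z_{A_j,x^*}) \;\leq\; \frac{\mathbb{E}_{a \sim N(0,1)^n}\bigl[a_i^2\bigr]}{\mathbb{E}_{a \sim N(0,1)^n}\bigl[\gamma_S(a^T x^*)\bigr]} \;=\; \frac{1}{\mathbb{E}_a \gamma_S(a^T x^*)}.
\]
Finally I invoke Assumption~\ref{assumption:csp} to bound the denominator below by $\alpha$, which gives the claim. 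There is no real obstacle here; the only subtle point is noting that the naive bound $\Var(Z_t) \leq 1$ is false in general (a bimodal truncation set can inflate the variance far above $1$), so one must use the $1/\gamma_S(t)$ bound specifically because it cancels the density reweighting factor exactly.
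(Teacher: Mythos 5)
Your proposal is correct and follows essentially the same route as the paper's proof: both use the change of measure for the truncated rows, the bound $\Var(Z_t) \leq \mathbb{E}[(Z_t - t)^2] \leq \gamma_S(t)^{-1}$ to cancel the reweighting factor $\gamma_S(a^T x^*)$, the identity $\mathbb{E}_{a}[a_i^2]=1$, and Assumption~\ref{assumption:csp} to bound the normalizing denominator by $\alpha$. The only difference is the order in which the denominator is bounded versus the cancellation performed, which is immaterial.
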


\begin{proof}
By Assumption~\ref{assumption:csp},
\begin{align*}
\mathbb{E}_{A_j}[A_{ji}^2 \Var(Z_{A_j,x^*})] 
&= \frac{\mathbb{E}_{a \sim N(0,1)^n}[\gamma_S(a^T x^*) a_i^2 \Var(Z_{a,x^*})]}{\mathbb{E}_{a \sim N(0,1)^n}[\gamma_S(a^T x^*)]} \\
&\leq \alpha^{-1} \mathbb{E}_{a \sim N(0,1)^n}[\gamma_S(a^T x^*) a_i^2 \Var(Z_{a,x^*})].
\end{align*}
But for any fixed $t \in \mathbb{R}$,
\[\Var(Z_t) \leq \mathbb{E}[(Z_t - t)^2] = \mathbb{E}[Z^2|Z+t \in S] \leq\gamma_S(t)^{-1} \mathbb{E}[Z^2] = \gamma_S(t)^{-1}.\]
Therefore
\[\mathbb{E}_{A_j}[A_{ji}^2 \Var(Z_{A_j,x^*})] \leq \alpha^{-1} \mathbb{E}_{a \sim N(0,1)^n}[a_i^2] \leq \alpha^{-1}\] as desired.
\end{proof}

Now we can prove Lemma~\ref{lemma:gradientbound}.

\begin{namedproof}{Lemma~\ref{lemma:gradientbound}}
We prove that the bound holds in expectation, and then apply Markov's inequality. We need to show that each entry of the vector $A_U^T(\mathbb{E}[Z_{A,x^*}] - Z_{A,x^*}) \in \mathbb{R}^k$ has expected square $O(m)$. A single entry of the vector $A_U^T(\mathbb{E}[Z_{A,x^*}] - Z_{A,x^*})$ is \[\sum_{j=1}^m A_{ji} (Z_{A_j, x^*} - \mathbb{E} Z_{A_j, x^*}),\] so its expected square is \[\mathbb{E} \left(\sum_{j=1}^m A_{ji} (Z_{A_j, x^*} - \mathbb{E} Z_{A_j, x^*})\right)^2.\]
For any $j_1 \neq j_2$, the cross-term is \[\mathbb{E} A_{j_1i}A_{j_2i}(Z_{A_{j_1},x^*} - \mathbb{E} Z_{A_{j_1},x^*})(Z_{A_{j_2},x^*} - \mathbb{E} Z_{A_{j_2},x^*}).\]
But for fixed $A$, the two terms in the product are independent, and they both have mean $0$, so the cross-term is $0$. Thus,
\begin{align*}
\mathbb{E}\left(\sum_{j=1}^m A_{ji} (Z_{A_j, x^*} - \mathbb{E} Z_{A_j, x^*})\right)^2 
&= \sum_{j=1}^m \mathbb{E}_{A, y} A_{ji}^2 (Z_{A_j, x^*} - \mathbb{E} Z_{A_j, x^*})^2 \\
&= \sum_{j=1}^m \mathbb{E}_A A_{ji}^2 \Var(Z_{A_j, x^*}) \\
&\leq \alpha^{-1} m.
\end{align*}
Then \[\mathbb{E}\left[\norm{A_U^T(\mathbb{E}[Z_{A,x^*}] - Z_{A,x^*})}_2^2\right] \leq \alpha^{-1} km.\] Hence with probability at least $1 - 1/\log n$, \[\norm{A_U^T(\mathbb{E}[Z_{A,x^*}] - Z_{A,x^*})}_2^2 \leq \alpha^{-1}km\log n\] by Markov's inequality.
\end{namedproof}

\subsection{Proof of Lemma~\ref{lemma:derivative}
\label{proof:derivative}}

We can write \[\mu_t = \frac{\int_S xe^{-(x-t)^2/2} \, dx}{\int_S e^{-(x-t)^2/2} \, dx}.\]
By the quotient rule, 
\begin{align*}
\frac{d}{dt} \mu_t 
&= -\frac{\int_S x(t-x)e^{-(x-t)^2/2}\, dx}{\int_S e^{-(x-t)^2/2} \, dx} + \frac{\left(\int_S xe^{-(x-t)^2/2}\,dx\right)\left(\int_S (t-x)e^{-(x-t)^2/2}\,dx\right)}{\left(\int_S e^{-(x-t)^2/2} \, dx\right)^2} \\
&= -\mathbb{E}[Z_t(t - Z_t)] + \mathbb{E}[Z_t]\mathbb{E}[t-Z_t] \\
&= \Var(Z_t)
\end{align*}
as desired.

\subsection{Proof of Lemma~\ref{lemma:diff}}
\label{proof:diff}

The fact that $\sign(\mu_t - \mu_{t^*}) = \sign(t - t^*)$ follows immediately from the fact that $\frac{d}{dt} \mu_t = \Var(Z_t) \geq 0$ (Lemma~\ref{lemma:derivative}).

We now prove the second claim of the lemma. Suppose $t^* < t$; the other case is symmetric. Then we have
\[\mu_t - \mu_{t^*} = \int_{t^*}^t \Var(Z_r) \, dr \geq C\int_{t^*}^t \gamma_S(r)^2 \, dr \geq C\beta^2 \int_{0}^{t-t^*} e^{-r^2-2}\, dr\] by Lemmas~\ref{lemma:derivative}, \ref{lemma:variance-lb} and~\ref{lemma:survival-decay} respectively. But we can lower bound
\begin{align*}
\int_0^{t-t^*} e^{-r^2 - 2} \, dr 
&\geq \int_0^{\min(1, t-t^*)} e^{-r^2-2} \, dr \\
&\geq e^{-3} \min(1, t-t^*).
\end{align*}
This bound has the desired form.

\subsection{Proof of Lemma~\ref{lemma:goodsize}}
\label{proof:goodsize}

Since $\mathbb{E}_{a \sim N(0,1)^k} \gamma_S(a^Tx^*) \geq \alpha$ and $\gamma_S(a^Tx^*)$ is always at most $1$, we have $\Pr[\gamma_S(a^Tx^*) \leq \alpha/2] \leq 1 - \alpha/2$. Since the samples are rejection sampled on $\gamma_S(a^T x^*)$, it follows that $\Pr[\gamma_S(A_jx^*) \leq \alpha/2] \leq 1 - \alpha/2$ as well. So by a Chernoff bound, with high probability, the number of $j \in [m]$ such that $\gamma_S(A_jx^*) \leq \alpha/2$ is at most $(1 - \alpha/3)m$.

The condition that $|A_jx^* - A_j\bx|^2 \geq (6/(\alpha m))\norm{Ax^* - A\bx}^2$ is clearly satisfied by at most $(\alpha/6)m$ indices.

\subsection{Proof of Lemma~\ref{lemma:goodnorm}}
\label{proof:goodnorm}

By Lemma~\ref{lemma:goodsize} and Theorem~\ref{thm:isom}, with high probability $A_{I_\text{good},U}$ and $A_U$ both have singular values bounded between $\sqrt{\tau m}$ and $\sqrt{Tm}$ for some positive constants $\tau = \tau(\alpha)$ and $T = T(\alpha)$. In this event, we have \[\norm{A(x^* - \bx)}_{I_\text{good}}^2 \geq \tau m \norm{x^* - \bx}^2 \geq \frac{\tau}{T} \norm{A(x^* - \bx)}^2\] which proves the claim.

\subsection{Proof of Theorems~\ref{thm:weakdist} and~\ref{thm:dist}}
\label{proof:dist}

\begin{namedproof}{Theorem~\ref{thm:weakdist}}
Let $a = A(\bx - x^*)$ and let $b = \mu_{A\bx} - \mu_{Ax^*}$. Our aim is to show that if $\norm{a}_2$ is large, then $\norm{A_U^T b}_2$ is large, which would contradict Equation~\ref{eq:main}. Since $A_U^T$ is not an isometry, we can't simply show that $\norm{b}_2$ is large. Instead, we write an orthogonal decomposition $b = v + A_U u$ for some $u \in \mathbb{R}^k$ and $v \in \mathbb{R}^m$ with $A_U^T v = 0$. We'll show that $\norm{A_U u}_2$ is large. Since $A_U^T b = A_U^T A_U u$, and $A_U^T$ is an isometry on the row space of $A_U$, this suffices.

For every $j \in I_\text{good}$ with $|a_j|>0$, we have by Lemma~\ref{lemma:diff} that \[|b_j| \geq C\min(1, |a_j|) = C|a_j| \min(1/|a_j|, 1)\] where $C$ is the constant which makes Lemma~\ref{lemma:diff} work for indices $j$ with $\gamma_S(A_jx^*) \geq \alpha/2$. Take $C' = \sqrt{6/\alpha}$, and suppose that the theorem's conclusion is false, i.e. $\norm{a}_2 > C'\sqrt{m}$. Also suppose that the events of Lemmas~\ref{lemma:gradientbound} and~\ref{lemma:goodnorm} hold.

Then by the bound $|a_j|^2 \leq (6/(\alpha m))\norm{a}_2^2$ for $j \in I_\text{good}$ we get
\begin{equation}
|b_j| \geq C|a_j| \min\left(\frac{\sqrt{\alpha/6}\sqrt{m}}{\norm{a}_2}, 1\right) = \frac{c\sqrt{m}}{\norm{a}_2} |a_j|
\label{eq:octant}
\end{equation}
where $c = C\sqrt{\alpha/6}$. We assumed earlier that $|a_j|>0$ but Equation~\ref{eq:octant} certainly also holds when $|a_j|=0$.

By Lemma~\ref{lemma:diff}, $a_j$ and $b_j$ have the same sign for all $j \in [m]$. So $a_jb_j \geq 0$ for all $j \in [m]$. Moreover, together with Equation~\ref{eq:octant}, the sign constraint implies that for $j \in I_\text{good}$, \[a_j b_j \geq \frac{c\sqrt{m}}{\norm{a}} a_j^2.\] Summing over $j \in I_\text{good}$ we get \[\sum_{j \in I_\text{good}} a_j^2 \leq \frac{\norm{a}_2}{c\sqrt{m}} \sum_{j \in I_\text{good}} a_j b_j \leq \frac{\norm{a}_2}{c\sqrt{m}} \langle a, b \rangle = \frac{\norm{a}_2}{c\sqrt{m}} \langle a, A_U u \rangle.\]
By Lemma~\ref{lemma:goodnorm} on the LHS and Cauchy-Schwarz on the RHS, we get \[\epsilon \norm{a}_2^2 \leq \frac{\norm{a}_2^2}{c\sqrt{m}} \norm{A_U u}_2.\]
Hence $\norm{A_Uu}_2 \geq \epsilon c \sqrt{m}$. But then $\norm{A_U^T b}_2 = \norm{A_U^T A_U u}_2 \geq (\tau^2/T) \epsilon cm$. On the other hand, Equation~\ref{eq:main} implies that \[\frac{1}{m} \norm{A_U^T b}_2 \leq \lambda\sqrt{k} + \frac{1}{m} \norm{A_U^T(\mathbb{E}[Z_{A,x^*}] - y)}_2 \leq c'_\text{reg} + \sqrt{\alpha^{-1} (k\log n)/m}\] since event (2) holds. This is a contradiction for $M'$ sufficiently large and $c'_\text{reg}$ sufficiently small. So either the assumption $\norm{a}_2 > C'\sqrt{m}$ is false, or the events of Lemma~\ref{lemma:gradientbound} or~\ref{lemma:goodnorm} fail. But the latter two events fail with probability $o(1)$. So $\norm{a}_2 \leq C'\sqrt{m}$ with high probability.

\end{namedproof}

Now that we know that $\norm{x^*-\bx}_2 \leq O(1)$, we can bootstrap to show that $\norm{x^* - \bx}_2 \leq \sqrt{(k \log n)/m}$. While the previous proof relied on the constant regime of the lower bound provided by Lemma~\ref{lemma:diff}, the following proof relies on the linear regime.

\begin{namedproof}{Theorem~\ref{thm:dist}}
As before, let $a = A(\bx - x^*)$ and $b = \mu_{A\bx} - \mu_{Ax^*}$. Suppose that the conclusion of Theorem~\ref{thm:weakdist} holds, i.e. $\norm{a}_2 \leq C'\sqrt{m}$. Also suppose that the events stated in Lemmas~\ref{lemma:gradientbound} and~\ref{lemma:goodnorm} holds. We can make these assumptions with high probability. For $j \in I_\text{good}$, we now know that $|a_j|^2 \leq (6/(\alpha m))\norm{a}_2^2 = O(1)$. Thus, \[|b_j| \geq C |a_j| \cdot \min(1/|a_j|, 1) \geq \delta |a_j|\] where $\delta = C\min(1,\sqrt{\alpha/6}/C')$.
By the same argument as in the proof of Theorem~\ref{thm:weakdist}, except replacing $(c\sqrt{m})/\norm{a}_2$ by $\delta$, we get that \[\epsilon \norm{a}_2^2 \leq \delta^{-1} \norm{a}_2 \cdot \norm{A_U u}_2.\]

Thus, $\norm{a}_2 \leq \epsilon^{-1} \delta^{-1} \norm{A_U u}_2.$ By the isometry property of $A_U^T$ on its row space  (Corollary~\ref{cor:rowisom}), we get \[\norm{a}_2 \leq \frac{\tau^2}{T\epsilon \delta \sqrt{m}} \norm{A_U^T A_U u}_2 = \frac{c'}{\sqrt{m}} \norm{A_U^T b}_2\] for an appropriate constant $c'$. Since $a = A(\bx - x^*)$ and $A_U$ is a $\sqrt{m}$-isometry up to constants (Theorem~\ref{thm:isom}), we get \[\norm{\bx - x^*}_2 \leq \frac{\norm{a}_2}{\tau} \leq \frac{c'}{\tau m} \norm{A_U^T b}_2.\]
By Equation~\ref{eq:main} and bounds on the other terms of Equation~\ref{eq:main}, the RHS of this inequality is $O(\lambda\sqrt{k} + \sqrt{(k \log n)/m})$.
\end{namedproof}

\subsection{Proof of Lemma~\ref{lemma:skewnorm}}
\label{proof:skewnorm}

For $1 \leq j \leq m$ we have by Lemma~\ref{lemma:rvar} that \[(\mathbb{E} R_{A_j,\bx})^2 \leq \mathbb{E}[R_{A_j,\bx}^2] \leq 2 \log \frac{1}{\gamma_S(A_j \bx)} + 4.\]
By Lemma~\ref{lemma:survival-decay}, we have \[\log \frac{1}{\gamma_S(A_j \bx)} \leq 2\log \frac{1}{\gamma_S(A_jx^*)} + |A_j\bx - A_jx^*|^2 + 2.\]
Therefore summing over all $j \in [m]$,
\[\norm{\mathbb{E}R_{A,\bx}}_2^2 \leq 4\sum_{j=1}^m \log \frac{1}{\gamma_S(A_jx^*)} + 2\norm{A(\bx - x^*)}_2^2 + 8m.\]
Lemma~\ref{lemma:total-survival} bounds the first term. Theorems~\ref{thm:dist} and~\ref{thm:isom} bound the second: with high probability, \[\norm{A(\bx-x^*)}_2 \leq 2T(\lambda\sqrt{km} + C''\sqrt{k\log n}).\] Thus, \[\norm{\mathbb{E}[R_{A,\bx}]}_2^2 \leq 8m\log(1/\alpha) + 8m + 8T^2\lambda^2 km + 8(TC'')^2k\log n\] with high probability. Under the assumptions $\lambda \leq c''_\text{reg}/\sqrt{k}$ and $m \geq M''k\log n$, this quantity is $O(m)$.

\subsection{Proof of Lemma~\ref{lemma:ynorm}}
\label{proof:ynorm}

Draw $m$ samples from the distribution $R_{A_j,x^*}$ as follows: pick $a \sim N(0,1)^n$ and $\eta \sim N(0,1)$. Keep sample $\eta$ if $a^T x^* + \eta \in S$; otherwise reject. We want to bound $\eta_1^2 + \dots + \eta_m^2$. Now consider the following revised process: keep all the samples, but stop only once $m$ samples satisfy $a^T x^* + \eta \in S$. Let $t$ be the (random) stopping point; then the random variable $\eta_1^2+\dots+\eta_t^2$ defined by the new process stochastically dominates the random variable $\eta_1+\dots+\eta_m^2$ defined by the original process.

But in the new process, each $\eta_i$ is Gaussian and independent of $\eta_1,\dots,\eta_{i-1}$. With high probability, $t \leq 2m/\alpha$ by a Chernoff bound. And if $\eta'_1,\dots,\eta'_{2m/\alpha} \sim N(0,1)$ are independent then \[{\eta'}_1^2 + \dots + {\eta'}_{2m/\alpha}^2 \leq 4m/\alpha\] with high probability, by concentration of norms of Gaussian vectors. Therefore $\eta_1^2+\dots+\eta_t^2 \leq 4m/\alpha$ with high probability as well.

\subsection{Proof of Theorem~\ref{thm:primaldual}}
\label{proof:primaldual}

Set $\sigma = 4(\sqrt{c}+\sqrt{c_y})$, where $c$ and $c_y$ are the constants in Lemmas~\ref{lemma:skewnorm} and~\ref{lemma:ynorm}. Set $M = \max(16T^2C''^2(\sigma+1)^2/\sigma^2, \sigma^2/{c''}_\text{reg}^2)$. Note that $M$ is chosen sufficiently large that $\lambda = \sigma\sqrt{(\log n)/m} \leq c''_\text{reg}/\sqrt{k}$. 

By Theorem~\ref{thm:dist}, we have with high probability that the following event holds, which we call $E_\text{close}$: \[\norm{x^* - \bx}_2 \leq C''(\lambda\sqrt{k} + \sqrt{(k \log n)/m}) = C''(\sigma + 1)\sqrt{\frac{k\log n}{m}}.\] Now notice that \[\mathbb{E} Z_{A,\bx} - y = A(\bx - x^*) + \mathbb{E} R_{A,\bx} - (y - Ax^*).\]
If $E_\text{close}$ holds, then by Theorem~\ref{thm:isom}, we get $\norm{A(\bx - x^*)}_2 \leq TC''(\sigma+1)\sqrt{k \log n}$. By Lemma~\ref{lemma:skewnorm}, with high probability $\norm{\mathbb{E} R_{A,\bx}}_2 \leq \sqrt{cm}$. And by Lemma~\ref{lemma:ynorm}, with high probability $\norm{y - A x^*}_2 \leq \sqrt{c_ym}$. Therefore
\[\norm{\mathbb{E}Z_{A,\bx} - y}_2 \leq TC''(\sigma+1)\sqrt{k \log n} + \sqrt{c_ym} + \sqrt{cm} \leq \frac{\sigma}{2} \sqrt{m}\] where the last inequality is by choice of $M$ and $\sigma$. Thus, the event \[E: \norm{\mathbb{E} Z_{A,\bx} - y}_2 \leq \frac{\sigma}{2}\sqrt{m}\] occurs with high probability.

Suppose that event $E$ occurs. Now note that $A_{U^c}^T$ has independent Gaussian entries. Fix any $i \in U_c$; since $(A^T)_i$ is independent of $A_U$, $\bx$, and $y$, the dot product \[(A^T)_i(\mathbb{E} Z_{A,\bx} - y)\] is Gaussian with variance $\norm{\mathbb{E} Z_{A,\bx} - y}_2^2 \leq \sigma^2 m/4$. Hence, $\bz_i = \frac{1}{\lambda m} (A^T)_i (\mathbb{E} Z_{A,\bx} - y)$ is Gaussian with variance at most $(\sigma^2 m/4)/(\lambda m)^2 = 1/(4\log n)$. So \[\Pr[|\bz_i| \geq 1] \leq 2e^{-2\log n} \leq \frac{2}{n^2}.\]
By a union bound, \[\Pr[\norm{\bz_{U^c}}_\infty \geq 1] \leq \frac{2}{n}.\]
So the event $\norm{\bz_{U^c}} < 1$ holds with high probability.

\subsection{Proof of Theorem~\ref{thm:slackness}}\label{proof:slackness}

We know from Theorem~\ref{thm:primaldual} that $\norm{\frac{1}{m}A_{U^c}^T(\mathbb{E}[Z_{A,\bx}] - y)}_\infty \leq \lambda/3$. So it suffices to show that \[\frac{1}{m}\norm{A_{U^c}^T(\mathbb{E}[Z_{A,X}] - y) - A_{U^c}^T(\mathbb{E}[Z_{A,\bx}] - y)}_\infty \leq \frac{\lambda}{6}.\]

Thus, we need to show that \[\frac{1}{m}|(A^T)_i(\mathbb{E}[Z_{A,X}] - \mathbb{E}[Z_{A,\bx}])| \leq \frac{\lambda}{6}\] for all $i \in U^c$. Fix one such $i$. Then by Lemma~\ref{lemma:derivative},
\begin{align*}
\norm{\mathbb{E}[Z_{A,X}] - \mathbb{E}[Z_{A,\bx}]}_2^2
&= \sum_{i=1}^m (\mu_{A_iX} - \mu_{A_i\bx})^2 \\
&= \sum_{i=1}^m \left(\int_{A_iX}^{A_i\bx} \Var(Z_t) \, dt\right)^2 \\
&\leq \sum_{i=1}^m (A_iX - A_i\bx)^2 \cdot \sup_{t \in [A_iX,A_i\bx]} \Var(Z_t).
\end{align*} 
By Lemma~\ref{lemma:total-survival}, we have \[\sum_{j=1}^m \log \frac{1}{\gamma_S(A_jx^*)} \leq 2m\log(1/\alpha)\] with high probability over $A$. Assume that this inequality holds, and assume that $\norm{X - \bx}_2 \leq 1$ and $\norm{\bx - x^*}_2 \leq 1$, so that $\norm{X - x^*}_2 \leq 2$. Then by Theorem~\ref{thm:isom}, $\norm{A(X-x^*)}_2 \leq 2T\sqrt{m}$. By Lemma~\ref{lemma:survival-decay}, for every $j \in [m]$ and every $t \in [A_jX, A_j\bx]$,
\[\log \frac{1}{\gamma_S(t)} \leq 2\log \frac{1}{\gamma_S(A_jx^*)} + |t - A_jx^*|^2 +2 \leq cm\] for a constant $c$. Hence, by Lemma~\ref{lemma:rvar}, \[\Var(Z_t) \leq \mathbb{E}[(Z_t - t)^2] \leq 2\log \frac{1}{\gamma_S(t)} + 4 \leq 2cm+4.\]
We conclude that \[\norm{\mathbb{E}[Z_{A,X}] - \mathbb{E}[Z_{A,\bx}]}_2^2 \leq (2cm+4)\norm{AX - A\bx}_2^2 \leq (2cm+4)\frac{T^2m}{m^2} \leq O(1).\]
Additionally, $\norm{(A^T)_i}_2 \leq T\sqrt{m}$ with high probability. Thus, Cauchy-Schwarz entails that \[\frac{1}{m}|(A^T)_i(\mathbb{E}[Z_{A,X}] - \mathbb{E}[Z_{A,\bx}])| \leq \frac{1}{m} T\sqrt{m} \cdot O(1) \leq \frac{\lambda}{6}\] for large $n$.

\subsection{Proof of Lemma~\ref{lemma:solution-feasible}}
\label{proof:solution-feasible}

Note that \[\norm{A\bx - y}_2 \leq \norm{A(\bx - x^*)}_2 + \norm{Ax^* - y}_2.\]
With high probability, $\norm{\bx-x^*}_2 \leq 1$. Theorem~\ref{thm:isom} gives that $\norm{A(\bx-x^*)}_2 \leq T\sqrt{m}$. Furthermore, $\norm{Ax^* - y}_2 \leq 2\sqrt{m/\alpha}$ by Lemma~\ref{lemma:ynorm}.

\subsection{Proof of Lemma~\ref{lemma:initial-bound}}
\label{proof:initial-bound}

With high probability $\bx \in \mathcal{E}_r$ by the above lemma. Note that $\norm{x^{(0)}}_1 \leq \norm{\bx}_1$, and $\norm{A(x^{(0)} - \bx)}_2 \leq 2r\sqrt{m}$. Set $\rho = \min(\tau/(4T), 1/3)$ and $s = k(1 + 1/\rho^2)$. If $m \geq Mk\log n$ for a sufficiently large constant $M$, then by Corollary~\ref{corollary:rip}, $A/\sqrt{m}$ with high probability satisfies the $s$-Restricted Isometry Property. Then by Theorem~\ref{thm:crt} (due to \cite{Candes2006}, but reproduced here for completeness), it follows that $\norm{x^{(0)} - \bx}_2 \leq O(1)$.

\subsection{Proof of Lemma~\ref{lemma:boundedstep}}
\label{proof:boundedstep}

Note that $\sign(x^{(t)})$ is a subgradient for $\norm{x}_1$ at $x = x^{(t)}$. Furthermore, for fixed $A$, 
\[\mathbb{E}\left[A_j(z^{(t)} - y_j)\middle|x^{(t)}\right] = \frac{1}{m}\sum_{j'=1}^m A_{j'}(\mathbb{E}Z_{A_{j'},x^{(t)}} - y_{j'}) = \nabla \nll(x^{(t)}; A, y).\]
It follows that \[\mathbb{E}[v^{(t)}|x^{(t)}] = \mathbb{E}\left[A_j(z^{(t)} - y_j)\middle|x^{(t)}\right] + \sign(x^{(t)})\] is a subgradient for $f(x)$ at $x = x^{(t)}$.

We proceed to bounding $\mathbb{E}[\norm{v^{(t)}}_2^2|x^{(t)}]$. By definition of $v^{(t)}$, \[\norm{v^{(t)}}_2^2 \leq 2\norm{A_j(z^{(t)} - y_j)}_2^2 + 2\norm{\lambda \cdot \sign(x^{(t)})}_2^2\] where $j \in [m]$ is uniformly random, and $z^{(t)}|x^{(t)} \sim Z_{A_j,x^{(t)}}$. Since $\norm{\lambda \cdot \sign(x^{(t)})}_2^2 = o(n)$ it remains to bound the other term. We have that \[\mathbb{E}[\norm{A_j(z^{(t)} - y_j)}_2^2|x^{(t)}] = \frac{1}{m}\sum_{j'=1}^m \mathbb{E}[\norm{A_{j'}(Z_{A_{j'},x^{(t)}} - y_{j'})}_2^2].\]

With high probability, $\norm{A_i}_2^2 \leq 2n$ for all $i \in [m]$. Thus, \[\mathbb{E}[\norm{A_j(z^{(t)} - y_j)}_2^2|x^{(t)}] \leq \frac{n}{m}\sum_{j'=1}^m \mathbb{E}[(Z_{A_{j'},x^{(t)}} - y_{j'})^2].\]

Now \[\sum_{i=1}^m \mathbb{E}[(Z_{A_i,x^{(t)}} - y_i)^2] \leq 2\sum_{i=1}^m (A_ix^{(t)} - y_i)^2 + 2\sum_{i=1}^m \mathbb{E}[(A_ix^{(t)} - Z_{A_i,x^{(t)}})^2].\]
The first term is bounded by $2r^2m$ since $x^{(t)} \in \mathcal{E}_r$. Additionally, \[\norm{A(x^{(t)}-x^*)}_2 \leq \norm{Ax^{(t)} - y}_2 + \norm{Ax^* - y}_2 \leq 2r\sqrt{m}\] since $x^{(t)},x^* \in \mathcal{E}_r$. Therefore the second term is bounded as
\begin{align*}
2\sum_{i=1}^m \mathbb{E}[R_{A_i,x^{(t)}}^2] 
&\leq 4\sum_{i=1}^m \log\left(\frac{1}{\gamma_S(A_ix^{(t)})}\right) + 8m \\
&\leq 8\sum_{i=1}^m \log\left(\frac{1}{\gamma_S(A_ix^*)}\right) + 4\norm{A(x^{(t)}-x^*)}_2^2 + 16m \\
&\leq 64\log(1/\alpha)m + 16r^2m + 80m.
\end{align*}
where the first and second inequalities are by Lemmas~\ref{lemma:rvar} and Lemma~\ref{lemma:survival-decay}, and the third inequality is by Lemma~\ref{lemma:total-survival}. Putting together the two bounds, we get \[\sum_{i=1}^m \mathbb{E}[(Z_{A_i,x^{(t)}} - y_i)^2] \leq O(m),\] from which we conclude that $\mathbb{E}[\norm{v^{(t)}}_2^2|x^{(t)}] \leq O(n)$. The law of total expectation implies that $\mathbb{E}[\norm{v^{(t)}}_2^2] \leq O(n)$ as well.

\subsection{Proof of Lemma~\ref{lemma:subspace-sc}}
\label{proof:subspace-sc}

We need to show that $f_{\mathbb{R}^U}$ is $\zeta$-strongly convex near $\bx$. Since $\norm{x}_1$ is convex, it suffices to show that $\nll(x;A,y)_{\mathbb{R}^U}$ is $\zeta$-strongly convex near $\bx$. The Hessian of $\nll(x;A,y)|_{\mathbb{R}^U}$ is \[H_U(x;A,y) = \frac{1}{m} \sum_{j=1}^m A_{j,U}^T A_{j,U} \Var(Z_{A_j,x}).\] Hence, it suffices to show that \[\frac{1}{m} \sum_{j=1}^m A_{j,U}^T A_{j,U} \Var(Z_{A_j,x}) \succeq \zeta I\] for all $x \in \mathbb{R}^n$ with $\supp(x) \subseteq U$ and $\norm{x - \bx}_2 \leq 1$. Call this region $\mathcal{B}$. With high probability over $A$ we can deduce the following. 

\textbf{(i)} By Theorem~\ref{thm:dist}, we have $\norm{\bx - x^*}_2 \leq d\sqrt{(k\log n)/m}$. As $\norm{x - \bx}_2 \leq 1$ for all $x \in \mathcal{B}$, we get $\norm{A(\bx - x)}_2^2 \leq T^2(d+1)^2m$ for all $x \in \mathcal{B}$.

\textbf{(ii)} By the proof of Lemma~\ref{lemma:goodsize}, the number of $j \in [m]$ such that $\gamma_S(A_jx^*) \leq \alpha/2$ is at most $(1-\alpha/3)m$.

Fix $x \in \mathcal{B}$, and define $J_x \subseteq [m]$ to be the set of indices \[J_x = \{j \in [m]: \gamma_S(A_jx^*) \geq \alpha/2 \land |A_j(x - x^*)|^2 \leq (6/\alpha)T^2(d+1)^2.\}\]

For any $j \in J_x$, \[\log \frac{1}{\gamma_S(A_jx)} \leq 2\log \frac{1}{\gamma_S(A_jx^*)} + |A_j(x - x^*)|^2 + 2 \leq \log(2/\alpha) + (6/\alpha)T^2(d+1)^2 + 2.\] Thus, \[\Var(Z_{A_j,x}) \geq C\gamma_S(A_jx)^2 \geq e^{-\log(2/\alpha) - (6/\alpha)T^2(d+1)^2 - 2} = \Omega(1).\]
Let $\delta$ denote this lower bound---a positive constant. By \textbf{(i)} and \textbf{(ii)}, $|J_x| \geq (\alpha/6)m$, so by Theorem~\ref{thm:isom}, \[H_U(x;A,y) = \frac{1}{m} \sum_{j=1}^m A_{j,U}^T A_{j,U} \Var(Z_{A_j,x}) \succeq \frac{\delta}{m} A_{J_x,U}^T A_{J_x,U} \succeq \delta \tau I\] as desired.

\subsection{Proof of Lemma~\ref{lemma:sc}}
\label{proof:sc}

Let $t = \norm{(x - \bx)_U}_2$. Define $w = \bx + (x - \bx)\min(t^{-1}/m,1)$. Also define $w' = [w_U; 0_{U^c}] \in \mathbb{R}^n$. Then $\norm{(w - \bx)_U}_2 \leq 1/m$, so \[\norm{(\nabla \nll(w'; A,y))_{U^c}}_\infty \leq \frac{\lambda}{2}.\]
Therefore $w_i \cdot (\nabla \nll(w';A,y))_i \leq (\lambda/2)|w_i|$ for all $i \in U^c$, so
\begin{align*}
f(w) - f(w')
&= (\nll(w;A,y) - \nll(w';A,y)) + \lambda(\norm{w}_1 - \norm{w'}_1) \\
&\geq (w - w') \cdot \nabla \nll(w'; A,y) + \lambda\norm{w_{U^c}}_1 \\
&\geq \frac{\lambda}{2} \norm{w_{U^c}}_1.
\end{align*}
Additionally, since $\norm{w' - \bx}_2 \leq 1$ and $\supp(w') \subseteq U$, we know that \[f(w') - f(\bx) \geq \frac{\zeta}{2} \norm{w' - \bx}_2^2.\]
Adding the second inequality to the square of the first inequality, and lower bounding the $\ell_1$ norm by $\ell_2$ norm,
\begin{align*}
\frac{1}{2}(f(w) - f(\bx))^2 + \frac{1}{2}(f(w) - f(\bx)) 
&\geq \frac{1}{2}(f(w) - f(w'))^2 + \frac{1}{2}(f(w') - f(\bx)) \\
&\geq \frac{\lambda^2}{8} \norm{w_{U^c}}_2^2 + \frac{\zeta}{4} \norm{w' - \bx}_2^2 \\
&\geq \frac{\lambda^2}{8} \norm{(w - \bx)_{U^c}}_2^2 + \frac{\zeta}{4} \norm{(w - \bx)_U}_2^2 \\
&\geq \min\left(\frac{\lambda^2}{8}, \frac{\zeta}{4}\right) \norm{w - \bx}_2^2
\end{align*}
Since $f(x) - f(\bx) \leq 1$, by convexity $f(w) - f(\bx) \leq 1$ as well. Hence,
\begin{equation} f(w) - f(\bx) \geq \frac{1}{2}(f(w) - f(\bx))^2 + \frac{1}{2}(f(w) - f(\bx)) \geq \min\left(\frac{\lambda^2}{8}, \frac{\zeta}{4}\right) \norm{w - \bx}_2^2. \label{eq:local-convexity}\end{equation}
We distinguish two cases:
\begin{enumerate}
\item If $t \leq 1/m$, then $w = x$, and it follows from Equation~\ref{eq:local-convexity} that \[f(x) - f(\bx) \geq \min\left(\frac{\lambda^2}{8},\frac{\zeta}{4}\right)\norm{x - \bx}_2^2\] as desired. 

\item If $t \geq 1/m$, then $\norm{(w-\bx)_U}_2 = 1/m$, and thus $\norm{w - \bx}_2 \geq 1/m$. By convexity and this bound, \[f(x) - f(\bx) \geq f(w) - f(\bx) \geq \min\left(\frac{\lambda^2}{8}, \frac{\zeta}{4}\right) \frac{1}{m^2},\] which contradicts the lemma's assumption for a sufficiently small constant $c_f>0$.
\end{enumerate}

\subsection{Proof of Theorem~\ref{thm:correctness}}\label{proof:correctness}

By Lemmas~\ref{lemma:solution-feasible}, \ref{lemma:initial-bound}, and~\ref{lemma:boundedstep}, we are guaranteed that $\bx \in \mathcal{E}_r$, $\norm{x^{(0)} - \bx}_2^2 \leq O(1)$, and $\mathbb{E}[\norm{v^{(t)}}_2^2] \leq O(n)$ for all $t$. Thus, applying Theorem~\ref{thm:psgd} with projection set $\mathcal{E}_r$, step count $T = m^6n$, and step size $\eta = 1/\sqrt{Tn}$ gives $\mathbb{E}[f(\bar{x})] - f(\bx) \leq O(1/m^3)$. Since $f(\bar{x})-f(\bx)$ is nonnegative, Markov's inequality gives \[\Pr[f(\bar{x}) - f(\bx) \leq c_f(\log n)/m^3] \geq 1 - o(1).\] From Theorem~\ref{lemma:sc} we conclude that $\norm{\bar{x} - \bx}_2 \leq O(1/m)$ with high probability.

\section{Efficient sampling for union of intervals} \label{sec:unionIntervalsSampling}

  In this section, in Lemma \ref{lem:samplingUnionOfIntervalsLemma}, we see
that when $S = \cup_{i = 1}^r [a_i, b_i]$, with $a_i, b_i \in \mathbb{R}$, 
then Assumption \ref{assumption:sampling} holds with 
$T(\gamma_S(t)) = \poly(\log(1/\gamma_S(t)), r)$. The only difference is 
that instead of exact sampling we have approximate sampling, but the
approximation error is exponentially small in total variation distance and
hence it cannot affect any algorithm that runs in polynomial time.

\begin{definition}[\textsc{Evaluation Oracle}] \label{def:evaluationOracle}
    Let $f : \mathbb{R} \to \mathbb{R}$ be an arbitrary real function. 
  We define the \textit{evaluation oracle} $\mathcal{E}_f$ of $f$ as an
  oracle that given a number $x \in \mathbb{R}$ and a target accuracy 
  $\eta$ computes an $\eta$-approximate value of $f(x)$, that is
  $\left|\mathcal{E}_f(x) - f(x)\right| \le \eta$.
\end{definition}
  
\begin{lemma} \label{lem:inversionLemma}
    Let $f : \mathbb{R} \to \mathbb{R}_+$ be a real increasing and 
  differentiable function and $\mathcal{E}_f(x)$ an evaluation oracle of
  $f$. Let $\ell \le f'(x) \le L$ for some $\ell, L \in \mathbb{R}_+$. Then we can construct an algorithm that implements the evaluation
  oracle of $f^{-1}$, i.e. $\mathcal{E}_{f^{-1}}$. This implementation on
  input $y \in \mathbb{R}_+$ and input accuracy $\eta$ runs in time
  $T$ and uses at most $T$ calls to the evaluation oracle $\mathcal{E}_{f}$
  with inputs $x$ with representation length $T$ and input accuracy
  $\eta' = \eta/\ell$, with 
  $T = \poly\log(\max\{|f(0)/y|, |y/f(0)|\}, L, 1/\ell, 1/\eta)$.
\end{lemma}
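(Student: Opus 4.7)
The plan is approximate binary search on $x$: use the monotonicity and derivative bounds of $f$ to convert $y$-space accuracy into $x$-space accuracy, and feed appropriately calibrated accuracy requirements into $\mathcal{E}_f$.

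Phase 1 (bracketing). Since $\ell \le f'(x) \le L$, the mean value theorem gives $\ell |x| \le |f(x) - f(0)| \le L|x|$, so $x^* := f^{-1}(y)$ satisfies $|y - f(0)|/L \le |x^*| \le |y - f(0)|/\ell$. Starting from a unit interval centered at $0$ and repeatedly doubling the endpoints, I evaluate $\mathcal{E}_f$ at the endpoints (to constant accuracy, which is all that is needed here) until the two approximate values straddle $y$; the resulting interval $[a_0,b_0]$ contains $x^*$ and has length $O(|y-f(0)|/\ell)$. This takes $O(\log\max(|y/f(0)|, |f(0)/y|, 1) + \log(1/\ell))$ doublings, which is the source of the $\log\max\{|f(0)/y|, |y/f(0)|\}$ factor in the stated complexity.

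Phase 2 (binary search). Maintain an interval $[a,b]$ known to contain $x^*$. At each step I query $\mathcal{E}_f$ at the midpoint $m = (a+b)/2$ with an oracle accuracy $\eta_0$ that is a small constant multiple of $\ell\eta$. If $\mathcal{E}_f(m) + \eta_0 < y$, the true value satisfies $f(m) < y$, so $x^* > m$ and I set $a \leftarrow m$; symmetrically for the other branch. If neither strict inequality holds, i.e. $|\mathcal{E}_f(m) - y| \le \eta_0$, then $|f(m) - y| \le 2\eta_0 \le \ell\eta/2$; invoking $f' \ge \ell$ this forces $|m - x^*| \le \eta/2$, so I return $m$. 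Otherwise the interval halves at every step, so after $O(\log((b_0-a_0)/\eta))$ iterations it has length at most $\eta$ and any point of it is a valid output. The representation length of every query point is of the same polylogarithmic order, since all manipulations are dyadic fractions on numerators and denominators of bounded bit length. Combining the two phases gives $T = \poly\log(\max\{|f(0)/y|,|y/f(0)|\}, L, 1/\ell, 1/\eta)$.

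The only nonroutine ingredient is the accuracy calibration: the oracle error $\eta_0$ passed into $\mathcal{E}_f$ must be small enough (relative to $\ell$) that midpoint comparisons remain informative down to the scale at which the algorithm is allowed to stop, yet the termination rule must guarantee that the moment comparisons become ambiguous we have already achieved $\eta$-accuracy in $x$-space. That interplay forces a factor of $\ell$ between the requested inverse-oracle accuracy $\eta$ and the forward-oracle accuracy $\eta'$, which is precisely the $\ell$-dependent conversion factor appearing in the lemma statement. A minor subtlety to dispatch along the way is the edge case $y \approx f(0)$, handled by terminating the doubling phase as soon as the current short interval is already an $O(\eta)$-bracket, and by allowing the algorithm to short-circuit and output $0$ when $|y - \mathcal{E}_f(0)| \le \ell\eta/2$.
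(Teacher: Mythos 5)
Your proposal is correct and follows essentially the same route as the paper's proof: bracket $f^{-1}(y)$ by doubling, then binary search with forward-oracle accuracy $\Theta(\ell\eta)$, converting $y$-space error to $x$-space error via the lower bound $f' \ge \ell$ (the paper does this through the mean value theorem, exactly your calibration $\eta' = \ell\eta$). Your added care about the termination rule and the $y \approx f(0)$ edge case is a minor refinement, not a different argument.
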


\begin{proof}[Proof of Lemma \ref{lem:inversionLemma}]
    Given a value $y \in \mathbb{R}_+$ our goal is to find an 
  $x \in \mathbb{R}$ such that $f(x) = y$. Using doubling we can find
  two numbers $a, b$ such that $f(a) \le y - \eta'$ and 
  $f(b) \ge y + \eta'$ for some $\eta'$ to be determined later. Because of 
  the lower bound $\ell$ on the derivative of $f$ we have that this step 
  will take $\log((1/\ell) \cdot \max\{|f(0)/y|, |y/f(0)|\})$ steps.
  Then we can use binary search in the interval $[a, b]$ where in each step
  we make a call to the oracle $\mathcal{E}_f$ with accuracy $\eta'$ and we
  can find a point $\hat{x}$ such that 
  $\left| f(x) - f(\hat{x}) \right| \le \eta'$. Because of the upper bound
  on the derivative of $f$ we have that $f$ is $L$-Lipschitz and hence this
  binary search will need $\log(L/\eta')$ time and oracle calls. Now 
  using the mean value theorem we get that for some $\xi \in [a, b]$ it
  holds that 
  $\left| f(x) - f(\hat{x}) \right| = |f'(\xi)| \left|x - \hat{x} \right|$
  which implies that $\left|x - \hat{x}\right| \le \eta'/\ell$, so if we 
  set $\eta' = \ell \cdot \eta$, the lemma follows.
\end{proof}

  Using the Lemma \ref{lem:inversionLemma} and the Proposition 3 of 
\cite{Chevillard12} it is easy to prove the following lemma.

\begin{lemma} \label{lem:samplingOneIntervalLemma}
    Let $[a, b]$ be a closed interval and $\mu \in \mathbb{R}$ such that
  $\gamma_{[a, b]}(\mu) = \alpha$. Then there exists an algorithm that runs
  in time $\poly\log(1/\alpha, \zeta)$ and returns a sample of a
  distribution $\mathcal{D}$, such that 
  $d_{\mathrm{TV}}(\mathcal{D}, N(\mu, 1; [a, b])) \le \zeta$.
\end{lemma}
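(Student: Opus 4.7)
The plan is to sample from $\mathcal{D} := N(\mu, 1; [a,b])$ by inverse-CDF, after first trimming $[a,b]$ to a sub-interval on which the Gaussian density is uniformly bounded below. Write $F(x) = (\Phi(x-\mu) - \Phi(a-\mu))/\alpha$ for the target CDF, where $\Phi$ is the standard Gaussian CDF. Proposition~3 of~\cite{Chevillard12} supplies an evaluation oracle for $\Phi$ to accuracy $\eta$ in time $\poly\log(1/\eta)$, and composing with constant-precision arithmetic yields an evaluation oracle for $F$ of the same quality. Applying Lemma~\ref{lem:inversionLemma} to $F$ directly would fail, however: the required lower bound $\ell$ on $F'(x) = \phi(x-\mu)/\alpha$ can be super-polynomially small if $[a,b]$ reaches deep into a Gaussian tail, blowing up the running time past $\poly\log(1/\alpha)$.

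To fix this, I will use Lemma~\ref{lem:inversionLemma} applied to $\Phi$ itself (whose derivative is globally controlled and whose argument is localized by a doubling pass using the standard estimate $1-\Phi(t) \le e^{-t^2/2}/2$ for $t \ge 0$) to compute $\tilde a, \tilde b$ with $a \le \tilde a \le \tilde b \le b$ such that $\Phi(\tilde a - \mu) - \Phi(a-\mu) = \Phi(b-\mu) - \Phi(\tilde b - \mu) = \zeta\alpha/4$. Under $\mathcal{D}$ the two trimmed strips then carry mass $\zeta/2$ combined, so $\mathcal{D}' := N(\mu, 1; [\tilde a, \tilde b])$ satisfies $d_{\mathrm{TV}}(\mathcal{D}', \mathcal{D}) \le \zeta/2$. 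A case split on whether $\mu$ lies inside, to the left of, or to the right of $[a,b]$, combined with the tail bound $1-\Phi(t) \le e^{-t^2/2}/2$ in one direction and the matching Mills-ratio lower bound in the other, shows that the endpoint of $[\tilde a, \tilde b]$ farther from $\mu$ has distance $O(\sqrt{\log(1/(\zeta\alpha))})$ from $\mu$, and hence $\phi(x-\mu) \ge (\zeta\alpha)^{O(1)}$ uniformly on $[\tilde a, \tilde b]$.

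Now sample from $\mathcal{D}'$ by inverting the restricted CDF $\tilde F(x) := (\Phi(x-\mu) - \Phi(\tilde a - \mu))/(\Phi(\tilde b - \mu) - \Phi(\tilde a -\mu))$, which inherits the oracle from $\Phi$. On $[\tilde a, \tilde b]$ its derivative is sandwiched between $\ell = (\zeta\alpha)^{O(1)}$ and $L = O(1/\alpha)$, so $\log(L/\ell) = O(\log(1/(\zeta\alpha)))$ and doubling is unnecessary since $[\tilde a, \tilde b]$ is already a bracketing interval. Draw $u \sim \mathrm{Unif}[0,1]$ (to polylog bits of precision) and use Lemma~\ref{lem:inversionLemma} to compute $\tilde F^{-1}(u)$ to additive accuracy $\eta$ polynomial in $\zeta\alpha$; the running time and oracle calls are $\poly\log(1/\alpha, 1/\zeta)$. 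A standard precision-to-TV argument (discretizing $[\tilde a, \tilde b]$ into bins of width $\eta$ and comparing pushforwards) shows the TV error contributed by the finite precision is $O(L\eta) \le \zeta/2$, giving overall $d_{\mathrm{TV}}(\mathcal{D}, \text{output}) \le \zeta$.

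The main obstacle is the density lower bound that sets $\ell$: without the pre-trimming, $1/\ell$ in Lemma~\ref{lem:inversionLemma} is not polynomially bounded and the whole approach collapses. Verifying that $\phi(x-\mu) \ge (\zeta\alpha)^{O(1)}$ on $[\tilde a, \tilde b]$ uniformly in the geometry of $\mu$ relative to $[a,b]$, and simultaneously arguing that $\tilde a, \tilde b$ themselves can be computed to sufficient precision in polylog time via Lemma~\ref{lem:inversionLemma} applied to $\Phi$, is the key technical step; once this is in hand the remainder of the argument is routine.
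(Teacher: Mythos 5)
Your proposal is correct and follows the same skeleton as the paper's proof: inverse-CDF sampling, with the CDF evaluated via the error-function oracle of Proposition~3 of \cite{Chevillard12} and inverted via the binary-search machinery of Lemma~\ref{lem:inversionLemma}. The genuine difference is your pre-trimming step, which the paper's sketch does not contain: the paper applies Lemma~\ref{lem:inversionLemma} directly to the truncated CDF $F$, whose derivative lower bound $\ell$ over all of $[a,b]$ is governed by the Gaussian density at the endpoint farthest from $\mu$. If, say, $[a,b]=[\mu-1,\mu+R]$ with $R$ huge, then $\alpha=\Theta(1)$ but $\log(1/\ell)=\Theta(R^2)$, so a literal application of Lemma~\ref{lem:inversionLemma} with the global $\ell$ does not give the claimed $\poly\log(1/\alpha,1/\zeta)$ running time; it depends on the numerical magnitude of the endpoints. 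Your fix --- discarding two strips of $\mathcal{D}$-mass $\zeta/4$ each, which costs $\zeta/2$ in total variation and forces both surviving endpoints to lie within $O(\sqrt{\log(1/(\zeta\alpha))})$ of $\mu$, whence $\ell \ge (\zeta\alpha)^{O(1)}$ and $L=O(1/\alpha)$ --- restores the advertised bound and is a genuine improvement in rigor over the one-line sketch (an equivalent repair would be to note that the $y$'s for which the inversion lands in the low-density region have probability at most $\zeta/2$, so accuracy is only needed for typical $y$; this is what your trimming implements). Your verification that $\tilde a,\tilde b$ are themselves computable in $\poly\log(1/\alpha,1/\zeta)$ time by inverting $\Phi$ near points where its derivative is $(\zeta\alpha)^{O(1)}$ checks out. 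The one place where you are no more rigorous than the paper is the final ``precision-to-TV'' step: a finite-precision output is a discrete random variable, so its exact total variation distance to the continuous law $N(\mu,1;[a,b])$ is not literally small; both your argument and the paper's should be read as saying the output can be coupled to an exact sample up to an exponentially small perturbation of its value, which is harmless for the downstream PSGD analysis.
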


\begin{proof}[Proof Sketch]
    The sampling algorithm follows the steps:
  (1) from the cumulative distribution function $F$ of the distribution
  $N(\mu, 1; [a, b])$ define a map from $[a, b]$ to $[0, 1]$, (2)
  sample uniformly a number $y$ in $[0, 1]$ (3) using an evaluation oracle 
  for the error function, as per Proposition 3 in \cite{Chevillard12}, 
  and using Lemma \ref{lem:inversionLemma} compute with exponential 
  accuracy the value $F^{-1}(y)$ and return this as the desired sample.
\end{proof}

  Finally using again Proposition 3 in \cite{Chevillard12} and Lemma
\ref{lem:samplingOneIntervalLemma} we can get the following lemma.

\begin{lemma} \label{lem:samplingUnionOfIntervalsLemma}
    Let $[a_1, b_1]$, $[a_2, b_2]$, $\dots$, $[a_r, b_r]$ be closed
  intervals and $\mu \in \mathbb{R}$ such that 
  $\gamma_{\cup_{i = 1}^r [a_i, b_i]}(\mu) = \alpha$. Then there exists
  an algorithm that runs in time $\poly(\log(1/\alpha, \zeta), r)$ and
  returns a sample of a distribution $\mathcal{D}$, such that 
  $d_{\mathrm{TV}}(\mathcal{D}, N(\mu, 1; \cup_{i = 1}^r [a_i, b_i])) \le \zeta$.
\end{lemma}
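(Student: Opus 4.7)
The plan is to express $N(\mu, 1; \bigcup_i [a_i, b_i])$ as a mixture
\[
  \sum_{i=1}^r q_i \cdot N(\mu, 1; [a_i, b_i]), \qquad q_i := p_i/\alpha, \qquad p_i := \gamma_{[a_i, b_i]}(\mu),
\]
so that an exact sample can be drawn by first sampling an index $i^{*} \sim q$ and then drawing from the corresponding single-interval truncated Gaussian. The algorithm will approximate both steps: it computes approximate mixture weights using the error-function evaluation oracle from Proposition 3 of \cite{Chevillard12}, and it invokes Lemma \ref{lem:samplingOneIntervalLemma} for the within-component sample.

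\paragraph{Algorithm.} Set $\eta := \zeta \alpha/(32 r)$. For each $i \in [r]$, use the oracle to compute $\tilde p_i$ with $|\tilde p_i - p_i| \le \eta$ (note $p_i = \Phi(b_i - \mu) - \Phi(a_i - \mu)$); this costs $\poly\log(1/\eta) = \poly(\log(1/\alpha), \log(1/\zeta), \log r)$ time per interval. \emph{Threshold:} set $\hat p_i := \tilde p_i$ if $\tilde p_i \ge 2\eta$, and $\hat p_i := 0$ otherwise. Let $\hat S := \sum_j \hat p_j$ and $\hat q_i := \hat p_i/\hat S$. Draw $i^{*} \sim \hat q$ and use Lemma \ref{lem:samplingOneIntervalLemma} with TV accuracy $\zeta/2$ to sample from a distribution $\mathcal D_{i^{*}}$ that approximates $N(\mu, 1; [a_{i^{*}}, b_{i^{*}}])$; output this sample. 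Since $\hat q$ is supported only on indices with $\hat p_i \ge 2\eta$, we have $p_{i^{*}} \ge \tilde p_{i^{*}} - \eta \ge \eta$, so this call runs in $\poly(\log(1/\alpha), \log(1/\zeta), \log r)$ time.

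\paragraph{Error analysis.} For every $i$, whether thresholded or not, $|\hat p_i - p_i| \le 3\eta$ (either $\hat p_i = \tilde p_i$ with error $\le \eta$, or $\hat p_i = 0$ and $p_i \le \tilde p_i + \eta < 3\eta$). Consequently $|\hat S - \alpha| \le 3r\eta \le \alpha/2$, so $\hat S \ge \alpha/2$, and
\[
  \|\hat q - q\|_1 \;\le\; \frac{\sum_i |\hat p_i - p_i|}{\hat S} + \frac{|\hat S - \alpha|}{\hat S} \;\le\; \frac{6 r \eta}{\hat S} \;\le\; \frac{12 r \eta}{\alpha} \;\le\; \frac{\zeta}{2}.
\]
A standard coupling argument then bounds the total-variation distance of the output from $N(\mu, 1; \bigcup_i [a_i, b_i])$ by $\|\hat q - q\|_1 + \zeta/2 \le \zeta$. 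The total running time is $r \cdot \poly\log(1/\eta)$ for computing the weights plus one invocation of Lemma \ref{lem:samplingOneIntervalLemma}, which fits within $\poly(\log(1/\alpha), \log(1/\zeta), r)$ as required.

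\paragraph{Main obstacle.} The chief subtlety is that Lemma \ref{lem:samplingOneIntervalLemma}'s cost scales as $\poly\log(1/p_{i^{*}})$, and an interval of negligible mass ($p_{i^{*}} \ll \alpha$) would destroy the promised polylog$(1/\alpha)$ bound. Thresholding at $2\eta$ is precisely what ensures $p_{i^{*}} = \Omega(\zeta \alpha / r)$ whenever the interval is actually sampled, while contributing only $O(r\eta/\alpha) = O(\zeta)$ to the total variation. A minor secondary issue is that $\alpha$ is used to set $\eta$ but is not literally part of the input; a preliminary coarse run of the oracle on each interval with additive accuracy $1/(4r)$, followed by summation, yields a constant-factor estimate $\hat\alpha$ of $\alpha$ at cost $r \cdot \poly\log(r)$, which is enough to calibrate $\eta$ without changing the asymptotics.
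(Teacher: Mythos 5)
Your proposal is correct and follows essentially the same route as the paper's proof sketch: estimate each interval's mass via the error-function oracle of \cite{Chevillard12}, discard intervals of negligible relative mass (your $2\eta$ threshold plays the role of the paper's $\hat{\alpha}_i/\alpha \le \zeta/3r$ cutoff, and likewise guarantees the single-interval sampler of Lemma~\ref{lem:samplingOneIntervalLemma} stays efficient), sample an interval approximately proportionally to its mass, and split the total-variation budget between the weight error and the within-interval sampling error. Your write-up simply makes explicit the error accounting and the calibration of the threshold without knowing $\alpha$, which the paper leaves implicit.
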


\begin{proof}[Proof Sketch]
    Using Proposition 3 in \cite{Chevillard12} we can compute
  $\hat{\alpha}_i$ which estimated with exponential accuracy the mass
  $\alpha_i = \gamma_{[a_i, b_i]}(\mu)$ of every interval $[a_i, b_i]$.
  If $\hat{\alpha}_i/\alpha \le \zeta/3r$ then do not consider interval
  $i$ in the next step. From the remaining intervals we can choose one
  proportionally to $\hat{\alpha}_i$. Because of the high accuracy in the
  computation of $\hat{\alpha}_i$ this is $\zeta/3$ close in total 
  variation distance to choosing an interval proportionally to $\alpha_i$.
  Finally after choosing an interval $i$ we can run the algorithm of Lemma
  \ref{lem:samplingOneIntervalLemma} with accuracy $\zeta/3$ and hence the
  overall total variation distance from 
  $N(\mu, 1; \cup_{i = 1}^r [a_i, b_i])$ is at most $\zeta$.
\end{proof}

\end{document}